\documentclass{article} 
\usepackage{iclr2027_conference,times}

\usepackage{amsmath,amsfonts,bm}

\def\eqref#1{equation~\ref{#1}}

\def\1{\bm{1}}

\DeclareMathAlphabet{\mathsfit}{\encodingdefault}{\sfdefault}{m}{sl}
\SetMathAlphabet{\mathsfit}{bold}{\encodingdefault}{\sfdefault}{bx}{n}

\newcommand{\softmax}{\mathrm{softmax}}

\renewcommand{\eqref}[1]{\textup{(\ref{#1})}}

\usepackage{url}
\usepackage{amsmath,amssymb}
\usepackage{amsthm}
\usepackage{booktabs}
\usepackage{array}
\usepackage{algorithm,algorithmic}
\usepackage{multirow}
\usepackage{graphicx}
\usepackage{xcolor}
\usepackage{pifont}
\usepackage{hyperref}
\usepackage{siunitx}
\hypersetup{hidelinks}
\newcommand{\best}[1]{\underline{\textbf{#1}}}
\newcommand{\second}[1]{\textbf{#1}}

\newif\ifdraftmarks
\draftmarkstrue

\newtheorem{proposition}{Proposition}
\newtheorem{theorem}[proposition]{Theorem}
\newtheorem{lemma}[proposition]{Lemma}
\providecommand{\softmax}{\operatorname{softmax}}

\title{PolyStepOR: Learning to Decide \\Without Optimal Decisions}

\author{Viet The Nguyen$^{1,2}$
\quad
Gunther Gust$^{1,2}$
\quad 
An T. Le$^{3,4}$\\[2pt]
$^{1}$University of W\"urzburg, Germany \\
$^{2}$Center for Artificial Intelligence and Data Science (CAIDAS), Germany \\
$^{3}$Center for AI Research, VinUniversity, Vietnam \\
$^{4}$Intelligent Autonomous Systems, TU Darmstadt, Germany
}

\iclrfinalcopy 

\begin{document}
\maketitle
\lhead{Preprint. Under review.}
\begin{abstract}
Decision-focused learning (DFL) trains predictors for downstream decision quality, but often relies on optimal reference decisions that are expensive to obtain. We present PolyStepOR, which trains directly from realized decision costs without pre-computed optima and extends to in-constraint predictions through repair or infeasibility penalties. To handle piecewise-constant losses, PolyStepOR perturbs predictor parameters, evaluates the resulting decisions, and uses optimal transport to favor lower-cost directions, requiring no derivatives. Without task-specific tuning, PolyStepOR performs strongly on classical optimization benchmarks and competitively on predicted-constraint and real-world problems. Theoretically, we characterize decision-preserving perturbations and boundary detection, bound sensitivity to cost errors, and establish stationarity guarantees for a smoothed objective. PolyStepOR thus replaces optimal reference decisions and derivatives with forward evaluations.
\end{abstract}

\section{Introduction}
In many analytical applications, operational decisions are made under uncertainty. In vehicle routing, for instance, routes are determined before travel times are observed~\citep{elmachtoub2022smart}. Decisions in these applications are commonly made through a two-stage approach called \emph{predict-then-optimize}~\citep{elmachtoub2022smart,mandi2024decision}. In the first stage, a machine learning (ML) model predicts the unknown parameters from observed features. The second stage then solves a constrained optimization problem using the predicted parameters. ML models are often trained to minimize prediction errors using statistical losses such as mean-squared error. However, lower prediction error does not necessarily improve decision quality: large errors may leave the decision unchanged, while small errors near a decision boundary can substantially increase its cost. To bridge this gap, \emph{decision-focused learning} (DFL) addresses this mismatch by incorporating downstream decision quality into the training of ML models. This is challenging for many combinatorial problems, as the selected decision is piecewise constant in the predicted parameters, resulting in zero gradients almost everywhere when the loss depends on predictions only through that decision. To obtain useful training signals, existing methods use continuous relaxations~\citep{wilder2019melding}, perturbations or approximate solver derivatives~\citep{berthet2020learning,pogancic2020differentiation}, or surrogate losses~\citep{blondel2020learning,elmachtoub2022smart}. When predicted parameters enter the constraints, decisions may become infeasible under the true parameters, requiring methods that account for feasibility or repair costs~\citep{silvestri2026score, mandi2026feasibility}.

Another challenge is obtaining optimal reference decisions for training, which can be expensive for large instances~\citep{parmentier2025learning,ahmed2024districtnet}. For a fixed feasible set, their costs do not affect the minimizers of expected unnormalized regret because they are independent of the predictor, although the reference decisions themselves can still provide useful training signals. Several DFL methods have proposed learning without optimal targets~\citep{amos2017optnet,gupta2024directional,silvestri2026score}, motivating training from candidate evaluations when optimal decisions are costly to obtain. In this work, we propose PolyStepOR, a DFL framework that applies the forward-only PolyStep optimizer~\citep{le2026polystep} to train predictors directly from realized decision costs. It perturbs predictor parameters along structured directions, solves the resulting problems exactly or heuristically, and updates the predictor using cost-weighted perturbation directions. Evaluations can include repair costs or finite infeasibility penalties for in-constraint predictions. Training requires neither optimal reference decisions nor derivatives of the predictor, solver, or task loss. Finite perturbations can expose decision changes where local gradients vanish, while a restricted search subspace controls the evaluation budget. In summary, our contributions are threefold:
\begin{enumerate}
\item We formulate DFL using forward cost evaluations, including for predicted constraints, and establish when removing optimal reference costs preserves both the training objective's minimizers and PolyStepOR's updates.
\item We characterize decision-preserving perturbations for linear optimization over finite feasible sets, and when a probe detects and crosses an isolated boundary toward a cheaper decision. We also bound the update's sensitivity to evaluation errors and show that the smoothed objective reaches stationarity at rate $O(T^{-1/2})$ after $T$ iterations~\citep{le2026polystep}, up to a residual set by the update bias.
\item We evaluate decision quality on classical optimization benchmarks~\citep{mandi2024decision}, in-constraint predictions~\citep{silvestri2026score}, and extend to practical applications, including districting~\citep{ahmed2024districtnet}, and brass alloy production~\citep{mandi2026feasibility}.
\end{enumerate}
\section{Related work}
\paragraph{Decision-focused learning.}
Existing DFL methods mainly differ in how they obtain useful training signals through optimization problems. Optimization layers differentiate convex problems or continuous relaxations~\citep{amos2017optnet,agrawal2019differentiable,wilder2019melding}, while perturbed optimizers and black-box differentiation smooth or approximate discrete solver maps~\citep{berthet2020learning,pogancic2020differentiation,niepert2021implicit}. Surrogate losses, including SPO+, Fenchel-Young, contrastive, and ranking losses, instead use reference solutions to construct informative gradients~\citep{elmachtoub2022smart,blondel2020learning,mulamba2020contrastive,mandi2022decision}. Reference-free alternatives include perturbation-gradient losses~\citep{gupta2024directional} and LANCER~\citep{zharmagambetov2023landscape}, which learn a differentiable surrogate from decision-cost evaluations. These approaches differ in their supervision and solver requirements but retain gradient-based predictor training.
\paragraph{In-constraint predictions.}
When predicted parameters enter the constraints, decisions may become infeasible under the true parameters. Existing methods account for this through post-hoc correction~\citep{hu2023predict+,hu2023two}, score-function gradient estimation (SFGE)~\citep{silvestri2026score}, or infeasibility penalties combined with supervision from optimal solutions~\citep{mandi2026feasibility}. SFGE avoids differentiating the solver or task loss but differentiates the prediction distribution; PolyStepOR instead perturbs predictor parameters directly. Unlike on-policy DFL, which learns from observed action costs~\citep{benslimane2026onpolicy}, our setting requires evaluating every queried candidate using recorded data or simulation.
\paragraph{Derivative-free DFL.}
Derivative-free DFL has been explored through decision trees~\citep{elmachtoub2020decision} and structured approximations trained without optimal targets~\citep{parmentier2025learning}. For neural predictors, evolution strategies offer a general approach to training from function evaluations~\citep{salimans2017evolution}. PolyStep likewise uses function evaluations, but constructs updates from rotated polytope probes and entropy-regularized cost weights~\citep{le2023sinkhorn,le2026polystep}. We apply this optimizer to DFL and analyze how decision regions and evaluation errors affect its updates. The resulting PolyStepOR framework accommodates in-constraint predictions without optimal training targets or derivatives of the predictor, solver, or task loss.

\section{Problem setup}\label{sec:setup}
Let $\bm{c}\in\mathbb{R}^p$ denote the unknown parameters of a constrained optimization (CO) problem:
\begin{equation}\label{bg:for}
    \bm{x}^\star(\bm{c}) \in \arg\min_{\bm{x}\in\mathcal{X}(\bm{c})} f(\bm{x},\bm{c}),
\end{equation}
where $f:\mathbb{R}^n\times\mathbb{R}^p\rightarrow\mathbb{R}$ is the objective and $\mathcal{X}(\bm{c})\subseteq\mathbb{R}^n$ is the feasible set. The feasible set may contain discrete decisions and depend on $\bm{c}$. Evaluating our method requires neither convexity nor differentiability. For this formulation, we assume an optimum exists, with a fixed tie-breaking rule selecting $\bm{x}^\star(\bm{c})$. Section~\ref{sec:or-interface} extends the setting to heuristic solvers. At decision time, $\bm{c}$ is unknown and predicted from observed correlated features $\bm{z}$:
\begin{equation}
    \hat{\bm{c}}=\phi_{\bm{\theta}}(\bm{z}),
\end{equation}
where $\phi_{\bm{\theta}}$ is an ML model parametrized by $\bm{\theta}$. The predicted parameters are passed to the solver to obtain the decision $\bm{x}^\star(\hat{\bm{c}})$. For a fixed feasible set, its quality is measured by the \emph{regret}:
\begin{equation}\label{eq:regret}
    \texttt{Regret}(\hat{\bm{c}},\bm{c})
    =f(\bm{x}^\star(\hat{\bm{c}}),\bm{c})
    -f(\bm{x}^\star(\bm{c}),\bm{c}),
\end{equation}
which quantifies the additional cost relative to the optimal \emph{reference solution} under the true parameters. Let $(\bm{z},\bm{c}) \sim \mathcal{P}$ denote a random pair of observed features and problem parameters with a joint distribution $\mathcal{P}$. Assuming finite expectations, minimizing expected regret is equivalent to minimizing expected realized decision cost:
\begin{equation}\label{eq:minobj}
\begin{aligned}
    \arg\min_{\bm{\theta}} R(\bm{\theta})
    &=\arg\min_{\bm{\theta}}
    \mathbb{E}_{(\bm{z},\bm{c})\sim\mathcal{P}}
    [\texttt{Regret}(\phi_{\bm{\theta}}(\bm{z}),\bm{c})]\\
    &=\arg\min_{\bm{\theta}}
    \mathbb{E}_{(\bm{z},\bm{c})\sim\mathcal{P}}
    [f(\bm{x}^\star(\phi_{\bm{\theta}}(\bm{z})),\bm{c})].
\end{aligned}
\end{equation}
The reference cost (second term of Eq~\ref{eq:regret}) is omitted because it is independent of $\bm{\theta}$. Proposition~\ref{prop:reference} establishes when this omission also preserves the parameter updates of our algorithm.

In-constraint prediction arises when predicted parameters enter the CO's constraints. For example, stochastic-weight knapsack~\citep{silvestri2026score} selects items with known values and capacity $\bm{b}$ but unknown weights $\bm{c}$, giving $\mathcal{X}(\bm{c})=\{\bm{x}\in\{0,1\}^p:\bm{c}^{\top}\bm{x}\leq\bm{b}\}$. Here, decisions based on the predicted weights may violate the true capacity. A way to tackle this is to perform a \emph{recourse action} to correct the decision to $\bm{x}^{\mathrm{corr}}(\hat{\bm{c}},\bm{c})\in\mathcal{X}(\bm{c})$, incurring a problem-specific penalty $\mathrm{pen}_{\rho}$~\citep{birge1997introduction,hu2023two,hu2023predict+}. The resulting \emph{post-hoc regret} then reads:
\begin{equation}\label{eq:pregret}
    \texttt{PRegret}(\hat{\bm{c}},\bm{c})
    =f(\bm{x}^{\mathrm{corr}}(\hat{\bm{c}},\bm{c}),\bm{c})
    -f(\bm{x}^\star(\bm{c}),\bm{c}) +\mathrm{pen}_{\rho}\bigl(
    \bm{x}^\star(\hat{\bm{c}}),
    \bm{x}^{\mathrm{corr}}(\hat{\bm{c}},\bm{c})\bigr).
\end{equation}
The reference term is again independent of $\bm{\theta}$, so Eq.~\ref{eq:minobj} also applies with the corrected cost and recourse penalty. For our algorithm, \texttt{Regret} and \texttt{PRegret} are used only for test-time evaluation.
\section{Method}\label{sec:method}
PolyStepOR applies PolyStep\footnote{A visualization of the algorithm: \href{https://vietngth.github.io/polystep-visualization/}{https://vietngth.github.io/polystep-visualization/}}~\citep{le2026polystep,le2023sinkhorn} to the prediction--solver pipeline. It perturbs the predictor, evaluates the resulting decisions, and combines the perturbation directions using their realized costs. Finite probes can reveal decision changes even where local derivatives vanish.
\subsection{Learning from realized decision costs}\label{sec:or-interface}
Let $S(\hat{\bm{c}})$ return a decision for predicted parameters $\hat{\bm{c}}$. Our ideal oracle uses a deterministic exact solver or heuristic with fixed tie-breaking; time limits alone need not make a solver deterministic. Appendix~\ref{app:oracle} discusses deviations from this assumption due to evaluation errors. On training data $\mathcal D=\{(\bm{z}_i,\bm{c}_i)\}_{i=1}^N$, the learning objective is:

\begin{equation}\label{eq:objective}
 L(\bm{\theta})=\frac1N\sum_{i=1}^N\ell\bigl(S(\phi_{\bm{\theta}}(\bm{z}_i)),\bm{c}_i\bigr),
\end{equation}
where $\ell(\bm{x},\bm{c})$ is the task loss of decision $\bm{x}$ under the true parameters $\bm{c}$. Evaluating $L$ takes $N$ solver calls per candidate predictor. We instead use a minibatch $\mathcal B$ of $B$ training indices drawn uniformly at random,
\begin{equation}\label{eq:batch-loss}
 \widehat L_{\mathcal B}(\bm{\theta})=\frac1B\sum_{i\in\mathcal B}\ell\bigl(S(\phi_{\bm{\theta}}(\bm{z}_i)),\bm{c}_i\bigr).
\end{equation}
For fixed $\bm\theta$, uniform sampling gives $\mathbb E_{\mathcal B}[\widehat L_{\mathcal B}(\bm\theta)]=L(\bm\theta)$ using $B$ solver calls per candidate. This \emph{cost oracle} requires no gradients or optimal reference decisions. It only requires true instance parameters or a simulation process~\citep{ahmed2024districtnet} to score candidate decisions that may never have been taken in the historical data.
\paragraph{Task loss.} We use $\ell(\bm{x},\bm{c})=f(\bm{x},\bm{c})$ for problems where predictions are only in the objective. If they are in the constraints, the decision $\bm{x}^\star=S(\hat{\bm{c}})$ may be infeasible~\citep{silvestri2026score}. As in Eq.~\eqref{eq:pregret}, we then use the cost of the corrected decision and the recourse penalty:
\begin{equation}\label{eq:recourse-loss}
 \ell(\bm{x}^\star,\bm{c})=f\bigl(\bm{x}^{\mathrm{corr}}(\bm{x}^\star,\bm{c}),\bm{c}\bigr)+\mathrm{pen}_\rho\bigl(\bm{x}^\star,\bm{x}^{\mathrm{corr}}(\bm{x}^\star,\bm{c})\bigr),
\end{equation}
where $\bm{x}^{\mathrm{corr}}(\bm{x}^\star,\bm{c})\in\mathcal X(\bm{c})$ corrects $\bm{x}^\star$ under the true parameters. For an exact solver, this is the post-hoc regret of Eq.~\eqref{eq:pregret} without its reference term. If the solver returns no decision, a fixed finite failure cost or fallback decision is used.
\subsection{Structured parameter perturbations}\label{sec:polystep}
Each iteration evaluates nearby predictors, called \emph{probes}, and favors directions with lower costs. There are 3 components in the algorithm: \emph{search coordinates} control the evaluation budget, \emph{symmetric probes} compare directions, and \emph{cost weights} determine the update. Section~\ref{sec:analysis} separates detecting a decision change from taking a step that improves the loss.

\paragraph{Search coordinates.}
We dentote $\bm{\theta}(\bm{y})=\bm{\theta}_0+A\bm{y}$, with fixed $A\in\mathbb R^{D_\theta\times d}$ and initial search coordinates $\bm{y}=\bm{0}$. Setting $A=I$ searches the full parameter space; fewer columns reduce the number of probes. Partition $d=Pq$ coordinates into $P$ blocks of equal size $q$. Each probe changes one block, with the other blocks held fixed. Distances below are in these search coordinates.

\paragraph{Probes.}
Choose unit vertices $\bm v_1,\ldots,\bm v_V\in\mathbb R^q$ with $\sum_v\bm v_v=0$: either a regular simplex ($V=q+1$) or the orthoplex $\{\pm\bm e_r\}_{r=1}^q$ ($V=2q$). For each block, draw a Haar rotation $R_j\in\mathrm{SO}(q)$. When $q\geq2$, every rotated vertex is uniform on the unit sphere. Probe each direction at radii $\rho_1,\ldots,\rho_K>0$, typically $\rho_k=rk/(K+1)$ for an outer radius $r>0$, and average the costs:
\begin{equation}\label{eq:probes}
\bm{y}_{jvk}=\bm{y}+\rho_k E_jR_j\bm{v}_v,\qquad
C_{jv}=\frac1K\sum_{k=1}^K\widehat L_{\mathcal B}(\bm{\theta}(\bm{y}_{jvk})),
\end{equation}
where $E_j$ embeds a vector in block $j$. All probes use the same minibatch, so their costs compare predictors on the same instances. A shared batch still need not represent the full-dataset preference.

\paragraph{Update.}
Within each block, temperature $\varepsilon>0$ controls how strongly the update favors cheaper directions:
\begin{equation}\label{eq:weights}
w_{jv}=\frac{\exp(-C_{jv}/\varepsilon)}{\sum_{u=1}^V\exp(-C_{ju}/\varepsilon)},
\qquad \bm{D}_j(C)=\sum_{v=1}^Vw_{jv}R_j\bm{v}_v,
\end{equation}
and a step length $s>0$ scales their simultaneous update:
\begin{equation}\label{eq:update}
\bm{y}^+=\bm{y}+s\sum_{j=1}^P E_j\bm{D}_j(C),\qquad \bm{\theta}^+=\bm{\theta}(\bm{y}^+).
\end{equation}

Lower temperature concentrates weight on the cheapest directions, while higher temperature makes the weights more uniform and the resulting displacement smaller. The weights uniquely minimize $\sum_v C_{jv}w_v+\varepsilon\sum_vw_v\log w_v$ on the probability simplex, which gives the one-sided transport interpretation in Appendix~\ref{app:transport}~\citep{le2026polystep}. Because the softmax is nonlinear, even unbiased probe-cost estimates do not generally produce unbiased weights; sampling noise can therefore amplify or weaken the preference for a direction. Since each $\bm D_j$ is a convex combination of unit vectors, Eq.~\eqref{eq:update} implies the step-size bound $|\bm y^+-\bm y|_2\leq s\sqrt P$. Since all blocks update simultaneously, the result generally differs from any single probe.

\paragraph{Momentum.}
The optional heavy-ball update accumulates barycentric displacements. With $D_t=\sum_jE_jD_{j,t}$, it is
\begin{equation}\label{eq:momentum}
 v_{t+1}=\mu_t v_t+s_tD_t,\qquad y_{t+1}=y_t+\eta v_{t+1},\qquad v_0=0,
\end{equation}
where $\eta>0$ scales the velocity and $0\leq\mu_t<1$ controls momentum. We linearly interpolate $\mu_t$ between its initial and final values during training. Setting $\mu_t=0$ and $\eta=1$ recovers Eq.~\eqref{eq:update}. Momentum can carry the predictor across plateaus, but may also oppose a new signal. Hence, $s_t\sqrt P$ bounds only the fresh displacement, not the full momentum step. The implementation resets the velocity when the search basis changes, while the convergence analysis assumes a fixed basis.

\begin{algorithm}[tb]
\caption{One iteration of PolyStepOR}
\label{alg:polystepor}
\begin{algorithmic}[1]
\REQUIRE State $(y_t,v_t)$, with $v_0=0$; reconstruction $\theta(\cdot)$; shared-batch oracle $\widehat L_{\mathcal B}$; block embeddings $E_{1:P}$; centered unit vertices $\bm v_{1:V}$; radii $\rho_{1:K}>0$, temperature $\varepsilon>0$, and step $s_t>0$.
\REQUIRE Momentum $0\leq\mu_t<1$ and multiplier $\eta>0$; use $(\mu_t,\eta)=(0,1)$ without momentum.
\FOR{$j=1,\ldots,P$}
\STATE Draw an independent Haar rotation $R_j\in\mathrm{SO}(q)$.
\STATE $C_{jv}\gets\frac1K\sum_{k=1}^K\widehat L_{\mathcal B}\bigl(\theta(y_t+\rho_kE_jR_j\bm v_v)\bigr)$ for $v=1,\ldots,V$.
\STATE $w_{j:}\gets\softmax(-C_{j:}/\varepsilon)$; \quad $D_{j,t}\gets\sum_{v=1}^Vw_{jv}R_j\bm v_v$.
\ENDFOR
\STATE $D_t\gets\sum_{j=1}^P E_jD_{j,t}$.
\STATE $v_{t+1}\gets\mu_t v_t+s_tD_t$; \quad $y_{t+1}\gets y_t+\eta v_{t+1}$.
\RETURN state $(y_{t+1},v_{t+1})$ and predictor parameters $\theta(y_{t+1})$.
\end{algorithmic}
\end{algorithm}

\paragraph{Evaluation budget.}\label{sec:complexity}
Each iteration evaluates $Q=PVK$ candidate predictors and makes $BQ$ solver calls. Simplex and orthoplex probes use $K(q+1)d/q$ and $2Kd$ evaluations, respectively. Recourse, validation, and restarts add to this count. Appendix~\ref{app:complexity} gives arithmetic and storage bounds; runtime comparisons should also include tuning and baseline reference generation.

\subsection{Invariance to reference costs}\label{sec:reference}
We show that reference subtraction preserves both the objective minimizers (Section~\ref{sec:setup}) and the PolyStepOR updates, because the weights depend only on within-row cost differences.

\begin{proposition}[Reference-cost invariance]\label{prop:reference}
With a shared minibatch across probes and identical temperature and cost scaling, replacing each instance loss by $\ell(\bm{x},\bm{c}_i)-r_i$ for parameter-independent $r_i$ leaves the weights unchanged. Runs with the same initial parameters and velocity, random draws, prescribed schedules, and stopping horizon have identical parameter trajectories, including under Eq.~\eqref{eq:momentum}.
\end{proposition}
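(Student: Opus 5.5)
The plan is to show first that the shift adds the same constant to every probe cost in a block, so the softmax weights cannot change. Then an induction over iterations shows that the two runs stay identical.

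\textbf{Step 1 (probe costs shift by a common constant).} Fix an iteration with minibatch $\mathcal B$, shared by all probes. Under the shifted loss, each probe predictor $\bm\theta(\bm y_{jvk})$ has batch loss
\[
\widehat L'_{\mathcal B}(\bm\theta)=\frac1B\sum_{i\in\mathcal B}\bigl(\ell(S(\phi_{\bm\theta}(\bm z_i)),\bm c_i)-r_i\bigr)=\widehat L_{\mathcal B}(\bm\theta)-\bar r_{\mathcal B},
\qquad \bar r_{\mathcal B}=\frac1B\sum_{i\in\mathcal B}r_i .
\]
The offset $\bar r_{\mathcal B}$ does not depend on $\bm\theta$, because each $r_i$ is parameter-independent. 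It also does not depend on the probe index, because the batch is shared. Averaging over the radii in Eq.~\eqref{eq:probes} therefore gives $C'_{jv}=C_{jv}-\bar r_{\mathcal B}$ for all $j,v$.

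\textbf{Step 2 (softmax invariance).} With the same temperature $\varepsilon$, we have
\[
\exp(-C'_{jv}/\varepsilon)=\exp(\bar r_{\mathcal B}/\varepsilon)\exp(-C_{jv}/\varepsilon).
\]
The factor $\exp(\bar r_{\mathcal B}/\varepsilon)$ is common to the numerator and denominator of Eq.~\eqref{eq:weights}, so it cancels and $w'_{jv}=w_{jv}$. Hence $\bm D_j(C')=\bm D_j(C)$, provided the rotations $R_j$ agree.

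This is where the hypothesis of identical cost scaling matters. If the costs were rescaled (for example normalized per row), the shift would have to commute with that scaling. A per-row affine normalization such as $(C-\min C)/(\max C-\min C)$ is itself shift-invariant, but a multiplicative normalization such as dividing by the mean cost is not. I would state the result for a fixed common scale $\kappa>0$ applied to all costs, where the same argument gives $\exp(\kappa\bar r_{\mathcal B}/\varepsilon)$ cancelling.

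\textbf{Step 3 (trajectory identity by induction).} Suppose that at iteration $t$ both runs have the same $(\bm y_t,v_t)$. They also share the minibatch, the Haar rotations, and the schedules $s_t,\mu_t$, and $\eta$ is fixed. Then both runs query the same probe points $\bm y_t+\rho_kE_jR_j\bm v_v$. The deterministic oracle from Section~\ref{sec:or-interface} returns the same decisions, so Steps 1--2 give the same $D_t$. Equation~\eqref{eq:momentum} is then a deterministic function of $(v_t,\bm y_t,D_t,\mu_t,s_t,\eta)$, so $(\bm y_{t+1},v_{t+1})$ coincide. The base case is the shared initial state, and the induction runs up to the common stopping horizon. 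The plain update of Eq.~\eqref{eq:update} is the special case $(\mu_t,\eta)=(0,1)$, and $\bm\theta(\bm y_t)$ is identical because $\bm\theta_0$ and $A$ are fixed.

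\textbf{Main obstacle.} The only subtle point is ensuring that the ``same random draws'' hypothesis covers everything that could depend on the costs. If batch sampling or rotations were adaptively chosen from observed costs, the runs would still match, since the costs differ only by a constant that affects nothing downstream except through the weights. To make this rigorous, I would formalize each run as a deterministic map of (the random seed, the weights) and observe that the costs influence the update only through the weights. Any cost-dependent stopping rule or validation-based selection would break this and must be excluded, which is why the statement fixes the stopping horizon.
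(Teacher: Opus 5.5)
Your proof is correct and follows the paper's argument: the shared batch makes the shift a common row offset $\bar r_{\mathcal B}$, which cancels in the softmax, and induction on $(\bm y_t,v_t)$ through Eq.~\eqref{eq:momentum}, with identical queries, draws, and schedules, gives identical trajectories. One aside overreaches: cost-adaptive batch or rotation choices would \emph{not} in general keep the runs matching if they read absolute cost levels, for the same reason the paper gives for loss-based stopping rules, though this does not affect your proof because the proposition fixes the random draws.
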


This means that computing $r_i=f(\bm{x}^\star(\bm{c}_i),\bm{c}_i)$ gives the same updates from costs and unnormalized regret without computing the references. Their rankings also agree on a fixed validation set. Scaling instance costs by fixed positive factors preserves reference subtraction but changes their relative weighting. Loss-dependent normalization and feasibility-conditional evaluation are handled separately in Appendix~\ref{app:transport}.

\section{When do probes yield a decision signal?}\label{sec:analysis}
Finite probes can reveal cheaper decisions where local derivatives vanish. We ask when they detect a change, how instance costs combine, and whether the resulting motion reduces loss. For the geometric analysis, we assume exact linear optimization over fixed finite feasible sets and decision-only losses, and denote the batch loss in search coordinates by setting $\widehat{\mathcal L}_{\mathcal B}(y)=\widehat L_{\mathcal B}(\theta(y))$. Proofs and extensions are in Appendix~\ref{app:proofs}.

The first result measures how far a probe must reach to change a decision.
\begin{proposition}[Decision margin]\label{prop:margin}
Let $\bm{x}_0$ uniquely minimize $\hat{\bm{c}}(\bm{y})^\top\bm{x}$ over a finite set $\mathcal F$, and suppose $\hat{\bm{c}}(\bm{y})=\bm{b}+M\bm{y}$. The distance to loss of unique optimality in search coordinates is
\begin{equation}\label{eq:subspace-margin}
 r_{\bm{y}}=\min_{\substack{\bm{x}\in\mathcal F\setminus\{\bm{x}_0\}\\M^\top(\bm{x}-\bm{x}_0)\ne\bm{0}}}
 \frac{\hat{\bm{c}}(\bm{y})^\top(\bm{x}-\bm{x}_0)}{\|M^\top(\bm{x}-\bm{x}_0)\|_2},
 \qquad \min\varnothing=+\infty.
\end{equation}
Every perturbation $\|\bm{h}\|_2<r_{\bm{y}}$ preserves the decision. With exact evaluations and a loss depending on parameters only through the returned decision, unchanged decisions at every batch probe imply $\bm{D}_j=\bm{0}$ for every block.
\end{proposition}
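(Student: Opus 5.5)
The plan is to write the optimality gap of each competitor as an affine function of the perturbation and apply Cauchy--Schwarz. The preservation claim then follows directly, and the vanishing of $\bm D_j$ follows from the centering $\sum_v\bm v_v=0$.

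\textbf{Step 1: affine gaps.} For each $\bm{x}\in\mathcal F\setminus\{\bm{x}_0\}$, write the gap under a perturbation $\bm h$ as
\begin{equation*}
g_{\bm x}(\bm h)=\hat{\bm c}(\bm y+\bm h)^\top(\bm x-\bm x_0)=\hat{\bm c}(\bm y)^\top(\bm x-\bm x_0)+\bm h^\top M^\top(\bm x-\bm x_0).
\end{equation*}
Uniqueness of $\bm{x}_0$ gives $g_{\bm x}(\bm 0)>0$ for every competitor. There are two cases.

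If $M^\top(\bm x-\bm x_0)=\bm 0$, the gap is constant in $\bm h$. It therefore stays positive for all $\bm h$, which is why such $\bm x$ are excluded from the minimum in Eq.~\eqref{eq:subspace-margin}.

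Otherwise, Cauchy--Schwarz gives
\begin{equation*}
g_{\bm x}(\bm h)\ge \hat{\bm c}(\bm y)^\top(\bm x-\bm x_0)-\|\bm h\|_2\,\|M^\top(\bm x-\bm x_0)\|_2 .
\end{equation*}
This is strictly positive whenever $\|\bm h\|_2<r_{\bm y}$.

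\textbf{Step 2: preservation and sharpness.} Since $\mathcal F$ is finite, the minimum in Eq.~\eqref{eq:subspace-margin} is attained. Moreover, $r_{\bm y}>0$, because every ratio in it is strictly positive. Hence every $\|\bm h\|_2<r_{\bm y}$ keeps $\bm x_0$ the unique minimizer, so the tie-breaking rule plays no role.

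To justify calling $r_{\bm y}$ "the distance to loss of unique optimality", I would also show it is attained. Let $\bm x$ achieve the minimum and take
\begin{equation*}
\bm h=-r_{\bm y}\,\frac{M^\top(\bm x-\bm x_0)}{\|M^\top(\bm x-\bm x_0)\|_2}.
\end{equation*}
This makes $g_{\bm x}(\bm h)=0$, creating a tie, and the radius is tight. This tightness step is the only mildly delicate point. It needs the minimizing competitor to have a nonzero $M^\top(\bm x-\bm x_0)$, which holds by construction of the minimum. When the index set is empty, $r_{\bm y}=+\infty$ and no tightness claim is made.

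\textbf{Step 3: zero displacement.} For the update claim, suppose the decision at every batch probe equals the decision at $\bm y$. This holds instance by instance: each batch instance $i$ has its own affine map and margin, and we assume all probes preserve each instance's decision.

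With exact evaluations and a decision-only loss, each instance loss at every probe $\bm y_{jvk}$ then equals its value at $\bm y$. So $\widehat{\mathcal L}_{\mathcal B}(\bm y_{jvk})=\widehat{\mathcal L}_{\mathcal B}(\bm y)$, and averaging over $k$ gives $C_{jv}=C_j$, constant in $v$. The softmax in Eq.~\eqref{eq:weights} then yields uniform weights $w_{jv}=1/V$. Therefore
\begin{equation*}
\bm D_j=\frac1V R_j\sum_v\bm v_v=\bm 0,
\end{equation*}
using linearity of $R_j$ and the centering $\sum_v\bm v_v=0$.

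Nothing here is a real obstacle beyond handling the $M^\top(\bm x-\bm x_0)=\bm 0$ case and the empty-minimum convention correctly.
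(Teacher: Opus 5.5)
Your proposal is correct and matches the paper's proof essentially step for step. Both use the affine gap $m_{\bm x}+\bm a_{\bm x}^\top\bm h$ with $\bm a_{\bm x}=M^\top(\bm x-\bm x_0)$, treat $\bm a_{\bm x}=\bm 0$ separately, apply the norm bound for $\|\bm h\|_2<r_{\bm y}$, use the witness $-r_{\bm y}\bm a_{\bm x}/\|\bm a_{\bm x}\|_2$ (the paper's nearest hyperplane point $\bm h_{\bm x}$) for exactness, and obtain $\bm D_j=\bm 0$ from uniform softmax weights and centered vertices. The one detail you leave implicit is that every other gap stays nonnegative at the witness, by the same Cauchy--Schwarz bound at $\|\bm h\|_2=r_{\bm y}$; the paper states this explicitly, and it is what justifies calling that point a tie.
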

The ratio compares the predicted-cost gap between the current decision and an alternative decision with the maximum rate at which a perturbation can close that gap. If the denominator is zero, no perturbation in the search space can close it. The smallest such margin across a finite batch gives the radius below which all current decisions remain unchanged. Under exact decision-only evaluations, probes within this radius therefore yield no new update direction, although momentum from earlier iterations may still move the predictor. These margins are used only for analysis; Appendix~\ref{app:margins} extends the result to individual parameter blocks and nonlinear predictors.

When a nearby boundary separates the current decision from a cheaper one, the probes may detect the cheaper region even if the subsequent update is too small to cross the boundary.
\begin{proposition}[Detection and crossing]\label{prop:boundary}
Fix a batch and a block $j$. Suppose the loss in a neighborhood is
$\widehat{\mathcal L}_{\mathcal B}(\bm{y}+E_j\bm{h})=L_-+J\mathbf1\{\delta+\bm{n}^\top\bm{h}>0\}$,
where $\|\bm{n}\|_2=1$, $J>0$, and either value is allowed at a tie. Use orthoplex probes with exact costs, all inside the neighborhood and none on the boundary. Then
\begin{equation}\label{eq:detection}
 \bm{n}^\top\bm{D}_j<0\quad\Longleftrightarrow\quad
 \rho_{\max}\|R_j^\top\bm{n}\|_\infty>|\delta|,
 \qquad \rho_{\max}=\max_k\rho_k.
\end{equation}
Otherwise $\bm{D}_j=\bm{0}$. If $m=-\bm{n}^\top\bm{D}_j>0$, a block-only update remaining in the neighborhood reaches the lower-cost side when $sm>\delta$.
\end{proposition}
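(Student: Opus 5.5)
We reduce everything to the orthoplex geometry inside block $j$. With orthoplex vertices $\pm\bm e_r$ rotated by $R_j$, the probe directions are $\pm\bm u_r$, where $\bm u_r=R_j\bm e_r$ and $r=1,\dots,q$. Write $a_r=\bm n^\top\bm u_r=(R_j^\top\bm n)_r$. The probe at radius $\rho_k$ in direction $\sigma\bm u_r$, with $\sigma\in\{\pm1\}$, has exact cost $L_-+J\,\mathbf 1\{\delta+\sigma\rho_k a_r>0\}$. No probe lies on the boundary, so each indicator is unambiguous. Hence $C_{r,\sigma}=L_-+J\,p_{r,\sigma}$, where $p_{r,\sigma}\in[0,1]$ is the fraction of radii on the high side.

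Since the orthoplex vertices come in antipodal pairs, we can write
\[\bm D_j=\sum_r (w_{r,+}-w_{r,-})\bm u_r.\]
This gives
\[\bm n^\top\bm D_j=\sum_r a_r\,(w_{r,+}-w_{r,-}).\]
The softmax is monotone decreasing in cost, so $w_{r,+}-w_{r,-}$ has the sign of $p_{r,-}-p_{r,+}$.

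\textbf{Case $\delta<0$.} The current point is on the low side. The probe $\sigma\bm u_r$ at radius $\rho_k$ is on the high side iff $\sigma\rho_k a_r>|\delta|$. For $a_r>0$ only the $+$ probes can cross, so $p_{r,-}=0\le p_{r,+}$. Then $a_r(w_{r,+}-w_{r,-})\le0$, with strict inequality iff $\rho_{\max}|a_r|>|\delta|$. The case $a_r<0$ is symmetric, and $a_r=0$ contributes zero. Summing over $r$, $\bm n^\top\bm D_j<0$ iff some $r$ has $\rho_{\max}|a_r|>|\delta|$, that is, iff $\rho_{\max}\|R_j^\top\bm n\|_\infty>|\delta|$.

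\textbf{Case $\delta>0$.} Here the current point is on the high side and the roles flip. For $a_r>0$, the $-$ probes reach the low side exactly when $\rho_k a_r>\delta$, so $p_{r,-}\le p_{r,+}=0$ fails in the other direction. Again each term $a_r(w_{r,+}-w_{r,-})$ is $\le0$, with strict inequality under the same threshold condition.

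\textbf{The ``otherwise'' clause.} If $\rho_{\max}|a_r|\le|\delta|$ for all $r$, then every probe lies on the same side as the current point; the boundary is excluded by assumption. So all costs in the block are equal, the weights are uniform, and
\[\bm D_j=\tfrac1{2q}\sum_r(\bm u_r-\bm u_r)=\bm 0.\]
This also gives the converse direction of the equivalence.

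\textbf{Case $\delta=0$.} This is subsumed, because we assumed no probe is on the boundary. If every $a_r=0$ the argument above gives $\bm D_j=\bm 0$. Otherwise some $a_r\ne0$, and the threshold $\rho_{\max}|a_r|>0$ holds, so we are in the detecting case.

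\textbf{Crossing.} A block-only update moves to $\bm h=s\bm D_j$. Then
\[\delta+\bm n^\top\bm h=\delta-sm<0\]
whenever $sm>\delta$. Since the update stays in the neighborhood, the step-function form of the loss applies there, and the new loss is $L_-$.

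The main obstacle is keeping the per-coordinate sign bookkeeping straight. We must check, in both sign cases of $\delta$ and of $a_r$, that the direction toward the lower-cost side gets weakly more weight. Strictness must come exactly from the existence of at least one radius that crosses, and it needs the $p_{r,\sigma}$ (not merely an indicator at $\rho_{\max}$) to compare correctly. This holds because crossing is monotone in $\rho_k$, so if any radius crosses then $\rho_{\max}$ does.
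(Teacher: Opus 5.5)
Your proof is correct and essentially the paper's: split $\bm D_j$ over antipodal pairs, show each pair's normal contribution is nonpositive and strictly negative exactly when some radius straddles the boundary, get $\bm D_j=\bm 0$ when none does, and finish with the signed distance $\delta-sm$. The paper avoids your sign-of-$\delta$ case split by orienting each pair so that $a_r\geq0$, which gives $C_{r,+}-C_{r,-}=J\alpha_r$ for both signs of $\delta$ and the explicit term $-\pi_ra_r\tanh(J\alpha_r/(2\varepsilon))$. One slip: for $\delta>0$ and $a_r>0$ every positive probe is on the high side, so the relation is $p_{r,-}\leq p_{r,+}=1$, not $p_{r,+}=0$; your conclusion for that case is still right.
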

Here, $\delta$ is the signed distance to the boundary, positive on the higher-cost side, while $|R_j^\top \bm{n}|_\infty$ measures probe alignment with the boundary normal. The probe radius determines whether the cheaper side is detected, and the step length whether the update crosses into it. This extends PolyStep's centered planar-jump analysis~\citep{le2026polystep}. In a batch, however, different instances may favor different directions or cross different boundaries, so these signals can interact before the update is formed.

\paragraph{What averaging changes.}
The boundary result above considers a single local boundary, whereas PolyStepOR computes probe costs by averaging over the batch before forming the weights. For orthoplex pairs $\pm u_r$, let $C_{r,\pm}$ be their costs averaged over instances and radii, $\Delta_r=C_{r,+}-C_{r,-}$, and $\pi_r=w_{r,+}+w_{r,-}$. Then
\begin{equation}\label{eq:mean-direction}
 D_j=-\sum_{r=1}^q\pi_r\tanh\!\left(\frac{\Delta_r}{2\varepsilon}\right)u_r.
\end{equation}
Each pair points toward its cheaper side, with strength set by its weight $\pi_r$ and cost difference relative to temperature. Cost differences from different instances can cancel before weighting. A lone changing instance contributes only its jump divided by batch size, weakening the signal at fixed temperature. Proposition~\ref{prop:mean-cost} covers batches and populations, recovering Proposition~\ref{prop:boundary} when changing instances share a boundary and favor the same side.

\paragraph{The role of momentum.}
Stored velocity can carry a moving predictor through a plateau; equal probe costs alone cannot start motion from rest. For a block-only step in the same two-value neighborhood, the crossing condition becomes $\delta+\eta n^\top v_{j,t+1}<0$, where $v_{j,t+1}=E_j^\top v_{t+1}$. Earlier directions can help cross or oppose the current preference (Appendix~\ref{app:momentum-local}).

\paragraph{What repeated steps can guarantee.}
Following PolyStep's smoothing analysis~\citep{le2026polystep}, we study a smoothed objective $F$ obtained by averaging the training loss over nearby search coordinates. A small $\nabla F$ means that nearby perturbations change the average loss only slightly. Under fixed coordinates, temperature, and probe distribution, suitable smoothness, and conditionally unbiased batches, Theorems~\ref{thm:rate} and~\ref{thm:momentum-rate} bound the expected squared gradient by $O(T^{-1/2})$ plus a bias residual from nonlinear weights, block interactions, and cost errors. Proposition~\ref{prop:stability} controls the effect of cost errors relative to temperature~\citep{cohen2023hyperbolic}. Thus, we obtain stationarity guarantees for the smoothed objective, including with momentum, but not acceleration or optimal decisions.

\section{Experiments}\label{sec:experiments}
We evaluate PolyStepOR on classical optimization benchmarks~\citep{mandi2024decision}, in-constraint prediction problems~\citep{silvestri2026score}, and two real-world applications: network districting~\citep{ahmed2024districtnet} and brass alloy production~\citep{mandi2026feasibility}. We build directly on original benchmark implementations, maintaining their predictor architectures and optimization formulations. For the first two benchmarks, we employ fixed hyperparameter configurations for PolyStepOR without task-specific tuning and observe consistently strong performance out of the box. The ML models are trained directly on realized decision costs, and optionally including recourse costs for in-constraint predictions. The optimal reference solutions are used only to \emph{compare test regret against baselines}. Appendix~\ref{app:reproducability} provides full hyperparameter configurations and extended results.

\subsection{Classical optimization benchmarks}\label{sec:exp-classical}
We evaluate PolyStepOR on five classical DFL benchmarks from \citet{mandi2024decision}: two synthetic tasks (Figure~\ref{fig:classical-synthetic}) and three real-world tasks (Figure~\ref{fig:classical-real}), comparing against MSE, SPO+~\citep{elmachtoub2022smart}, DBB~\citep{pogancic2020differentiation}, I-MLE~\citep{niepert2021implicit}, and FY~\citep{blondel2020learning} over 10 seeds. Baseline results are taken from \citet{mandi2024decision}. PolyStepOR is competitive across tasks and levels of model misspecification. On portfolio, it achieves the lowest mean regret across polynomial degrees, with $(0.160\pm0.031)\times10^{-3}$ versus $(0.262\pm0.032)\times10^{-3}$ for SPO+ ($p=0.002$). On knapsack and scheduling, it is statistically indistinguishable from the baseline with the lowest mean regret in all six settings ($p\geq0.19$). Performance is weaker on shortest path, especially at degree~8, where nine of ten runs exhaust the step budget without early stopping. Overall, these results show that realized-cost training can match strong DFL baselines without optimal reference targets, although performance remains problem-dependent.
\begin{figure}[tb]
\centering
\includegraphics[width=0.8\textwidth]{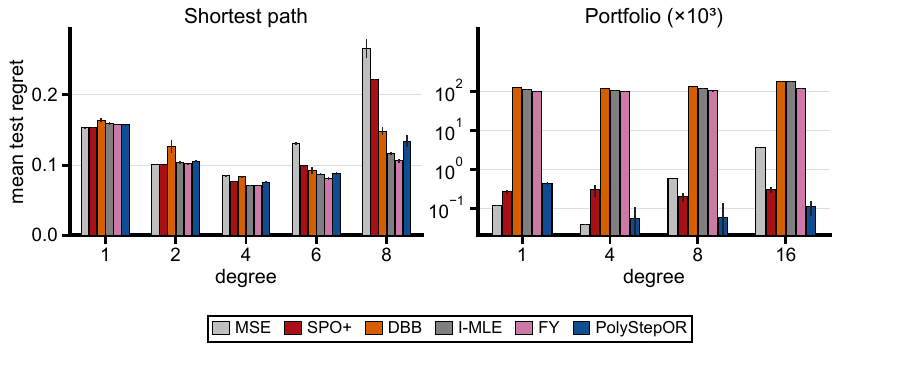}
\caption{Mean test regret on synthetic shortest-path and portfolio tasks as misspecification degree varies; lower is better. The portfolio panel uses a logarithmic axis and the displayed $\times10^3$ scale.}
\label{fig:classical-synthetic}
\end{figure}
\begin{figure}[tb]
\centering
\includegraphics[width=0.9\textwidth]{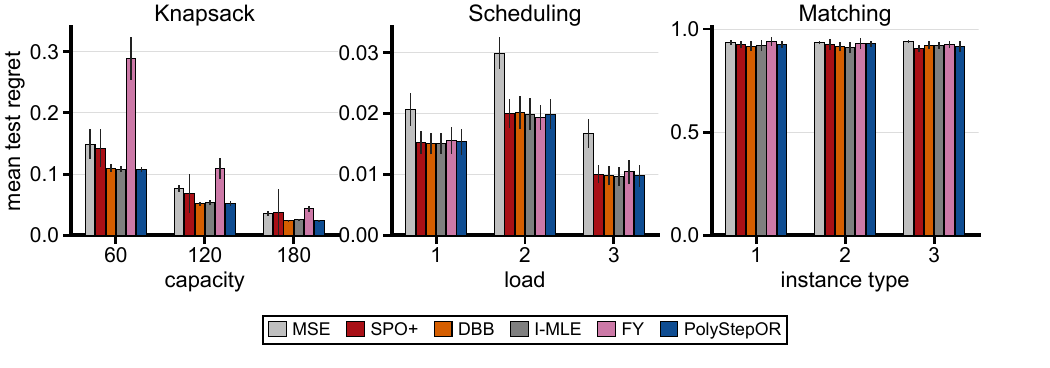}
\caption{Mean test regret for knapsack capacities and scheduling loads on energy-price data~\citep{ifrim2012properties}, and matching instance types on citation networks~\citep{sen2008collective}; lower is better.}
\label{fig:classical-real}
\end{figure}

\subsection{Predicted constraint parameters}\label{sec:exp-inconstraint}
We evaluate three in-constraint problems from~\citet{silvestri2026score}, such as predicting knapsack capacity, item weights, and weighted set multi-cover requirements. We compare against MSE, CombOptNet~\citep{paulus2021comboptnet} where available, and SFGE~\citep{silvestri2026score} over 15 runs per setting. PolyStepOR uses one fixed configuration across all settings without task-specific tuning; full training details are provided in Appendix~\ref{app:additional-results}. Here, PolyStepOR achieves the lowest mean relative post-hoc regret at all three penalty levels for predicted capacity ($p\leq0.002$ versus SFGE). For predicted item weights, SFGE performs better at every penalty level, while PolyStepOR improves over MSE and CombOptNet. On multi-cover (WSMC), PolyStepOR and SFGE do not differ significantly ($p\geq0.25$). Appendix~\ref{app:additional-results} reports prediction error and infeasibility for further analysis.
\begin{table}[tb]
\caption{Mean relative post-hoc regret over 15 runs of 3 in-constraint prediction problems. Bold underlined marks the lowest mean in each row and bold the second lowest.}
\label{tab:inconstraint}

\begin{center}
\resizebox{0.95\textwidth}{!}{%
\begin{tabular}{llcccc}
\toprule
Problem & $\rho$ & MSE & CombOptNet & SFGE & PolyStepOR \\
\midrule
\multirow{3}{*}{KP (capacity)} & 5 & 0.5556 $\pm$ 0.3529 & -- & \second{0.3577 $\pm$ 0.2152} & \best{0.2975 $\pm$ 0.1666} \\
 & 10 & 1.2126 $\pm$ 0.7797 & -- & \second{0.5081 $\pm$ 0.2784} & \best{0.3802 $\pm$ 0.1958} \\
 & 20 & 2.5205 $\pm$ 1.6313 & -- & \second{0.8216 $\pm$ 0.5031} & \best{0.4772 $\pm$ 0.2415} \\
\midrule
\multirow{3}{*}{KP (weight)} & 5 & 0.1676 $\pm$ 0.0357 & 0.1901 $\pm$ 0.0289 & \best{0.1297 $\pm$ 0.0114} & \second{0.1501 $\pm$ 0.0210} \\
 & 10 & 0.3190 $\pm$ 0.0814 & 0.4015 $\pm$ 0.0659 & \best{0.1760 $\pm$ 0.0221} & \second{0.2293 $\pm$ 0.0372} \\
 & 20 & 0.6146 $\pm$ 0.1745 & 0.8233 $\pm$ 0.1401 & \best{0.2205 $\pm$ 0.0276} & \second{0.2884 $\pm$ 0.0558} \\
\midrule
\multirow{3}{*}{WSMC $10\times50$} & 1 & 1.2960 $\pm$ 0.5676 & 5.2044 $\pm$ 2.1887 & \second{1.1373 $\pm$ 0.4368} & \best{1.1205 $\pm$ 0.5002} \\
 & 5 & 5.7115 $\pm$ 2.7809 & 110.7887 $\pm$ 30.2424 & \second{2.5460 $\pm$ 0.9283} & \best{2.5253 $\pm$ 1.0119} \\
 & 10 & 11.2307 $\pm$ 5.5511 & 440.7395 $\pm$ 118.7445 & \best{3.3108 $\pm$ 1.1913} & \second{3.3701 $\pm$ 1.5660} \\
\bottomrule
\end{tabular}}
\end{center}
\end{table}

\subsection{Case Study: Network Districting}
DistrictNet~\citep{ahmed2024districtnet} aims to partition a geographical area into connected districts of bounded size to minimize expected routing cost~\citep{ahmed2024districtnet}. For city $i$, the cost of a feasible districting $\lambda\in\Lambda_i$ is estimated over demand scenarios $\bm{\xi}^{\omega}$ as:
\begin{equation}\label{eq:dn-cost}
C_i(\lambda)
=
\sum_{d\in\lambda}\frac{1}{|\Omega|}
\sum_{\omega\in\Omega}\mathrm{TSP}(d,\bm{\xi}^{\omega}),
\end{equation}
where $\mathrm{TSP}(d,\bm{\xi}^{\omega})$ is the tour length serving district $d$ under scenario $\omega$. Obtaining exact solutions is expensive in this application, which can take roughly 400 CPU-core days for one instance with 60 geographical units and 10 districts~\citep{ahmed2024districtnet}. DistrictNet addresses this by solving a capacitated minimum spanning tree (CMST) surrogate. A neural network first predicts edge weights, and solving the surrogate using the weights then produces a district partition. The model is trained on optimal districtings of small cities, encoded as targets with the FY loss~\citep{blondel2020learning}. We adapt PolyStpPOR on the same network and surrogate but train on the estimated routing costs:
\begin{equation}\label{eq:dn-ours}
\min_{\bm{\theta}}\ \frac{1}{n}\sum_{i=1}^{n}
C_i\bigl(S(\phi_{\bm{\theta}}(\bm{z}_i))\bigr),
\end{equation}
where $S$ maps predicted edge weights to a districting. We use the authors' precomputed district costs to evaluate candidate partitions, without using optimal districtings.

Table~\ref{tab:districtnet-t1} reports results on 35 benchmark problems and the 2,000-unit Ile-de-France instance. Across ten seeds, PolyStepOR is competitive with DistrictNet on the test data. The relative cost is the mean over the 35 problems of each method's cost difference to PolyStepOR divided by the PolyStepOR cost. On the large instance Ile-de-France, the best of ten PolyStepOR runs costs $2156.3$, compared with the published DistrictNet cost of $2205.7$. These results demonstrate competitive districting quality from routing-cost evaluations without optimal training partitions, and can potentially scale up to larger instances for training. We further visualize the real districting solutions in Appendix~\ref{app:districting}.
\begin{table}[t]
\caption{Districting costs on the 35 test problems (left), which consists of 7 real cities at 120 units, 5 target sizes and on the large instance Ile-de-France (right) with 2000 units.}
\label{tab:districtnet-t1}\label{tab:districtnet-t2}
\centering
\begin{tabular}{lccccc}
\toprule
& \multicolumn{3}{c}{35 test problems} & \multicolumn{2}{c}{Ile-de-France} \\
\cmidrule(lr){2-4}\cmidrule(lr){5-6}
Method & Cost & Rel. (\%) & $p$ & Cost & Rel. (\%) \\
\midrule
BD & 594.1 & 9.46 & $\num{1e-8}$ & 2379.0 & 10.33 \\
FIG & 594.3 & 9.60 & $\num{1e-8}$ & 2388.8 & 10.78 \\
PredGNN & 603.0 & 10.95 & $\num{1e-10}$ & 2295.2 & 6.44 \\
AvgTSP & 583.3 & 3.86 & $\num{2e-5}$ & 2262.7 & 4.93 \\
DistrictNet & \second{557.7} & -0.05 & 0.14 & \second{2205.7} & 2.29 \\
\midrule
PolyStepOR & \best{554.0} & 0.00 & -- & \best{2156.3} & 0.00 \\
\bottomrule
\end{tabular}
\end{table}

\subsection{Case Study: Brass alloy production}
We apply PolyStepOR to brass alloy production, where a factory purchases ore from 10 suppliers to meet copper and zinc requirements using predicted metal contents~\citep{mandi2026feasibility}. PolyStepOR trains on purchase costs including recourse penalties for shortages, without optimal reference purchases or an additional feasibility-specific loss. We also evaluate SFGE~\citep{silvestri2026score} using post-hoc regret. ODECE~\citep{mandi2026feasibility} instead combines infeasibility and optimality-preserving losses, using reference solutions and a weight $\alpha$ to control their balance (Appendix~\ref{app:alloy}).

Figure~\ref{fig:alloy-tradeoff} reports infeasibility before correction and regret among feasible purchases. PolyStepOR achieves $55.5\%$ infeasibility and regret of $0.154$, close to MSE at $53.2\%$ and $0.169$, and below SFGE's reported values on both metrics. At similar infeasibility, ODECE with $\alpha=0.4$ has regret of $0.195$; larger $\alpha$ reduces infeasibility at the expense of regret. PolyStepOR therefore achieves competitive results from corrected-cost evaluations alone, close to their strongest baseline (MSE).

\begin{figure}[tb]
\centering
\includegraphics[width=0.65\textwidth]{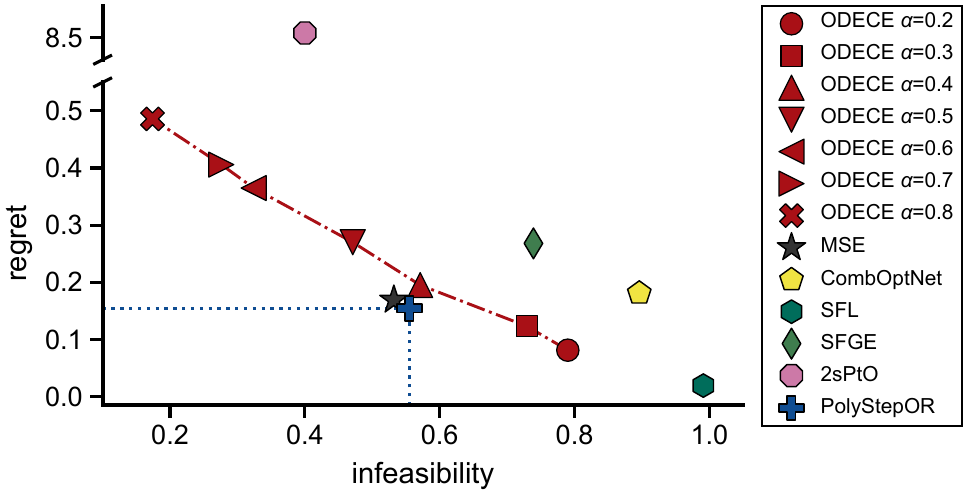}
\caption{Infeasibility ratio and regret of feasible solutions on test instances (adapted from~\citealp{mandi2026feasibility}). PolyStepOR is competitive with ODECE's strongest baseline (MSE).}
\label{fig:alloy-tradeoff}
\end{figure}
\section{Discussion and conclusion}
PolyStepOR trains predictors from realized decision costs without optimal training targets or derivatives of the predictor, solver, or task loss. It accommodates exact and heuristic solvers, including in-constraint predictions evaluated through recourse. Experiments demonstrate competitive decision quality across several benchmarks and practical applications, although performance varies across problems and cost-based training does not ensure first-stage feasibility. Our analysis separates boundary detection from crossing, explains how batch averaging and momentum affect the signal, and bounds the expected squared gradient of a smoothed objective with an explicit bias residual. The bounds cover fixed search coordinates, including scheduled momentum, under sampling and regularity assumptions; they do not guarantee improvement at every step or cover all practical configurations. The main computational cost remains repeated solver calls, making the approach particularly relevant when candidate decisions can be evaluated much more cheaply than optimal reference decisions can be obtained.

Future work includes more efficient parallel evaluations, adaptive probe radii, and analysis of practical optimizer variants. Incorporating an explicit feasibility objective could also provide greater control over the trade-off between decision cost and feasibility.

\clearpage
\subsection*{AI use statement}
LLMs assisted with writing, editorial checks, and literature review. The LLM also helped to implement and debug the experiment code. The authors directed the research, verified all results, and take full responsibility for all methods, proofs, experiments, and claims.
\subsection*{Ethics statement}
This work uses only publicly available benchmark data (synthetic instances, electricity prices, the CORA citation network, public geographic and population data for city districting, and alloy supplier data) and involves no human subjects or personal information.
\subsection*{Reproducibility statement}
To aid complete reproducibility, Appendix~\ref{app:additional-results} provides sets of hyperparameters per experiment conducted in this paper that the audience can set up and reproduce. Our algorithm is available in Julia and Python programming languages. The full code implementation is provided via this link: \href{https://github.com/vietngth/polystep-or}{https://github.com/vietngth/polystep-or}.

\bibliography{iclr2027_conference}
\bibliographystyle{iclr2027_conference}
\clearpage
\appendix
\section{Full analysis and proofs}\label{app:proofs}
We first establish what a single update can learn from its probes, then study repeated updates. The local results explain when a decision change produces a direction and how averaging or momentum changes its effect. The multi-step analysis concerns a smoothed mean loss and states its sampling and regularity assumptions separately. Norms on search coordinates are Euclidean unless stated otherwise.

\subsection{Transport rows and reference costs}\label{app:transport}
Optimal transport assigns mass from sources to targets according to a cost matrix $C\in\mathbb R^{P\times V}$. A plan $T_{jv}\geq0$ records the mass assigned from source $j$ to target $v$, and $\langle C,T\rangle=\sum_{jv}C_{jv}T_{jv}$ is its total cost. Entropic regularization favors distributed assignments when costs are similar~\citep{cuturi2013sinkhorn,peyre2019computational}. Given positive source masses $a_j$ with $\sum_ja_j=1$, the one-sided problem is
\begin{equation}\label{eq:ot}
 \min_{T\geq0,\ T\mathbf1=a}\;
 \langle C,T\rangle+\varepsilon H(T),\qquad
 H(T)=\sum_{jv}T_{jv}(\log T_{jv}-1),\quad \varepsilon>0,
\end{equation}
with $0\log0=0$. Only the source masses are fixed, so each row can be optimized independently. Its unique solution is $T_{jv}=a_jw_{jv}$ with $w_{jv}$ from Eq.~\eqref{eq:weights}. The source masses cancel upon row normalization. In PolyStepOR, $a_j=1/P$ and each target label denotes a block-local direction; it is not a shared point in parameter space.

For a fixed row mass $a_j>0$, write $T_{jv}=a_jw_v$ with $w$ in the probability simplex $\Delta_V$. Up to an additive constant and the positive factor $a_j$, the row objective is $J(w)=\sum_v C_{jv}w_v+\varepsilon\sum_vw_v\log w_v$. Put $Z=\sum_v\exp(-C_{jv}/\varepsilon)$ and $p_v=\exp(-C_{jv}/\varepsilon)/Z$. Direct substitution gives
\[
 J(w)-J(p)=\varepsilon\sum_vw_v\log\frac{w_v}{p_v}\geq0.
\]
The inequality follows from $\log t\leq t-1$: summing $w_v\log(p_v/w_v)\leq p_v-w_v$ over positive $w_v$ gives a nonpositive sum. Since all $p_v>0$, omitting any index with $w_v=0$ makes this bound strict. With full support, equality in $\log t\leq t-1$ requires $w_v=p_v$ for every $v$. Thus $p$ is the unique minimizer.

\begin{proof}[Proof of Proposition~\ref{prop:reference}]
Write $r_{\mathcal B}=B^{-1}\sum_{i\in\mathcal B}r_i$. Replacing $C_{jv}$ by $C_{jv}-r_{\mathcal B}$ multiplies every exponential in a row by $e^{r_{\mathcal B}/\varepsilon}$, which cancels in Eq.~\eqref{eq:weights}.
If two runs have the same parameters and velocity at step $t$, their identical queries therefore give the same $D_t$. Equation~\eqref{eq:momentum} then gives the same velocity and parameters at step $t+1$. Induction from their common initialization proves trajectory invariance. This uses identical prescribed schedules and stopping horizons; an unadjusted rule based on absolute loss values need not be invariant.
\end{proof}

\paragraph{Reference shifts and scaling.}
The same argument allows a separate constant shift for each block. At fixed probes and step length, multiplying every cost and the temperature by the same positive factor also preserves the weights. Temperature therefore has the same units as the cost being weighted.

\paragraph{Why normalization changes the objective.}
Consider two instances with optimal costs $(1,100)$. Suppose two predictors induce cost vectors $(1,110)$ and $(2,100)$, respectively. The first predictor has mean regret $5$, versus $1/2$ for the second, but mean relative regret $1/20$, versus $1/2$. Thus per-instance division by the reference cost reverses their ordering. For fixed positive divisors $d_i$, subtracting $r_i/d_i$ still cancels from each aggregated probe row: normalized regret and normalized cost give the same update. It is the division by $d_i$, not reference subtraction, that changes the objective relative to raw cost. The example uses two feasible decisions per instance, with costs $(1,2)$ and $(100,110)$.

Reference subtraction need not preserve instance-wise batch standardization. Costs $(2,4)$ and references $(1,4)$ give regrets $(1,0)$. The within-batch variances are $1$ and $1/4$, respectively, and the centered vectors even have opposite signs. Proposition~\ref{prop:reference} applies when the same reference is subtracted from each \emph{aggregated probe cost}. It also applies to row centering or row-spread scaling that is itself invariant to common shifts. It does not justify replacing a loss-dependent normalization computed across instances by a different one.

Regret conditional on feasibility poses a different issue: its averaging set depends on the predictor. Even a parameter-independent reference cost can then have a parameter-dependent conditional mean. The common-shift argument applies to a fixed population of instances, including when every instance is scored after repair.

\subsection{Decision margins}\label{app:margins}
For one instance, let $\mathcal F$ be a fixed finite feasible set with at least two decisions and let $x_0$ uniquely minimize $\hat c^\top x$. In a prediction norm $\|\cdot\|$ with dual norm $\|u\|_*=\sup_{\|v\|\leq1}u^\top v$, define
\begin{equation}\label{eq:margin}
 r_c=\min_{x\in\mathcal F\setminus\{x_0\}}
 \frac{\hat c^\top(x-x_0)}{\|x-x_0\|_*}>0.
\end{equation}
The region where $x_0$ is optimal is the polyhedral cone
$\mathcal C_{x_0}=\{c:c^\top(x-x_0)\geq0\text{ for every }x\in\mathcal F\}$.
The exact distance to loss of unique optimality in this norm is $r_c$. If the predictor is locally $H$-Lipschitz, $H>0$, a search perturbation $\|h\|_2<r_c/H$ within that neighborhood preserves the decision. These statements extend Proposition~\ref{prop:margin} to general prediction norms and nonlinear predictors. The affine formula in that proposition gives the exact distance in search coordinates. For block $j$, replace $M^\top$ with $E_j^\top M^\top$ in Eq.~\eqref{eq:subspace-margin} to obtain $r_{y,j}$.

\begin{proof}[Proof of Proposition~\ref{prop:margin} and its extensions]
A point $x_0$ minimizes $c^\top x$ if and only if $c^\top(x-x_0)\geq0$ for all $x\in\mathcal F$, proving the polyhedral description. At the given unique optimum all gaps $m_x=\hat c^\top(x-x_0)$ for $x\ne x_0$ are positive. H\"older's inequality gives
\[
 (\hat c+\Delta c)^\top(x-x_0)
 \geq m_x-\|\Delta c\|\,\|x-x_0\|_*>0
 \quad\text{when }\|\Delta c\|<r_c.
\]
Local Lipschitz continuity bounds $\|\Delta c\|$ by $H\|h\|_2$ whenever the perturbation remains in the stated neighborhood. If $H=0$, the prediction is constant on that neighborhood and the same conclusion holds there without dividing by $H$.

The prediction-space radius is also exact. Choose a decision $x$ attaining the minimum in Eq.~\eqref{eq:margin}. The unit ball is compact, so there is a unit vector $v$ with $v^\top(x-x_0)=\|x-x_0\|_*$. At $\Delta c=-r_cv$, this gap is zero and all other gaps are nonnegative by the same inequality. Hence $x_0$ loses unique optimality at distance $r_c$.

For the affine map, let $a_x=M^\top(x-x_0)$. The perturbed gap is $m_x+a_x^\top h$. If $a_x=0$, it stays strictly positive for every $h$. Otherwise, the nearest point on its zero hyperplane is
\[
 h_x=-\frac{m_x}{\|a_x\|_2^2}a_x,
 \qquad \|h_x\|_2=\frac{m_x}{\|a_x\|_2}.
\]
Every $h$ of norm less than the minimum of these distances has all gaps positive. At a minimizing $h_x$, that gap is zero and every other gap is nonnegative by the same norm bound. Thus unique optimality is lost at exactly $r_y$. If no $a_x$ is nonzero, all gaps stay positive everywhere and the distance is infinite. At the distance itself, the selected decision depends on tie-breaking. If the prediction map is affine only locally, the distance formula is valid only while the relevant ball and witness remain in that affine region.

Replacing the affine map $M$ with $ME_j$ gives the block-specific radius by the same argument. Finally, unchanged decisions and a decision-only task loss make every probe in a row have the same cost. Its softmax weights equal $1/V$, so $D_j=V^{-1}R_j\sum_vv_v=0$. The argument assumes exact evaluations, or common evaluation noise that preserves these equalities; independent noisy evaluations can induce motion even on a plateau.
\end{proof}

The Lipschitz bound gives a sufficient radius, while the affine formula gives the exact distance to a tie in the search coordinates. For a finite batch, let $r_{i,j}$ be instance $i$'s block-$j$ radius. If $\rho_{\max}<\min_{i\in\mathcal B}r_{i,j}$, every instance keeps its decision at every block-$j$ probe and $D_j=0$. The batch mean may stay constant beyond this radius because distinct decisions can have equal costs or different instances' cost changes can cancel. Zero $D_j$ describes the fresh signal; momentum may still move the parameters.

\paragraph{A changed cost need not give a fresh direction.}
For a regular unit simplex with $V=q+1$, PolyStep's identity~\citep{le2026polystep} gives
\[
 \left\|\sum_vw_vR v_v\right\|_2^2
 =\frac{V}{V-1}\sum_v(w_v-1/V)^2.
\]
To check the identity, use $v_v^\top v_u=-1/(V-1)$ for $v\ne u$ and $\sum_vw_v=1$:
\[
 \left\|\sum_vw_vRv_v\right\|_2^2
 =\sum_vw_v^2-\frac{1-\sum_vw_v^2}{V-1}
 =\frac{V}{V-1}\left(\sum_vw_v^2-\frac1V\right).
\]
Thus the simplex direction is zero exactly when the finite-temperature cost row is constant. For an orthoplex, the row $(0,0,1,1)$ on $(e_1,-e_1,e_2,-e_2)$ has unequal costs but zero direction, because each antipodal pair has equal weights. A decision change, a cost change, a fresh direction, parameter motion, and a loss decrease are distinct events. These directions live in search coordinates; a nonzero displacement can also vanish under reconstruction if it lies in the null space of $A$.

\subsection{Updates at decision boundaries}\label{app:boundary}
To determine whether an update approaches a boundary, only its component along the normal matters. Orient each antipodal pair as $\pm u_r$ with $a_r=n^\top u_r\geq0$. Let $w_{r,+},w_{r,-}$ be the weights from Eq.~\eqref{eq:weights}, $\pi_r=w_{r,+}+w_{r,-}$ their sum, and
$\alpha_r=K^{-1}\#\{k:\rho_k a_r>|\delta|\}$ the fraction of radii for which the pair straddles the boundary. Then
\begin{equation}\label{eq:boundary}
 n^\top D_j=-\sum_{r=1}^q\pi_r a_r
                   \tanh\!\left(\frac{J\alpha_r}{2\varepsilon}\right)\leq0.
\end{equation}
Each term combines the pair's weight, its alignment with the normal, and its cost contrast. At $\delta=0$, the no-tie assumption gives $\alpha_r=1$ and $\pi_r=1/q$, recovering PolyStep's planar-jump formula~\citep{le2026polystep}. Away from the boundary, only straddling radii contribute to the cost difference within a pair.
An orientation-independent sufficient condition for detection is $\rho_{\max}>\sqrt q\,|\delta|$, provided all probes remain in the two-decision neighborhood.

\begin{proof}[Proof of Proposition~\ref{prop:boundary}]
Orient $u_r=\pm R_je_r$ so that $a_r=n^\top u_r\geq0$, choosing either sign if $a_r=0$. Let $C_{r,+}$ and $C_{r,-}$ denote the mean probe costs on $u_r$ and $-u_r$. At radius $\rho_k$, the pair is on opposite sides exactly when $\rho_k a_r>|\delta|$; otherwise the two costs agree. No equality case occurs because no probe lies on the boundary. Since the positive member always has at least the negative member's cost,
\[
 C_{r,+}-C_{r,-}=J\alpha_r.
\]
Let $Z=\sum_r[\exp(-C_{r,+}/\varepsilon)+\exp(-C_{r,-}/\varepsilon)]$. The pair's contribution to the normal component is
\[
 a_r\frac{e^{-C_{r,+}/\varepsilon}-e^{-C_{r,-}/\varepsilon}}{Z}
 =-\pi_r a_r\frac{1-e^{-J\alpha_r/\varepsilon}}{1+e^{-J\alpha_r/\varepsilon}}
 =-\pi_r a_r\tanh\!\left(\frac{J\alpha_r}{2\varepsilon}\right).
\]
Summing proves Eq.~\eqref{eq:boundary}. Every $\pi_r$ is positive. A summand is strictly negative exactly when $\alpha_r>0$, which also implies $a_r>0$. Thus strict drift is equivalent to $\rho_{\max}\max_ra_r>|\delta|$. If no pair straddles, its two weights agree, so every antipodal contribution cancels and $D_j=0$.

The signed distance after a block-only update is $\delta+s n^\top D_j=\delta-sm$. If this is negative and the update stays in the stated neighborhood, its loss is $L_-$. For $\delta>0$, this decreases the loss by $J$; for $\delta<0$, it preserves the lower loss. At $\delta=0$, strict decrease depends on the incumbent tie-breaking rule. Every pair then straddles, giving $\alpha_r=1$ and $\pi_r=1/q$. Haar rotations are non-tangent with probability one for $q\geq2$.

Finally, $\sum_ra_r^2=1$ implies $\max_ra_r\geq1/\sqrt q$, proving the sufficient radius. This worst-orientation threshold is sharp: an orthogonal orientation can give $a_r=1/\sqrt q$ for all $r$. The lower threshold $\rho_{\max}\leq|\delta|$ prevents strict straddling for every orientation, apart from excluded ties. Intermediate radii depend on alignment with the boundary normal.
\end{proof}

\paragraph{Detection without an immediate decision change.}
Take $n=(3/5,4/5)$, $R=I$, radii $(1/2,1)$, $\delta=9/20$, $J=2\log3$, and $\varepsilon=1$. Only the larger radius straddles the boundary in each pair, so $\alpha_1=\alpha_2=1/2$. After subtracting $L_-$, each positive--negative pair has costs $(2\log3,\log3)$ and unnormalized weights $(1/9,1/3)$. Normalizing all four weights gives $(1/8,3/8)$ in each pair. Therefore $D=(-1/4,-1/4)$ and $m=7/20$. A unit step leaves the predictor on the higher-cost side because $\delta-m=1/10$. A step $s>9/7$ crosses, provided it stays in the two-decision neighborhood.

The boundary model arises when two distinct feasible decisions exchange optimality under an affine predictor, other decisions are strictly worse throughout the neighborhood, and their true task costs differ. Orienting the normal toward the higher-cost decision gives the stated loss. A batch also has this form if just one instance changes, or if several instances share a boundary with a nonzero aggregate jump. The next result handles different boundaries and conflicting costs.

\paragraph{A counterexample with three decisions.}
An update can increase the task loss when its probes cross more than one decision boundary. Let $\mathcal F=\{e_1,e_2,e_3\}$, minimize predicted coefficients, and evaluate the selected decision under true costs. Take
\[
 \hat c(y)=(0,-y_1+4/5,-y_1/2+31/100),\qquad c=(1,0,2),\qquad y=(0,0).
\]
Use orthoplex directions $(e_1,-e_1,e_2,-e_2)$, $R=I$, $K=1$, and probe radius $1$. At $y_1=0$, $e_1$ is uniquely optimal and costs $1$; at the positive unit probe $e_2$ is uniquely optimal and costs $0$; the other three probes select $e_1$ and cost $1$. With $\varepsilon=1/\log3$, the weights are $(1/2,1/6,1/6,1/6)$ and $D=(1/3,0)$. Taking $s=21/10$ gives $y_1^+=7/10$, where $e_3$ is uniquely optimal and costs $2$. Thus all probe costs are at most the incumbent cost, yet the barycentric step increases the true cost. The lower-cost decision region at $y_1=1$ is separated from the incumbent by a worse region.

\paragraph{Why the crossing guarantee is block-only.}
Two individually improving block moves can interact badly. Take $P=q=2$, $\mathcal F=\{e_1,e_2,e_3,e_4\}$, predicted costs $(0,1/4-y_1,1/4-y_3,1/2-y_1-y_3)$, and true costs $(1,0,0,2)$. At $y=0$ the loss is $1$. Unit orthoplex probes in either block have costs $(0,1,1,1)$. With $\varepsilon=1/\log3$, each block direction is $(1/3,0)$. Either block-only unit step reaches cost $0$, but their simultaneous step reaches $y=(1/3,0,1/3,0)$ and selects $e_4$, with cost $2$. The single-boundary argument for a fixed block therefore cannot be applied to all blocks at once.

\subsection{How instance costs combine}\label{app:mean-cost}
An instance may favor a direction that is poor for the rest of the batch. The relevant quantity is therefore its contribution to the mean cost difference across each probe pair. Let $\mathcal L_i$ denote instance $i$'s loss in search coordinates. The next result expresses the update in terms of these differences, then gives a sufficient condition for the instances to agree.

\begin{proposition}[Mean-cost signal]\label{prop:mean-cost}
Fix a block, orthonormal directions $u_r=R_je_r$, positive radii, and $\varepsilon>0$. Let $\mathbb E_i$ denote either a finite weighted average over instances or a population expectation. Assume all queried losses are measurable and integrable, and define
\[
 C_{r,\pm}=\mathbb E_i\frac1K\sum_k\mathcal L_i(y\pm\rho_kE_ju_r),
 \qquad \Delta_r=C_{r,+}-C_{r,-}.
\]
With $\pi_r=w_{r,+}+w_{r,-}>0$, Eq.~\eqref{eq:mean-direction} holds. In particular, $D_j=0$ if and only if every $\Delta_r=0$.

Suppose, additionally, that each instance has a local two-value loss
$\mathcal L_i(y+E_jh)=a_i+J_i\mathbf1\{\delta_i+n_i^\top h>0\}$,
where the instance parameters are measurable, $J_i\geq0$, $\|n_i\|_2=1$, all probes lie in the stated neighborhoods, and probe ties occur with zero instance probability. If $\mathbb E_iJ_i<\infty$, then
\begin{equation}\label{eq:mean-jump}
 \Delta_r=\mathbb E_i\big[J_i\operatorname{sgn}(n_i^\top u_r)\alpha_{ir}\big],
 \qquad \alpha_{ir}=\frac1K\sum_k
 \mathbf1\{\rho_k|n_i^\top u_r|>|\delta_i|\}.
\end{equation}
If $(n_i^\top u_r)\Delta_r\geq0$ for every $r$ and almost every instance with $J_i>0$, then $n_i^\top D_j\leq0$ for those instances. A block-only step along $D_j$ cannot increase their mean loss if its segment remains in the neighborhoods and its endpoints have zero probability of ties.
\end{proposition}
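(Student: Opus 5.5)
The argument has three parts, mirroring the three claims: an algebraic identity for $D_j$, a computation of $\Delta_r$ under the two-value model, and a sign argument for the step.

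\emph{Algebraic part.} The weights in Eq.~\eqref{eq:weights} depend only on the aggregated row costs $C_{r,\pm}$. So the pair computation from the proof of Proposition~\ref{prop:boundary} applies verbatim once $J\alpha_r$ is replaced by $\Delta_r$. With $Z=\sum_r(e^{-C_{r,+}/\varepsilon}+e^{-C_{r,-}/\varepsilon})$, the pair contributes
\[
 (w_{r,+}-w_{r,-})u_r=-\pi_r\tanh\!\left(\frac{\Delta_r}{2\varepsilon}\right)u_r .
\]
Summing over $r$ gives Eq.~\eqref{eq:mean-direction}. Integrability makes every $C_{r,\pm}$ finite, so every $\pi_r>0$. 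Because the $u_r$ are orthonormal, $D_j=0$ holds if and only if every coefficient $\pi_r\tanh(\Delta_r/2\varepsilon)$ vanishes. Since $\pi_r>0$ and $\tanh$ vanishes only at zero, this happens exactly when every $\Delta_r=0$.

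\emph{Jump formula.} Fix an instance $i$ and a radius $\rho_k$. Write $t=n_i^\top u_r$, so the probes $y\pm\rho_kE_ju_r$ have signed values $\delta_i\pm\rho_k t$. If $\rho_k|t|\le|\delta_i|$, both probes lie on the same side, because ties have probability zero, and their losses cancel. If $\rho_k|t|>|\delta_i|$, they straddle the boundary. The probe on the side of $\operatorname{sgn}(t)$ is then the high one, so the difference is $J_i\operatorname{sgn}(t)$. The case $t=0$ can never straddle, which is consistent with $\operatorname{sgn}(0)=0$. Averaging over $k$ gives the per-instance difference $J_i\operatorname{sgn}(t)\alpha_{ir}$. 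Taking $\mathbb E_i$ requires linearity of expectation, which I justify by noting that the integrand is dominated by $J_i$ and $\mathbb E_iJ_i<\infty$; measurability of $\alpha_{ir}$ follows from measurability of the instance parameters. This yields Eq.~\eqref{eq:mean-jump}.

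\emph{Sign conclusion.} From Eq.~\eqref{eq:mean-direction},
\[
 n_i^\top D_j=-\sum_r\pi_r\tanh\!\left(\frac{\Delta_r}{2\varepsilon}\right)(n_i^\top u_r).
\]
Since $\tanh(\Delta_r/2\varepsilon)$ has the sign of $\Delta_r$, each summand is nonnegative under the hypothesis $(n_i^\top u_r)\Delta_r\ge0$. Hence $n_i^\top D_j\le0$ for every such instance. A step $y\mapsto y+sE_jD_j$ changes the signed value $\delta_i$ to $\delta_i+s\,n_i^\top D_j\le\delta_i$. Within the neighborhood, the indicator $\mathbf1\{\cdot>0\}$ is monotone, so each such instance's loss cannot increase. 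Instances with $J_i=0$ have constant loss. Taking expectations, with ties at the endpoints having probability zero so that the indicator is evaluated unambiguously, the mean loss does not increase.

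\emph{Main obstacle.} The delicate step is the measure-theoretic bookkeeping: tie exclusions must hold almost surely, and the hypothesis is needed only on the set $\{J_i>0\}$. I would handle this by restricting all pointwise statements to a full-measure set of instances where no probe or endpoint lies on a boundary.
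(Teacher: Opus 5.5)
Your proposal is correct and follows the paper's proof essentially step for step. It uses the pairwise identity $w_{r,+}-w_{r,-}=-\pi_r\tanh(\Delta_r/(2\varepsilon))$ with orthonormality of the $u_r$, then the per-radius indicator difference $J_i\operatorname{sgn}(n_i^\top u_r)\mathbf 1\{\rho_k|n_i^\top u_r|>|\delta_i|\}$ combined with integrability, and finally monotonicity of each signed distance along $h=sD_j$ with $s\geq0$. Your extra remarks (finiteness of $C_{r,\pm}$ giving $\pi_r>0$, domination by $J_i$, and constancy of the loss when $J_i=0$) only make explicit what the paper leaves implicit.
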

\begin{proof}
Within each antipodal pair, divide the difference of its weights by their sum:
\[
 w_{r,+}-w_{r,-}
 =\pi_r\frac{e^{-C_{r,+}/\varepsilon}-e^{-C_{r,-}/\varepsilon}}
                 {e^{-C_{r,+}/\varepsilon}+e^{-C_{r,-}/\varepsilon}}
 =-\pi_r\tanh\!\left(\frac{\Delta_r}{2\varepsilon}\right).
\]
Multiplying by $u_r$ and summing proves Eq.~\eqref{eq:mean-direction}. Since the $u_r$ form an orthonormal basis, the sum vanishes exactly when each coefficient vanishes, equivalently when every $\Delta_r=0$.

For an instance and radius, the difference of the two indicators is
$\operatorname{sgn}(n_i^\top u_r)\mathbf1\{\rho_k|n_i^\top u_r|>|\delta_i|\}$:
it is zero when both probes are on the same side and has the stated sign when they straddle. Averaging proves Eq.~\eqref{eq:mean-jump}; integrability justifies exchanging the finite sum and expectation. Finally,
\[
 n_i^\top D_j=-\sum_r\pi_r(n_i^\top u_r)
                      \tanh\!\left(\frac{\Delta_r}{2\varepsilon}\right)\leq0
\]
under the sign condition. Along $h=sD_j$, $s\geq0$, each signed distance is nonincreasing. Hence each indicator, and then its mean cost, cannot increase between the specified endpoints.
\end{proof}

\paragraph{Agreement and cancellation.}
The sign condition in Proposition~\ref{prop:mean-cost} requires each pair's aggregate preference to agree with every affected instance's orientation. It guarantees that almost every instance's cost is nonincreasing. A decrease in the mean can also occur through tradeoffs between instances, so this sufficient condition is stronger than mean-cost improvement alone.

When all $n_i=n$, orient each $u_r$ so $n^\top u_r\geq0$. Then $\Delta_r=\mathbb E_i[J_i\alpha_{ir}]\geq0$, and $n^\top D_j<0$ exactly when $\Delta_r>0$ for some $r$. If the offsets are also equal, Proposition~\ref{prop:boundary} follows with jump $J=\mathbb E_iJ_i>0$. In a batch of size $B$, a lone changing instance contributes $J_i/B$ to the aggregate jump. Its weight contrast is therefore governed by $J_i/(B\varepsilon)$, which can be small even for a substantial individual cost change.

For conflicting instances, take losses $\mathcal L_1(h)=\mathbf1\{n^\top h>0\}$ and $\mathcal L_2(h)=a\mathbf1\{n^\top h<0\}$, away from ties. Their equally weighted mean is $a/2+(1-a)\mathbf1\{n^\top h>0\}/2$. At $a=1$ all mean probe costs agree even though both decisions change; at $a=3$ the mean favors the opposite side to instance 1. These are finite linear decision problems: predicted costs $(-n^\top h,n^\top h)$ choose $e_1$ on the positive side, and true costs $(1,0)$ and $(0,a)$ give the stated losses. A minibatch containing only instance 1 can thus point against the full-data direction. Even without cancellation, the three-decision example above rules out a general loss-decrease claim.

\paragraph{Population means need not have plateaus.}
Finite-batch margins do not imply a positive population margin. For $Z\sim\operatorname{Unif}[-1,1]$, predicted costs $(Z-y,0)$ and true costs $(1,0)$ give loss $\mathbf1\{y>Z\}$ away from ties. At $y=0$, each instance has positive margin $|Z|$ almost surely, yet the population mean is $(1+y)/2$ for $|y|<1$. A positive essential lower bound on the margins suffices to preserve almost every decision throughout a common ball; positive individual margins alone do not. For bounded measurable losses, population averaging also commutes with the smooth convolution used below. This yields a smoothed population target when batches are sampled freshly from that population; empirical training alone gives no population or generalization guarantee.

\subsection{Momentum at a boundary or on a plateau}\label{app:momentum-local}
Momentum reuses directions computed at earlier points. This can sustain motion when the current probes give equal costs, but those earlier directions may no longer favor the current decision. Equation~\eqref{eq:momentum} stores a displacement, with no factor $1-\mu_t$ on the new input. Unrolling it shows exactly how much of each past displacement remains:
\begin{equation}\label{eq:velocity-sum}
 v_{t+1}=\sum_{k=0}^t s_kD_k\prod_{l=k+1}^t\mu_l,
 \qquad
 \|v_{t+1}\|_2\leq\sqrt P\sum_{k=0}^t s_k\prod_{l=k+1}^t\mu_l,
\end{equation}
where an empty product is one. Both statements follow by substitution and $\|D_k\|_2\leq\sqrt P$. If $s_k\leq s_{\max}$ and $\mu_k\leq\bar\mu<1$, the geometric series bounds the actual step by $\eta s_{\max}\sqrt P/(1-\bar\mu)$. Thus the scale of the current step alone does not bound motion with momentum.

For the isolated boundary in Proposition~\ref{prop:boundary}, write $v_{j,t}=E_j^\top v_t$, $m_t=-n^\top D_{j,t}$, and $a_t=-n^\top v_{j,t}$. A block-only momentum step has new signed distance
\begin{equation}\label{eq:momentum-crossing}
 \delta_{t+1}=\delta_t-\eta(\mu_ta_t+s_tm_t).
\end{equation}
It reaches the lower side exactly when $\eta(\mu_ta_t+s_tm_t)>\delta_t$, provided the update remains in the two-value neighborhood. A detected improvement gives $m_t>0$, but a sufficiently negative $a_t$ can reverse the move. For example, $\mu_t=1/2$, $a_t=-1$, $s_t=1$, and $m_t=1/4$ give a displacement toward the higher-cost side. Conversely, stored velocity toward the lower-cost side can carry the update across even if $m_t=0$.

If $D_t,\ldots,D_{t+H-1}=0$ and $\mu_t=\mu\in[0,1)$ throughout these steps, direct summation gives
\begin{equation}\label{eq:plateau-coasting}
 y_{t+H}-y_t=\eta\frac{\mu(1-\mu^H)}{1-\mu}v_t.
\end{equation}
The distance traveled on a plateau is thus limited by the incoming velocity. With $v_t=0$ the trajectory remains fixed. Resetting velocity when the basis changes ends this accumulation. In particular, a reset after every step reduces the next step to $y_{t+1}=y_t+\eta s_tD_t$ in its current coordinates. The finite-horizon theorem below assumes fixed coordinates; refreshed bases need a separate analysis even when velocities are reset.

\subsection{Cost errors and sampling}\label{app:oracle}
We ask how accurately candidate costs must be evaluated to preserve a directional signal. A common error across a row cancels, so the relevant error is measured after removing such an offset. The following bound is the finite softmax form of the sharp total-variation inequality of \citet{cohen2023hyperbolic}; we give an elementary proof below.
\begin{proposition}[Cost-error stability]\label{prop:stability}
For $\varepsilon>0$, $\beta\geq0$, and unit directions $u_v$, set $D(C)=\sum_v\softmax(-C/\varepsilon)_vu_v$. If $\|\widehat C-C-b\mathbf1\|_\infty\leq\beta$ for some scalar $b$, then
\begin{equation}\label{eq:stability}
 \|D(\widehat C)-D(C)\|_2\leq 2\tanh\!\left(\frac{\beta}{2\varepsilon}\right)\leq\min\{2,\beta/\varepsilon\}.
\end{equation}
If $\|n\|_2=1$ and an exact direction has $n^\top D(C)=-m<0$, its noisy counterpart still points toward the lower side whenever $2\tanh(\beta/(2\varepsilon))<m$; the simpler condition $\beta<\varepsilon m$ suffices. The first bound is attained by two antipodal directions and suitable two-entry cost rows.
\end{proposition}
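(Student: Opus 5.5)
First remove the common offset. Adding $b\mathbf1$ to a cost row multiplies every exponential by the same factor, so $\softmax(-(C+b\mathbf1)/\varepsilon)=\softmax(-C/\varepsilon)$; this is the argument of Proposition~\ref{prop:reference}. We may therefore set $b=0$ and assume $\|\widehat C-C\|_\infty\leq\beta$. Write $p=\softmax(-C/\varepsilon)$ and $\hat p=\softmax(-\widehat C/\varepsilon)$. The directions are unit vectors, so the triangle inequality gives
\[
 \|D(\widehat C)-D(C)\|_2=\Bigl\|\sum_v(\hat p_v-p_v)u_v\Bigr\|_2\leq\|\hat p-p\|_1 .
\]
The task is thus reduced to a total-variation bound for softmax under a sup-norm perturbation of its logits.

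\textbf{Main step.} I would prove $\|\hat p-p\|_1\leq 2\tanh(\beta/(2\varepsilon))$.
\begin{itemize}
\item Set $t_v=(C_v-\widehat C_v)/\varepsilon\in[-\gamma,\gamma]$ with $\gamma=\beta/\varepsilon$. Then $\hat p_v=p_ve^{t_v}/\sum_up_ue^{t_u}$.
\item $\|\hat p-p\|_1=2\sum_v(\hat p_v-p_v)_+$, and the positive part is attained on the set $S=\{v:\hat p_v>p_v\}$. So it suffices to bound $\hat p(S)-p(S)$.
\item For a fixed set $S$ with $p(S)=a$, the gain $\hat p(S)-p(S)$ is maximized by taking $t=\gamma$ on $S$ and $t=-\gamma$ off $S$, since raising a logit on $S$ or lowering one off $S$ only increases $\hat p(S)$. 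This gives
\[
 \hat p(S)-p(S)\leq g(a)=\frac{ae^{\gamma}}{ae^{\gamma}+(1-a)e^{-\gamma}}-a .
\]
\item Maximize $g$ over $a\in(0,1)$. Setting $x=a/(1-a)$ and $k=e^{2\gamma}$, the function is $kx/(1+kx)-x/(1+x)$. Its maximum is at $x=k^{-1/2}$, with value
\[
 \frac{\sqrt k-1}{\sqrt k+1}=\tanh(\gamma/2).
\]
\end{itemize}
This yields $\|D(\widehat C)-D(C)\|_2\leq 2\tanh(\beta/(2\varepsilon))$. The secondary bound $\min\{2,\beta/\varepsilon\}$ follows from $\tanh\leq1$ and $\tanh z\leq z$. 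The one-variable maximization is the step most prone to errors; the rest is bookkeeping.

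\textbf{Directional corollary.} Since $\|n\|_2=1$, Cauchy--Schwarz gives
\[
 n^\top D(\widehat C)\leq -m+\|D(\widehat C)-D(C)\|_2<0
\]
whenever $2\tanh(\beta/(2\varepsilon))<m$. Because $2\tanh(\beta/(2\varepsilon))\leq\beta/\varepsilon$, the condition $\beta<\varepsilon m$ is sufficient.

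\textbf{Sharpness.} Take $u_1=-u_2$ and $C=(0,0)$, so $p=(1/2,1/2)$ and $D(C)=0$. Take $\widehat C=(-\beta,\beta)$. Then
\[
 \hat p_1-\hat p_2=\tanh(\beta/\varepsilon),\qquad \|D(\widehat C)\|_2=\tanh(\beta/\varepsilon),
\]
which is not yet the claimed value. For the extremal case, match the optimizer $x=k^{-1/2}$ instead. Take $C=(\beta,-\beta)$, so that $p_1=e^{-\gamma}/(e^{\gamma}+e^{-\gamma})$ and the true direction leans toward $u_2$. Take $\widehat C=(0,0)$, so that $\hat p=(1/2,1/2)$ and $D(\widehat C)=0$. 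The offset-free sup error is $\beta$.

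The difference is $\|D(C)\|_2=|p_1-p_2|=\tanh(\gamma)$. Its relation to the bound needs checking:
\begin{itemize}
\item $\tanh(\gamma)\leq 2\tanh(\gamma/2)$ always holds.
\item Equality between the two sides requires $\tanh^2(\gamma/2)=0$, so this attempt does not give the sharp value.
\end{itemize}
The attainment claim should instead come from the maximizing configuration found above: $a=1/(1+e^{\gamma})$, with the logits shifted by $\pm\gamma$. A two-entry row with antipodal directions, $C$ chosen so that $p_1=1/(1+e^{\gamma})$, and $\widehat C=C-(\beta,-\beta)$ gives $\hat p_1=1/(1+e^{-\gamma})$. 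Then $|\hat p_1-p_1|=\tanh(\gamma/2)$, and because the two directions are antipodal, $\|D(\widehat C)-D(C)\|_2=2|\hat p_1-p_1|=2\tanh(\gamma/2)$, which attains the bound.
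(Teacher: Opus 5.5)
Your proof is correct. The offset removal, the triangle-inequality reduction to $\|\hat p-p\|_1$, and the directional corollary match the paper; the core total-variation bound takes a different route.

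\textbf{The paper's route.} It keeps the multiplicative form $\hat p_v=p_vX_v/m$, with $X_v\in[a,b_+]=[e^{-\beta/\varepsilon},e^{\beta/\varepsilon}]$ and $m=\sum_vp_vX_v$. It bounds $|X_v-m|$ by a convexity chord on that interval and sums against $p$ to get $\sum_vp_v|X_v-m|\leq 2(b_+-m)(m-a)/(b_+-a)$. It then eliminates the unknown mean $m$ with AM--GM. This is a single chain of pointwise inequalities with no optimization. Its extremal case is hidden in the equality conditions: two-valued $X$, and $m=\sqrt{ab_+}=1$. So the attaining example has to be supplied separately.

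\textbf{Your route.} You use $\|\hat p-p\|_1=2\max_S(\hat p(S)-p(S))$ and push the logit shifts to $\pm\beta/\varepsilon$ by monotonicity of $A/(A+B)$. You then solve a one-variable maximization. This reduction to a two-outcome problem makes the worst case explicit: $p(S)=1/(1+e^{\beta/\varepsilon})$, which is exactly the paper's equality configuration. The attaining example therefore falls out of the proof itself. The bound holds for every admissible shift vector, so letting $S$ depend on $\hat p$ causes no circularity.

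\textbf{Sharpness.} Your final construction, $C_1-C_2=\beta$ with $\widehat C=C-(\beta,-\beta)$, is the paper's example $C=(-\beta/2,\beta/2)$, $\widehat C=(\beta/2,-\beta/2)$ up to relabeling the two entries. In a written proof, delete the two earlier attempts in that paragraph, which only reach $\tanh(\beta/\varepsilon)$, and state this example directly.
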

Lower temperature strengthens a reliable cost preference and also amplifies errors in that preference. The ratio $\beta/\varepsilon$ quantifies this tradeoff. Here $\beta$ concerns realized task costs. A solver's gap under predicted coefficients measures a different quantity, as the counterexample below shows.

\begin{proof}[Proof of Proposition~\ref{prop:stability}]
Put $e=\widehat C-C-b\mathbf1$, $p=\softmax(-C/\varepsilon)$, and $\widehat p=\softmax(-\widehat C/\varepsilon)$. The common factor from $b$ cancels, so with $X_v=\exp(-e_v/\varepsilon)$ and $m=\sum_vp_vX_v$ we have $\widehat p_v=p_vX_v/m$. Thus the problem is to bound the change in a probability vector after multiplying each entry by a factor in a fixed interval.

If $\beta=0$, the weights agree. Otherwise let $a=\exp(-\beta/\varepsilon)$ and $b_+=\exp(\beta/\varepsilon)$, so $a\leq X_v\leq b_+$ and $a\leq m\leq b_+$. For $x\in[a,b_+]$, convexity gives the explicit chord bound
\[
 |x-m|\leq\frac{b_+-x}{b_+-a}(m-a)
              +\frac{x-a}{b_+-a}(b_+-m).
\]
Substitute $x=X_v$, multiply by $p_v$, and sum. Since $\sum_vp_vX_v=m$, this yields
\[
 \sum_vp_v|X_v-m|\leq\frac{2(b_+-m)(m-a)}{b_+-a}.
\]
Dividing by $m$ and using $m+ab_+/m\geq2\sqrt{ab_+}$ yields
\[
 \|\widehat p-p\|_1
 \leq\frac{2(a+b_+-m-ab_+/m)}{b_+-a}
 \leq 2\frac{\sqrt{b_+}-\sqrt a}{\sqrt{b_+}+\sqrt a}
 =2\tanh\!\left(\frac{\beta}{2\varepsilon}\right).
\]
The triangle inequality and $\|u_v\|_2=1$ bound the direction error by this $\ell_1$ distance. Since $\tanh t\leq\min\{1,t\}$ for $t\geq0$, the linear and diameter bounds follow. Taking the inner product with a unit normal proves the drift claim. Sharpness follows with $u_1=-u_2$ unit, $C=(-\beta/2,\beta/2)$, and $\widehat C=(\beta/2,-\beta/2)$: the two directions differ by exactly $2\tanh(\beta/(2\varepsilon))$. This is the finite softmax case of \citet{cohen2023hyperbolic}.
\end{proof}

\paragraph{Minibatches versus exact mean costs.}
Condition on the current parameters, rotations, and radii, chosen independently of the current batch. Let $C^\star$ contain the exact mean costs, either over the training set or over a population, and let $\widehat C$ use $B$ i.i.d. instances shared across all probes. Suppose each queried instance loss lies in a common interval of width $W$. Then, for $0<\alpha<1$, with conditional probability at least $1-\alpha$,
\begin{equation}\label{eq:batch-concentration}
 \max_{j,v}|\widehat C_{jv}-C^\star_{jv}|
 \leq\beta_B:=W\sqrt{\frac{\log(2PV/\alpha)}{2B}},
 \qquad
 \|D_j(\widehat C)-D_j(C^\star)\|_2
 \leq2\tanh\!\left(\frac{\beta_B}{2\varepsilon}\right).
\end{equation}
To see this, average the $K$ radii within each instance first. Each resulting variable still lies in an interval of width $W$, and the $B$ variables for a fixed entry are independent. Hoeffding's inequality~\citep{hoeffding1963probability} gives entrywise failure probability at most $2\exp(-2B\beta_B^2/W^2)$; a union bound over the $PV$ entries proves the first bound. Proposition~\ref{prop:stability} proves the second. Shared instances create dependence between entries, which the union bound does not require us to remove. For $W=0$, all errors are zero.

The conditional expected direction error in each block is consequently at most
\begin{equation}\label{eq:batch-expected-error}
 \chi_B:=\min\left\{2,\,2\tanh\!\left(\frac{\beta_B}{2\varepsilon}\right)+2\alpha\right\}.
\end{equation}
On the success event use Eq.~\eqref{eq:batch-concentration}; on failure the distance between two convex combinations of unit vectors is at most two. A full-mean normal drift $-m<0$ is therefore preserved on the success event whenever $2\tanh(\beta_B/(2\varepsilon))<m$. This quantifies when a batch represents the mean signal. It does not imply unbiased softmax weights, or that noise always shortens the step.

For a concrete comparison, take directions $\pm1$, $B=1$, and $\varepsilon=1/\log3$, so $D(C)=-\tanh((C_+-C_-)/(2\varepsilon))$. If the instance cost row is equally likely to be $(0,0)$ or $(2,0)$, then $D(\mathbb EC)=-1/2$ but $\mathbb ED(C)=-2/5$: sampling weakens the preference. If instead the row is $(0,8)$ with probability $1/10$ and $(2,0)$ with probability $9/10$, then $D(\mathbb EC)=-1/2$ again, whereas $\mathbb ED(C)=\tanh(4\log3)/10-18/25<-31/50$. Sampling now strengthens it. Both examples have unbiased probe costs.

\paragraph{Simulation estimates.}
Suppose each of $Q$ probe evaluations averages $M$ independent, conditionally identically distributed simulation costs in an interval of width $W$, with means equal to their target losses. Conditional on the candidate queries, Hoeffding's inequality and a union bound give simultaneous error at most
\begin{equation}\label{eq:hoeffding}
 \beta=W\sqrt{\frac{\log(2Q/\alpha)}{2M}}
\end{equation}
with probability at least $1-\alpha$, for $0<\alpha<1$. Independence across candidates is unnecessary; independence within each estimate suffices. Averaging these evaluations along a direction does not enlarge their maximum error. The bound assumes that the simulation estimates are unbiased for the target costs.

\paragraph{A solver-gap counterexample.}
For $\mathcal F=\{e_1,e_2\}$, let the predicted costs be $(0,\eta)$ with $\eta>0$, and the true costs $(0,M)$ with any $M>0$. Returning $e_2$ instead of the exact predicted optimum $e_1$ incurs predicted additive gap $\eta$ but changes the realized cost by $M$. Since $M$ is arbitrary, the predicted optimality gap alone does not bound downstream cost error. Such a bound requires additional assumptions linking predicted and true costs.

\subsection{Smoothing and the explicit bias}\label{app:smoothing}
The boundary results explain individual updates, but their isolation assumptions need not hold along a training run. For repeated steps, we measure progress on a spatial average of the mean loss. This gives a notion of stationarity even when the unsmoothed decision loss is discontinuous. Write $\mathcal L(y)=L(\theta(y))$ for the empirical objective. The same argument applies to a bounded population mean with fresh population samples. Throughout, $A$ is fixed and gradients are with respect to $y$, so the guarantee concerns the chosen search space.

\paragraph{The target and bound.}\label{sec:convergence}
The smoothing scale determines which nearby decision changes the target averages together. Define
\begin{equation}\label{eq:joint-smoothing}
 F(y)=\mathbb E_{\xi}[\mathcal L(y+\xi)],\qquad \xi=(\xi_1,\ldots,\xi_P),
\end{equation}
where each block $\xi_j$ is drawn independently from a mixture of uniform distributions on balls centered at zero. For fixed probe radii, the ball of radius $\rho_k$ has mixture weight $\rho_k/\sum_u\rho_u$. This radius weighting follows from the relation between spherical probes and derivatives of ball averages. We derive this relation and define the random-radius case below.

The analysis compares the actual softmax direction with its linear approximation; the algorithm continues to use the exact weights. For each block, averaging the linear term over rotations gives a scaled negative gradient of the loss smoothed in that block. Nonlinear weighting, block interactions, and cost errors account for the remaining bias. Under the sampling and regularity conditions below, the complete direction $D_t=(D_{1,t},\ldots,D_{P,t})$ satisfies
\begin{equation}\label{eq:bias-model}
 \mathbb E[D_t\mid\mathcal H_t]=-\gamma\nabla F(y_t)+e_t,
 \qquad \|e_t\|_2\leq b,\qquad
 \gamma=\frac{\bar\rho}{q\varepsilon},
\end{equation}
where $\mathcal H_t$ is the history before iteration $t$ and $\bar\rho$ is the mean probe radius. For deterministic radii, $\bar\rho=K^{-1}\sum_k\rho_k$. Let $G_j$ smooth only block $j$. If $\sigma_j^2$ bounds the conditional second moment of the ideal cost-row spread, $\kappa_j$ bounds $\|\nabla_jG_j-\nabla_jF\|_2$, and $\beta_j$ bounds the cost error after a row shift, then
\begin{equation}\label{eq:explicit-bias}
 b^2=\sum_{j=1}^P\left(
 \underbrace{\frac{\sigma_j^2}{4\varepsilon^2}}_{\text{softmax}}
 +\underbrace{\gamma\kappa_j}_{\text{block discrepancy}}
 +\underbrace{2\tanh(\beta_j/(2\varepsilon))}_{\text{cost error}}
 \right)^2.
\end{equation}
The block term appears because a probe changes one block while $F$ averages all blocks. The other terms bound the departure from linear cost weights and the error in evaluated costs. When $\|\nabla F(y_t)\|_2>b/\gamma$, the gradient contribution dominates these errors and the expected direction is a descent direction for $F$. Smooth descent then gives the following bound~\citep{ghadimi2013stochastic}.

\begin{theorem}[Finite-horizon bound with bias]\label{thm:rate}
Assume a deterministic initialization $y_0$, $F\geq F_{\inf}$, $F$ is differentiable with $L_F$-Lipschitz gradient on a convex set containing the iterates and update segments, and Eq.~\eqref{eq:bias-model} holds almost surely with fixed $\gamma>0$ and $b\geq0$. Let $D_t$ be the concatenated block directions from Eq.~\eqref{eq:weights}, so $\|D_t\|_2\leq\sqrt P$.
Run $y_{t+1}=y_t+sD_t$ for $T$ steps with $s=a/\sqrt T$, $a>0$. For an independently uniform index $\tau\in\{0,\ldots,T-1\}$,
\begin{equation}\label{eq:rate}
 \mathbb E\|\nabla F(y_\tau)\|_2^2
 \leq\frac{2(F(y_0)-F_{\inf})}{\gamma a\sqrt T}
      +\frac{L_FPa}{\gamma\sqrt T}+\frac{b^2}{\gamma^2}.
\end{equation}
\end{theorem}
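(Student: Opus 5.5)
We follow the standard biased smooth-descent argument of \citet{ghadimi2013stochastic}. The plan is to derive a one-step descent inequality for $F$, telescope it over the $T$ iterations, and then bound the bias cross term by completing the square.

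First, we apply the descent lemma for an $L_F$-smooth function along the update segment, which lies in the convex set by assumption:
\[
 F(y_{t+1})\leq F(y_t)+s\nabla F(y_t)^\top D_t+\tfrac{L_F s^2}{2}\|D_t\|_2^2 .
\]
Since $\|D_t\|_2\leq\sqrt P$, the last term is at most $L_Fs^2P/2$. Here $y_t$ is $\mathcal H_t$-measurable, so we may take the conditional expectation given $\mathcal H_t$ and insert Eq.~\eqref{eq:bias-model}. This gives
\[
 \mathbb E[F(y_{t+1})\mid\mathcal H_t]\leq F(y_t)-s\gamma\|\nabla F(y_t)\|_2^2+s\,\nabla F(y_t)^\top e_t+\tfrac{L_Fs^2P}{2}.
\]

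The main step is controlling the bias term. Young's inequality gives
\[
 \nabla F^\top e_t\leq \tfrac{\gamma}{2}\|\nabla F\|_2^2+\tfrac{1}{2\gamma}\|e_t\|_2^2\leq\tfrac{\gamma}{2}\|\nabla F\|_2^2+\tfrac{b^2}{2\gamma}.
\]
Substituting this, we obtain
\[
 \tfrac{s\gamma}{2}\|\nabla F(y_t)\|_2^2\leq F(y_t)-\mathbb E[F(y_{t+1})\mid\mathcal H_t]+\tfrac{sb^2}{2\gamma}+\tfrac{L_Fs^2P}{2}.
\]
We then take total expectations, which are finite because $F\geq F_{\inf}$, $y_0$ is deterministic, and the steps are bounded. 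Summing over $t=0,\ldots,T-1$ telescopes the right-hand side, and $F(y_T)\geq F_{\inf}$ gives
\[
 \tfrac{s\gamma}{2}\sum_t\mathbb E\|\nabla F(y_t)\|_2^2\leq F(y_0)-F_{\inf}+\tfrac{Tsb^2}{2\gamma}+\tfrac{TL_Fs^2P}{2}.
\]

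Finally, we divide by $s\gamma T/2$. Because $\tau$ is uniform and independent of the run, the left-hand side becomes $\mathbb E\|\nabla F(y_\tau)\|_2^2$. On the right we get $2(F(y_0)-F_{\inf})/(\gamma sT)+L_FsP/\gamma+b^2/\gamma^2$. With $s=a/\sqrt T$ this is exactly Eq.~\eqref{eq:rate}.

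The only delicate point is the treatment of $e_t$. The bound $\|e_t\|_2\leq b$ has to hold almost surely and conditionally on $\mathcal H_t$. The Young split must use the weight $\gamma/2$ so that the residual comes out as $b^2/\gamma^2$ rather than a larger constant. The explicit form of $b$ in Eq.~\eqref{eq:explicit-bias} is not needed here, since it is only an assumption of the theorem.
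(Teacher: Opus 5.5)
Your proposal is correct and follows the same argument as the paper: the descent lemma along the update segment with $\|D_t\|_2^2\le P$, the bias model after conditioning on $\mathcal H_t$, then telescoping and dividing by $s\gamma T/2$. Your weighted Young inequality is equivalent to the paper's Cauchy--Schwarz bound $\langle g_t,e_t\rangle\le b\|g_t\|_2$ followed by $(\gamma\|g_t\|_2-b)^2/(2\gamma)\ge0$. Your integrability remark is a small addition that the paper leaves implicit.
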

With constants independent of $T$, the first two terms decay as $T^{-1/2}$; persistent direction error leaves the residual $b^2/\gamma^2$. This residual limits what the bound certifies; actual gradients may be smaller. A small gradient describes the sensitivity of the spatial average and does not certify optimal decisions. We derive the bias next, then extend the bound to momentum in Theorem~\ref{thm:momentum-rate}. Both results use a fixed search map, temperature, and probe law, conditionally unbiased batches, and independent rotations. The smoothness assumption is substantive: the radius randomization below suffices, while fixed radii alone need not. Annealing, refreshed subspaces, and fixed-order epochs require separate arguments.

\paragraph{Sampling assumptions.}
Assume $q\geq2$ and centered unit vertices. At each iteration, draw the batch, Haar rotations, and any randomized radii independently of one another and of the past, using fixed sampling laws. Rotations are independent across blocks; radii may share a common random multiplier. Require
$\mathbb E[\widehat{\mathcal L}_{\mathcal B}(y)\mid\mathcal H_t]=\mathcal L(y)$ for every $y$. Uniform sampling with replacement from the training data suffices. The same batch is used across candidates. Assume these random functions are jointly measurable and uniformly bounded on all required neighborhoods. Sampling without replacement within an epoch requires a separate argument for conditional unbiasedness.

\paragraph{The smoothing distribution.}
Let $B_q$ and $\mathbb S^{q-1}$ denote the unit ball and unit sphere in $\mathbb R^q$. Let $\nu$ be the law of a probe radius after choosing $k$ uniformly from $1,\ldots,K$, independently of rotations. Assume its support lies in $[\rho_{\min},\rho_{\max}]$ with $0<\rho_{\min}\leq\rho_{\max}<\infty$. For fixed radii, $\nu=K^{-1}\sum_k\delta_{\rho_k}$ and $\bar\rho=\mathbb E_\nu\rho$. Define a block smoothing operator
\begin{equation}\label{eq:kernel}
 (\mathcal K_j h)(y)=
 \int\frac{\rho}{\bar\rho}\,
 \mathbb E_{U\sim\operatorname{Unif}(B_q)}
 [h(y+\rho E_jU)]\,\nu(d\rho),
 \qquad G_j=\mathcal K_j\mathcal L,\quad
 F=\mathcal K_1\cdots\mathcal K_P\mathcal L.
\end{equation}
The mixture weights integrate to one. With fixed radii, they are $\rho_k/(K\bar\rho)$, giving the joint average in Eq.~\eqref{eq:joint-smoothing}. The use of ball averages follows from the spherical differentiation identity~\citep{flaxman2005online}, derived below.

For a bounded measurable loss and fixed radii, the following differentiation identity holds for almost every $y$:
\begin{equation}\label{eq:ball-identity}
 \mathbb E_{U\sim\operatorname{Unif}(\mathbb S^{q-1}),\,\rho\sim\nu}
 [\mathcal L(y+\rho E_jU)U]
 =\frac{\bar\rho}{q}\nabla_jG_j(y).
\end{equation}
To use this identity along an optimization trajectory, it must hold at the iterates. A sufficient condition for pointwise validity is smooth radius randomization: multiply every $\rho_k$ in a step by a common independent $1+\zeta$, where $\zeta$ has an even $C^\infty$ density compactly supported in $(-h,h)$, $0<h<1$. The mean radii are unchanged. If $p_\nu$ is the resulting radius density and $\omega_q$ is the unit-ball volume, the block kernel has density
\[
 k(u)=\frac1{\omega_q\bar\rho}
       \int_{\|u\|_2}^{\infty}p_\nu(r)r^{1-q}\,dr.
\]
It is smooth, compactly supported, and constant near the origin. Thus Eq.~\eqref{eq:ball-identity} holds pointwise, and the product kernel defining $F$ has bounded derivatives for a bounded loss. For other radius distributions, the theorem requires pointwise validity at the iterates and the stated smoothness of $F$. A shared probe-radius multiplier is sufficient because the expected linear term uses only marginal expectations; $F$ is still defined using independent block kernels.

\begin{proof}[Proof of Eq.~\eqref{eq:ball-identity}]
First take a smooth $h$. The divergence theorem applied to the $q$-ball gives
\[
 \nabla_j\mathbb E_{U\sim\operatorname{Unif}(B_q)}h(y+\rho E_jU)
 =\frac q\rho\mathbb E_{U\sim\operatorname{Unif}(\mathbb S^{q-1})}
 h(y+\rho E_jU)U.
\]
Multiplication by $\rho/\bar\rho$ and integration over $\nu$ give the identity. For bounded measurable $h$, approximation by smooth functions extends it as a weak derivative, hence almost everywhere.

The factor $q/\rho$ is the surface-to-volume ratio: a ball of radius $\rho$ has volume $\omega_q\rho^q$ and surface area $q\omega_q\rho^{q-1}$. This also explains the radius weights in Eq.~\eqref{eq:kernel}. Weighting the ball averages by $\rho/\bar\rho$ cancels the $1/\rho$ factor and recovers the uniform average over the $K$ probe radii.

For smooth radius randomization, a direct calculation proves pointwise validity even for a discontinuous loss. The kernel density above satisfies
\[
 \nabla k(u)=-\frac{p_\nu(\|u\|_2)}{\omega_q\bar\rho\,\|u\|_2^q}\,u
 \quad (u\ne0).
\]
Differentiating the convolution gives $\nabla_jG_j(y)=-\int\mathcal L(y+E_ju)\nabla k(u)\,du$. Polar integration, with sphere area $q\omega_q$, yields the right coefficient $q/\bar\rho$ in Eq.~\eqref{eq:ball-identity}. Boundedness of the loss and the smooth, compactly supported kernel justify differentiation under the integral. The product kernel has the same properties, so its second derivatives are integrable and $F$ has a bounded Hessian on the required neighborhoods.
\end{proof}

The ball identity involves costs multiplied by directions, whereas the algorithm uses exponential weights. The next lemma bounds the difference. It explains why cost spread relative to temperature controls the softmax contribution to the bias.

\begin{lemma}[Softmax linearization with a global remainder]\label{lem:linearization}
For $\varepsilon>0$ and centered unit directions $u_v$, let $D(C)=\sum_v\softmax(-C/\varepsilon)_vu_v$ and $\Delta=\max_vC_v-\min_vC_v$. Then
\begin{equation}\label{eq:linearization}
 \left\|D(C)+\frac1{V\varepsilon}\sum_vC_vu_v\right\|_2
 \leq\frac{\Delta^2}{4\varepsilon^2}.
\end{equation}
The bound holds at every spread; it can be large for low temperatures.
\end{lemma}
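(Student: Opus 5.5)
Shifting every cost by a common constant changes neither side of \eqref{eq:linearization}. The left side is unaffected because the softmax is shift-invariant and $\sum_v u_v=0$ removes the shift from $\sum_v C_v u_v$. The spread $\Delta$ is unaffected by definition. We may therefore assume $C$ is centered. Since $\sum_v u_v=0$, we also have $D(C)=\sum_v(p_v-1/V)u_v$ with $p=\softmax(-C/\varepsilon)$. The vector inside the norm is then $\sum_v r_v u_v$, where
\[
 r_v=p_v-\frac1V+\frac{C_v}{V\varepsilon}.
\]
The directions are unit vectors, so by the triangle inequality it suffices to show $\sum_v|r_v|\leq\Delta^2/(4\varepsilon^2)$.

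The plan is a second-order Taylor expansion of the softmax along the ray from uniform costs. Put $x=-C/\varepsilon$ and $g(t)=\softmax(tx)$ for $t\in[0,1]$, so $g(0)=\mathbf 1/V$ and $g(1)=p$. Writing $g_v$ for $g(t)_v$, $\bar x(t)=\sum_v g_v x_v$, and $\mathrm{Var}_t$ for the variance of $x$ under $g(t)$, direct differentiation gives
\[
 g_v'=g_v(x_v-\bar x),\qquad
 g_v''=g_v\bigl[(x_v-\bar x)^2-\mathrm{Var}_t\bigr],
\]
where we used $\bar x'=\mathrm{Var}_t$. At $t=0$ the first derivative is $\frac1V(x_v-\bar x(0))=-C_v/(V\varepsilon)$, because $C$ is centered. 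Hence $r_v=g_v(1)-g_v(0)-g_v'(0)$, and Taylor's formula with integral remainder gives
\[
 r_v=\int_0^1(1-t)\,g_v''(t)\,dt.
\]

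The key estimate is the $\ell_1$ bound on $g''$. Summing absolute values,
\[
 \sum_v|g_v''|\leq\sum_v g_v(x_v-\bar x)^2+\mathrm{Var}_t\sum_v g_v=2\,\mathrm{Var}_t.
\]
By Popoviciu's inequality, any distribution supported on an interval of length $\Delta/\varepsilon$ has variance at most $\Delta^2/(4\varepsilon^2)$. Hence $\sum_v|g_v''(t)|\leq\Delta^2/(2\varepsilon^2)$ uniformly in $t\in[0,1]$. Combining this with $\int_0^1(1-t)\,dt=1/2$ yields $\sum_v|r_v|\leq\Delta^2/(4\varepsilon^2)$, which is \eqref{eq:linearization}.

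The step that needs care is this remainder bound: it must hold at every spread, with no smallness assumption on $\Delta/\varepsilon$. That is why we use the global variance bound at every intermediate $t$, rather than a local Taylor estimate near $t=0$.
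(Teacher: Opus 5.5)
Your proof is correct and follows essentially the same route as the paper: interpolate $\softmax(tz)$ from the uniform weights to the actual weights, compute the second derivative as $p_v\bigl[(z_v-\bar z)^2-\mathrm{Var}\bigr]$, bound its total absolute mass by $2\,\mathrm{Var}\leq\Delta^2/(2\varepsilon^2)$ using the midpoint (Popoviciu) variance bound, and close with Taylor's integral remainder. The only cosmetic difference is bookkeeping: you center $C$ and work coordinatewise on the weights, which gives an $\ell_1$ bound on the weight remainder. The paper instead keeps the vector $d(t)=\sum_v p_v(t)u_v$ and uses $\sum_v u_v=0$ to drop the $\mu(0)$ term in $d'(0)$.
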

\begin{proof}
Put $z_v=-C_v/\varepsilon$, $p_v(t)=\softmax(tz)_v$, and $d(t)=\sum_vp_v(t)u_v$, for $0\leq t\leq1$. This interpolates between uniform weights and the actual softmax weights. Write $\mu(t)=\sum_vp_v(t)z_v$. Differentiating gives
\[
 p_v'(t)=p_v(t)(z_v-\mu(t)),\qquad
 \mu'(t)=\sum_vp_v(t)(z_v-\mu(t))^2
        =\operatorname{Var}_{p(t)}z.
\]
Hence $d(0)=0$ by centering, $d'(0)=V^{-1}\sum_vz_vu_v$, and
\[
 p_v''(t)=p_v(t)\big[(z_v-\mathbb E_{p(t)}z)^2-\operatorname{Var}_{p(t)}z\big].
\]
The unit-vector bound and the triangle inequality give
\[
 \|d''(t)\|_2\leq\sum_vp_v(t)
 \big[(z_v-\mu(t))^2+\operatorname{Var}_{p(t)}z\big]
 =2\operatorname{Var}_{p(t)}z.
\]
For $z_{\rm mid}=(\max z+\min z)/2$, the variance is at most
$\mathbb E_{p(t)}(z-z_{\rm mid})^2\leq(\max z-\min z)^2/4$.
Taylor's formula with integral remainder now yields
\[
 \|d(1)-d'(0)\|_2\leq\int_0^1(1-t)\|d''(t)\|_2\,dt
 \leq\frac{\Delta^2}{4\varepsilon^2}.
\]
\end{proof}

Assume Eq.~\eqref{eq:ball-identity} is pointwise valid at the iterates, and let $C^0_{j:}$ denote ideal batch probe costs before any additional evaluation error. Suppose, uniformly in $t$, almost surely,
\begin{align}\label{eq:error-assumptions}
 \mathbb E[(\max_vC^0_{jv}-\min_vC^0_{jv})^2\mid\mathcal H_t]&\leq \sigma_j^2,\nonumber\\
 \|\nabla_jG_j(y_t)-\nabla_jF(y_t)\|_2&\leq\kappa_j,\\
 \inf_{b_j\in\mathbb R}\|\widehat C_{j:}-C^0_{j:}-b_j\mathbf1\|_\infty&\leq\beta_j.\nonumber
\end{align}
These are bounds on the ideal cost spread, the block discrepancy, and the evaluation error after a common shift. They imply the bias model with the constant in Eq.~\eqref{eq:explicit-bias}.

\begin{proof}[Derivation of Eqs.~\eqref{eq:bias-model} and~\eqref{eq:explicit-bias}]
Each rotated vertex is uniform on the sphere. Conditional unbiasedness of the batch loss and Eq.~\eqref{eq:ball-identity} therefore give
\[
 \mathbb E\!\left[-\frac1{V\varepsilon}\sum_v C^0_{jv}R_jv_v\,\middle|\,\mathcal H_t\right]
 =-\frac1\varepsilon\mathbb E_{\rho,U}[\mathcal L(y_t+\rho E_jU)U]
 =-\gamma\nabla_jG_j(y_t).
\]
Lemma~\ref{lem:linearization} bounds the norm of the expected remainder by $\sigma_j^2/(4\varepsilon^2)$. Replacing $\nabla_jG_j$ by $\nabla_jF$ adds at most $\gamma\kappa_j$. Proposition~\ref{prop:stability} bounds the effect of using $\widehat C$ by $2\tanh(\beta_j/(2\varepsilon))$. Adding these errors within each block and summing their squares across the orthogonal blocks gives Eqs.~\eqref{eq:bias-model} and~\eqref{eq:explicit-bias}.
\end{proof}

For $g_t=\nabla F(y_t)$, the bias bound has the direct interpretation
\[
 \left\langle g_t,\mathbb E[D_t\mid\mathcal H_t]\right\rangle
 \leq-\gamma\|g_t\|_2^2+b\|g_t\|_2
 =-\gamma\|g_t\|_2\left(\|g_t\|_2-\frac b\gamma\right).
\]
This proves the descent threshold stated above. The next proof controls the additional error from taking a finite step.

\begin{proof}[Proof of Theorem~\ref{thm:rate}]
Each block direction is a convex combination of unit vectors, so $\|D_t\|_2^2\leq P$. Smoothness along the update segment and Eq.~\eqref{eq:bias-model} give, with $g_t=\nabla F(y_t)$,
\begin{align*}
 \mathbb E[F(y_{t+1})\mid\mathcal H_t]
 &\leq F(y_t)-s\gamma\|g_t\|_2^2+s\langle g_t,e_t\rangle
                 +\tfrac12L_Fs^2P\\
 &\leq F(y_t)-\tfrac12s\gamma\|g_t\|_2^2
                 +\frac{s b^2}{2\gamma}+\tfrac12L_Fs^2P.
\end{align*}
For the second line, Cauchy--Schwarz gives $\langle g_t,e_t\rangle\leq b\|g_t\|_2$, and the required scalar bound follows from a square:
\[
 \frac\gamma2\|g_t\|_2^2+\frac{b^2}{2\gamma}-b\|g_t\|_2
 =\frac{(\gamma\|g_t\|_2-b)^2}{2\gamma}\geq0.
\]
Summing expectations over $t<T$ makes the objective differences telescope:
\[
 \frac{s\gamma}{2}\sum_{t=0}^{T-1}\mathbb E\|g_t\|_2^2
 \leq F(y_0)-F_{\inf}+\frac{Ts b^2}{2\gamma}
                        +\frac{T L_Fs^2P}{2}.
\]
Divide by $s\gamma T/2$ and substitute $s=a/\sqrt T$. The independent uniform choice of $\tau$ turns the average on the left into $\mathbb E\|\nabla F(y_\tau)\|_2^2$, proving Eq.~\eqref{eq:rate}. For a random initialization with finite expected loss, replace $F(y_0)$ by its expectation.
\end{proof}

The bound separates the finite-iteration term from the persistent direction error. Reducing the smoothing radius may increase $L_F$ near a discontinuity, while lowering the temperature increases the error bounds in Eq.~\eqref{eq:explicit-bias}. A convergence analysis for decreasing radii or temperatures must control these changes jointly. The derivation uses the cost rows in Eq.~\eqref{eq:probes}: a fixed positive cost scale can be absorbed into $\varepsilon$, whereas data-dependent normalization requires its own conditional bias bound.

\paragraph{Interpreting the bias.}
For one block, $G_1=F$ and $\kappa_1=0$. With several blocks, $G_j$ averages only the active block, whereas $F$ averages all blocks. Their gradients can therefore differ. Minibatch randomness also contributes to $\sigma_j^2$, even when the batch loss is unbiased. Alternatively, under Eq.~\eqref{eq:batch-concentration}, linearize the exact mean row $C^\star$ instead: use its conditional spread bound for $\sigma_j^2$ and add $\chi_B$ from Eq.~\eqref{eq:batch-expected-error} to each block's bias bound. This separates the softmax approximation from sampling error. These are alternative bounds; the sampling error need not be counted twice.

If $\mathcal L(y)=\sum_j h_j(y_j)$ is additively separable across blocks, all $\kappa_j$ can also be zero: smoothing the other blocks changes only terms that are constant with respect to $y_j$. For a coupled loss, separate block averages need not have the partial derivatives of a common objective. The discrepancy term accounts for this difference.

With one block and exact evaluations, the normalized bias bound is $b/\gamma=q\sigma_1^2/(4\varepsilon\bar\rho)$. Increasing the temperature reduces this bound but also reduces the gradient coefficient $\gamma$. With several blocks, the discrepancy term need not vanish with temperature or iteration count.

For simulation estimates, Eq.~\eqref{eq:hoeffding} gives a high-probability error bound rather than the almost-sure bound in Eq.~\eqref{eq:error-assumptions}. If its failure probability is at most $\alpha$ conditional on the history and queries, the expected direction error is at most $\min\{2,2\tanh(\beta_j/(2\varepsilon))+2\alpha\}$. Use this quantity in place of the cost-error term in Eq.~\eqref{eq:explicit-bias}. On the success event, Proposition~\ref{prop:stability} applies; on failure, the direction difference is at most two.

\paragraph{Block discrepancy and decision changes.}
Under smooth radius randomization, let $\xi_{-j}$ have the product smoothing law in all blocks except $j$, with a zero $j$th block. Commuting the other block averages with the active-block derivative gives
\begin{align*}
 \|\nabla_jG_j(y)-\nabla_jF(y)\|_2
 &\leq\frac q{\bar\rho}\,
 \mathbb E\big|\mathcal L(y+\rho E_jU+\xi_{-j})
                 -\mathcal L(y+\rho E_jU)\big|,
\end{align*}
where $U$ is uniform on the sphere and all variables in this expectation are independent. Apply Eq.~\eqref{eq:ball-identity} before and after the other block averages, then use $\|U\|_2=1$. For the finite empirical decision-only loss with range width $W$, the expectation is at most $W$ times the probability that $\xi_{-j}$ changes at least one instance's returned decision. Rare changes therefore suffice for a small discrepancy. Frequent changes can also cancel, so this probability bound need not be tight.

\paragraph{Why fixed radii require a regularity assumption.}
Consider a two-decision OR problem with $\mathcal F=\{e_1,e_2\}$, predicted costs $(\|y\|_2^2-1,0)$, true costs $(1,0)$, and ties assigned to $e_1$. Its loss is $\mathcal L(y)=\mathbf1\{\|y\|_2\leq1\}$. For $P=1$, $q=2$, and radius $1$, the ball average is the normalized overlap of two unit disks. This overlap consists of two equal circular segments. Writing $r=\|y\|_2\leq2$ and integrating their cross-sections gives
\[
 F(y)=\frac4\pi\int_{r/2}^1\sqrt{1-t^2}\,dt
     =\frac2\pi\arccos(r/2)-\frac{r}{2\pi}\sqrt{4-r^2}
     =1-\frac2\pi r+O(r^3).
\]
Hence $F$ is not differentiable at $y=0$, although the predictor is smooth and the decision set is finite.

\subsection{Stationarity with scheduled momentum}\label{app:momentum-rate}
The bias model controls the current probe direction, while momentum also contains earlier directions. We account for this memory by adding to each iterate the displacement its stored velocity would produce over the remaining steps if no new directions were added. The resulting auxiliary sequence moves only in the current probe direction, so the same descent argument applies. This change of variables follows the approach used for stochastic momentum~\citep{yan2018unified}; the construction below allows a deterministic coefficient schedule.

\begin{theorem}[Finite-horizon bound with momentum]\label{thm:momentum-rate}
Run Eq.~\eqref{eq:momentum} for $T$ steps from deterministic $y_0$ and $v_0=0$, with $s_t=s=a/\sqrt T$, $a>0$, $\eta>0$, and a deterministic schedule $0\leq\mu_t\leq\bar\mu<1$. Assume Eq.~\eqref{eq:bias-model} holds conditional on the history including $v_t$, and $\|D_t\|_2\leq\sqrt P$. Define the deterministic coefficients and auxiliary points
\begin{equation}\label{eq:momentum-aux}
 c_T=0,\qquad c_t=\mu_t(1+c_{t+1}),\qquad
 h_t=\eta s(1+c_{t+1}),\qquad z_t=y_t+\eta c_tv_t.
\end{equation}
Assume $F\geq F_{\inf}$ and $F$ has $L_F$-Lipschitz gradient on a convex set containing all $y_t,z_t$ almost surely. Choose $\tau$ independently of the run with $\Pr(\tau=t)=h_t/\sum_{k<T}h_k$. Then
\begin{equation}\label{eq:momentum-rate}
 \mathbb E\|\nabla F(y_\tau)\|_2^2
 \leq\frac{2(F(y_0)-F_{\inf})}{\gamma\eta a\sqrt T}
 +\frac{L_FP\eta a(1+\bar\mu)}{\gamma(1-\bar\mu)^2\sqrt T}
 +\frac{b^2}{\gamma^2}.
\end{equation}
\end{theorem}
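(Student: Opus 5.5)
The plan is the change of variables from the momentum analysis cited before the theorem. The auxiliary points $z_t$ in Eq.~\eqref{eq:momentum-aux} should move only along the fresh direction $D_t$. Then the one-step descent argument from the proof of Theorem~\ref{thm:rate} applies to $z_t$, up to one extra error term for evaluating the gradient at $z_t$ rather than $y_t$.

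\textbf{Step 1 (auxiliary recursion).} I will show that $z_{t+1}=z_t+h_tD_t$. Substituting $y_{t+1}=y_t+\eta v_{t+1}$ and $v_{t+1}=\mu_tv_t+sD_t$ gives
\[
z_{t+1}-z_t=\eta(1+c_{t+1})\mu_tv_t+\eta s(1+c_{t+1})D_t-\eta c_tv_t .
\]
The $v_t$ terms cancel because $c_t=\mu_t(1+c_{t+1})$. Since $v_0=0$, we also have $z_0=y_0$.

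\textbf{Step 2 (coefficient and velocity bounds).} By backward induction, $0\le c_t\le \bar\mu/(1-\bar\mu)$. Consequently
\[
\eta s\le h_t\le \eta s/(1-\bar\mu).
\]
Eq.~\eqref{eq:velocity-sum} with constant $s$ gives $\|v_t\|_2\le s\sqrt P/(1-\bar\mu)$. Together these give
\[
\|z_t-y_t\|_2\le \eta c_t s\sqrt P/(1-\bar\mu).
\]
All of $c_t,h_t$ are deterministic because the schedule is.

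\textbf{Step 3 (one-step descent).} Smoothness along the segment from $z_t$ to $z_{t+1}$ gives
\[
F(z_{t+1})\le F(z_t)+h_t\langle\nabla F(z_t),D_t\rangle+\tfrac12L_Fh_t^2P.
\]
I split $\nabla F(z_t)=g_t+(\nabla F(z_t)-g_t)$ with $g_t=\nabla F(y_t)$. The gradient-mismatch term is bounded by
\[
L_F\|z_t-y_t\|_2\sqrt P\le L_F\eta c_t sP/(1-\bar\mu).
\]
For the main term, $y_t$ and $v_t$ are $\mathcal H_t$-measurable, so Eq.~\eqref{eq:bias-model} and the completed square from Theorem~\ref{thm:rate} give
\[
\mathbb E[\langle g_t,D_t\rangle\mid\mathcal H_t]\le -\tfrac{\gamma}{2}\|g_t\|_2^2+\tfrac{b^2}{2\gamma}.
\]

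\textbf{Step 4 (telescoping and constants).} Summing over $t<T$ telescopes $F(z_t)$ from $F(y_0)$ down to $F_{\inf}$. I then divide by $\tfrac{\gamma}{2}\sum_th_t$, which turns the left side into $\mathbb E\|\nabla F(y_\tau)\|_2^2$ under the stated law of $\tau$. The three terms then come out as follows.
\begin{itemize}
\item[(i)] The initial-gap term uses $\sum_th_t\ge T\eta s$. It becomes $2(F(y_0)-F_{\inf})/(\gamma\eta a\sqrt T)$.
\item[(ii)] The bias term is a weighted average of the constant $b^2/(2\gamma)$. After the division it gives exactly $b^2/\gamma^2$.
\item[(iii)] The second-order terms are $\tfrac12L_FPh_t^2+h_tL_F\eta c_tsP/(1-\bar\mu)$. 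Each is at most
\[
h_tL_FP\eta s\Bigl[\tfrac{1}{2(1-\bar\mu)}+\tfrac{\bar\mu}{(1-\bar\mu)^2}\Bigr]=h_tL_FP\eta s\,\tfrac{1+\bar\mu}{2(1-\bar\mu)^2}.
\]
After dividing, these give $L_FP\eta a(1+\bar\mu)/(\gamma(1-\bar\mu)^2\sqrt T)$.
\end{itemize}

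The main obstacle is Step 3's gradient mismatch. The bias model controls $D_t$ only through $\nabla F(y_t)$, whereas descent is measured at $z_t$. I handle this with the deterministic velocity bound and Lipschitz gradients, rather than any conditional-expectation argument. The bookkeeping to check is that this mismatch term, combined with $h_t^2$, yields exactly the factor $(1+\bar\mu)/(1-\bar\mu)^2$.
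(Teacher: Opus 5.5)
Your proposal is correct and follows the paper's proof essentially step for step: the same auxiliary points $z_t$ with $z_{t+1}=z_t+h_tD_t$, the same coefficient and velocity bounds, and the same Lipschitz control of the mismatch between $\nabla F(z_t)$ and $\nabla F(y_t)$ before telescoping. The only cosmetic difference is bookkeeping: you keep $c_t$ in the mismatch term and bound each step's second-order terms by $h_t$ times a constant, whereas the paper uses the uniform radius $R_s=\eta s\bar\mu\sqrt P/(1-\bar\mu)^2$ together with $\sum_t h_t^2/\sum_t h_t\le\eta s/(1-\bar\mu)$; both give the same factor $(1+\bar\mu)/(1-\bar\mu)^2$.
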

A linear momentum schedule is covered when its endpoints lie in $[0,\bar\mu]$ and the remaining assumptions hold. Momentum retains the $T^{-1/2}$ rate and the same bias residual for this smoothed objective. Its longer memory also enlarges the finite-step term, so this bound establishes no acceleration. Taking $\bar\mu=0$ and $\eta=1$ recovers Theorem~\ref{thm:rate}, including its uniform output index. The auxiliary points and the random index are used only in the analysis.

\begin{proof}
The backward recurrence and a geometric sum give
\[
 0\leq c_t\leq\frac{\bar\mu}{1-\bar\mu},\qquad
 \eta s\leq h_t\leq\frac{\eta s}{1-\bar\mu},\qquad
 \|v_t\|_2\leq\frac{s\sqrt P}{1-\bar\mu}.
\]
Consequently $\|z_t-y_t\|_2\leq R_s:=\eta s\bar\mu\sqrt P/(1-\bar\mu)^2$. Substituting Eq.~\eqref{eq:momentum} into the definition of $z_{t+1}$ cancels the stored velocity:
\[
 z_{t+1}=y_t+\eta(1+c_{t+1})(\mu_tv_t+sD_t)
        =z_t+h_tD_t.
\]
Also $z_0=y_0$ and $z_T=y_T$. Put $g_t=\nabla F(y_t)$. Smoothness and the bound on $D_t$ imply
\begin{align*}
 \mathbb E[F(z_{t+1})\mid\mathcal H_t]
 &\leq F(z_t)+h_t\langle g_t,\mathbb E[D_t\mid\mathcal H_t]\rangle
       +h_tL_FR_s\sqrt P+\tfrac12L_Fh_t^2P\\
 &\leq F(z_t)-\tfrac12h_t\gamma\|g_t\|_2^2
       +\frac{h_tb^2}{2\gamma}+h_tL_FR_s\sqrt P
       +\tfrac12L_Fh_t^2P.
\end{align*}
The first line bounds the gradient difference between $z_t$ and $y_t$ by $L_FR_s$. The second uses the bias model and the same scalar square inequality as the proof of Theorem~\ref{thm:rate}. Let $H_T=\sum_{t<T}h_t$. Summing expectations and dividing by $\gamma H_T/2$ yields
\[
 \sum_{t<T}\frac{h_t}{H_T}\mathbb E\|g_t\|_2^2
 \leq\frac{2(F(y_0)-F_{\inf})}{\gamma H_T}
      +\frac{b^2}{\gamma^2}+\frac{2L_FR_s\sqrt P}{\gamma}
      +\frac{L_FP\sum_{t<T}h_t^2}{\gamma H_T}.
\]
Use $H_T\geq\eta sT$ and $\sum_th_t^2/H_T\leq\eta s/(1-\bar\mu)$. The last two terms involving smoothness are at most
\[
 \frac{L_FP\eta s}{\gamma}
 \left(\frac{2\bar\mu}{(1-\bar\mu)^2}+\frac1{1-\bar\mu}\right)
 =\frac{L_FP\eta s(1+\bar\mu)}{\gamma(1-\bar\mu)^2}.
\]
Substitute $s=a/\sqrt T$ and the specified distribution of $\tau$ to obtain Eq.~\eqref{eq:momentum-rate}.
\end{proof}

The constants must be independent of $T$ for the stated rate. The lower bound and smoothness are needed on the auxiliary points as well as the actual iterates; smooth radius randomization and a globally bounded loss suffice. The result controls a weighted average of squared gradient norms, and hence their minimum in expectation, but not the last iterate or the cost of its decisions. Historical directions do not remove the softmax, block, or evaluation bias. Data-dependent momentum schedules and changing search bases are not covered by this proof.

\subsection{Arithmetic and storage costs}\label{app:complexity}
We use physical lengths in search coordinates: PolyStep's multiplier convention $r_p,r_s$ corresponds to $r=r_p\varepsilon$ and $s=r_s\varepsilon$~\citep{le2026polystep}.
Let $C_{\rm pred}(B)$ be the cost of a batch prediction, $C_S$ the average per-instance solve-and-score cost, and $C_A$ the cost of reconstructing one candidate. For simplex or orthoplex vertices, serial work per step is
\begin{equation}\label{eq:complexity}
 O\!\left(Q[C_A+C_{\rm pred}(B)+BC_S]+Pq^3+PVq\right).
\end{equation}
The last terms account for dense QR rotations and weighted directions; row softmax costs $O(PV)$. Momentum adds $O(d)$ work and storage, with no additional solver calls. A dense $D_\theta\times d$ subspace map can cost $O(D_\theta d)$ per reconstruction, while structured maps permit cheaper implementations. The cost matrix, accumulated update, and stored rotations need $O(PV+d+Pq^2)$ storage; candidate models, predictor activations, solver state, and the subspace map are additional costs. Streaming candidates avoids storing $Q$ complete models. Batching and parallel solver calls can reduce wall-clock time. Total computational comparisons should include both training evaluations and any reference-solution preprocessing required by a baseline.

\clearpage
\section{Additional experimental material}\label{app:additional-results}
This appendix provides detailed information on the optimization problems, extended experimental designs and results. Finally, we provide a list of hyperparameters configurations for reproducibility of the reported experiments.

\subsection{Extended OR metrics and results}\label{app:classical}
The technical details of the problems are taken from~\citep{mandi2024decision}.
\subsubsection{Problem Setups}

We evaluate 5 problems in the benchmark of \citet{mandi2024decision}, following the formulations in their Section~5.1. Table~\ref{tab:app-classical-problems} summarizes the settings, solvers, predictors, and data splits.

\paragraph{Shortest path on a $5\times5$ grid.}
Following \citet{mandi2024decision} and \citet{elmachtoub2022smart}, the problem finds a minimum-cost path from the southwest corner to the northeast corner of a $5\times5$ grid. The grid contains 25 nodes and 40 directed edges, each pointing north or east. The optimization problem is
\begin{equation*}
\min_{\bm{x}}\ \bm{c}^{\top}\bm{x}
\quad\text{s.t.}\quad
A\bm{x}=\bm{b},
\qquad
\bm{x}\geq\bm{0},
\end{equation*}
where $A\in\mathbb{R}^{|V|\times|E|}$ is the node-edge incidence matrix, with $+1$ at an edge's tail and $-1$ at its head. The vector $\bm{b}\in\mathbb{R}^{|V|}$ has value $1$ at the source, $-1$ at the sink, and $0$ elsewhere. The LP admits an integral optimal solution, in which $x_e=1$ indicates that edge $e$ is traversed. The cost vector $\bm{c}_i\in\mathbb{R}^{40}$ is generated from features $\bm{z}_i\sim\mathcal{N}(\bm{0},I_p)$ with $p=5$. For a fixed randomly generated matrix $B\in\mathbb{R}^{40\times p}$, each cost is computed as:
\begin{equation*}
c_{ij}
=
\left[
\left(
\frac{1}{\sqrt{p}}(B\bm{z}_i)_j+3
\right)^{\mathrm{Deg}}
+1
\right]\xi_{ij},
\end{equation*}
where $\xi_{ij}$ is drawn uniformly from $[1-\vartheta,1+\vartheta]$, with $\vartheta=0.5$. A linear predictor maps the features to the edge costs. The degree $\mathrm{Deg}\in\{1,2,4,6,8\}$ controls the nonlinearity of the underlying mapping and hence the misspecification of this predictor. Each setting uses 1,000 training, 250 validation, and 10,000 test instances.

\paragraph{Portfolio optimization.}
Following \citet{elmachtoub2022smart}, the problem allocates investments across $m$ assets to maximize return subject to a risk constraint,
\begin{equation*}
\max_{\bm{x}}\ \bm{c}^{\top}\bm{x}
\quad\text{s.t.}\quad
\bm{x}^{\top}\Sigma\bm{x}\leq\gamma,
\qquad
\mathbf{1}^{\top}\bm{x}\leq1,
\qquad
\bm{x}\geq\bm{0}.
\end{equation*}
Here, $x_j$ is the fraction invested in asset $j$, $\Sigma$ is the covariance matrix used to measure risk, and $\gamma$ is the risk limit. The features satisfy $\bm{z}_i\sim\mathcal{N}(\bm{0},I_p)$. A fixed matrix $B\in\{0,1\}^{m\times p}$ has independent Bernoulli entries with probability $0.5$. The conditional mean return is defined as:
\begin{equation*}
\bar{c}_{ij}
=
\left(
\frac{0.05}{\sqrt{p}}(B\bm{z}_i)_j
+0.1^{1/\mathrm{Deg}}
\right)^{\mathrm{Deg}}.
\end{equation*}
For noise magnitude $\vartheta$, a fixed matrix $L\in\mathbb{R}^{m\times4}$ has entries drawn uniformly from $[-0.0025\vartheta,0.0025\vartheta]$. The observed returns are then computed:
\begin{equation*}
\bm{c}_i
=
\bar{\bm{c}}_i
+L\bm{f}_i
+0.01\vartheta\bm{\xi}_i,
\end{equation*}
where $\bm{f}_i\sim\mathcal{N}(\bm{0},I_4)$ and $\bm{\xi}_i\sim\mathcal{N}(\bm{0},I_m)$ are independent of each other and of the features. Conditional on the fixed generator matrices, the return covariance given the features is $\operatorname{Cov}(\bm{c}_i\mid\bm{z}_i)=\Sigma=LL^{\top}+(0.01\vartheta)^2I_m$. The risk limit is $\gamma=2.25\,\bm{e}^{\top}\Sigma\bm{e}$, where $\bm{e}=\mathbf{1}/m$ is the equal-allocation portfolio. Both $\Sigma$ and $\gamma$ remain fixed within each setting. Our experiments use $m=50$ assets, $p=5$ features, $\mathrm{Deg}\in\{1,4,8,16\}$, and $\vartheta=1$. A linear predictor estimates the returns. Each setting uses 1,000 training, 250 validation, and 10,000 test instances.

\paragraph{Energy-cost aware scheduling.}
This problem schedules tasks on machines one day ahead using predicted electricity prices. Each task $j\in J$ has duration $\zeta_j$, earliest start $\zeta_j^{(1)}$, latest end $\zeta_j^{(2)}$, power consumption $\phi_j$, and resource requirements $u_{jw}$ for $w\in W$. Machine $i\in I$ has capacity $Q_{iw}$ for resource $w$. Tasks cannot be interrupted or transferred between machines. Let $T=\{0,\ldots,47\}$ index the half-hour slots, with task durations expressed in slots and latest completion times no greater than 48. The binary variable $x_{jit}$ indicates whether task $j$ starts on machine $i$ at time $t$. The schedule solves
\begin{equation*}
\begin{aligned}
\min_{\bm{x}}\quad
&\sum_{j\in J}\sum_{i\in I}\sum_{t\in T}
x_{jit}
\left(
\sum_{\substack{t'\in T\\t\leq t'<t+\zeta_j}}
\phi_j c_{t'}
\right)\\
\text{s.t.}\quad
&\sum_{i\in I}\sum_{t\in T}x_{jit}=1
&&\forall j\in J,\\
&x_{jit}=0
&&\forall j\in J,\ i\in I,\ t\in T
\text{ with }t<\zeta_j^{(1)},\\
&x_{jit}=0
&&\forall j\in J,\ i\in I,\ t\in T
\text{ with }t+\zeta_j>\zeta_j^{(2)},\\
&\sum_{j\in J}
\sum_{\substack{t'\in T\\t-\zeta_j<t'\leq t}}
x_{jit'}u_{jw}\leq Q_{iw}
&&\forall i\in I,\ w\in W,\ t\in T,\\
&x_{jit}\in\{0,1\}
&&\forall j\in J,\ i\in I,\ t\in T.
\end{aligned}
\end{equation*}
These constraints assign each task exactly once, respect its time window, and limit the resources used by simultaneously running tasks. The electricity prices come from the Irish Single Electricity Market Operator dataset~\citep{ifrim2012properties}, recorded at 30-minute intervals from 1 November 2011 to 31 December 2013. Each day forms one instance with 48 prices, each predicted from eight features by a shared linear model. Our three settings use three machines and 10, 15, or 20 tasks, denoted by loads $L=1$, $2$, and $3$. The 789 daily instances are reshuffled for each seed and split into 550 training, 100 validation, and 139 test instances.
\paragraph{Knapsack.}
The problem selects items to maximize their total value subject to a known capacity:
\begin{equation*}
\max_{\bm{x}\in\{0,1\}^{48}}\ \bm{c}^{\top}\bm{x}
\quad\text{s.t.}\quad
\bm{w}^{\top}\bm{x}\leq C,
\end{equation*}
where the item weights $\bm{w}$ and capacity $C$ are known, while the item values $\bm{c}$ are predicted. Each day of the electricity-price dataset forms one instance, with one item per half-hour slot. A fixed weight vector has entries in $\{3,5,7\}$ and total weight 240, and is shared across daily instances. Item values are constructed by multiplying each slot's electricity price by its weight and adding Gaussian noise $\xi\sim\mathcal{N}(0,25)$. A shared linear model predicts each value from the eight features associated with its slot. We evaluate capacities $C\in\{60,120,180\}$ and use the same 550/100/139 training, validation, and test split procedure as for scheduling.
\paragraph{Diverse bipartite matching.}
The problem matches articles between two disjoint sets $S_1$ and $S_2$, each containing 50 articles from the CORA citation network~\citep{sen2008collective}. Let $c_{ij}\in\{0,1\}$ indicate whether a citation exists between articles $i\in S_1$ and $j\in S_2$, and let $\phi_{ij}=1$ if they belong to the same field of study and $0$ otherwise. For objective coefficients $\bm{c}$, the matching solves
\begin{equation*}
\begin{aligned}
\max_{\bm{x}}\quad
&\sum_{i\in S_1}\sum_{j\in S_2}c_{ij}x_{ij}\\
\text{s.t.}\quad
&\sum_{j\in S_2}x_{ij}\leq1
&&\forall i\in S_1,\\
&\sum_{i\in S_1}x_{ij}\leq1
&&\forall j\in S_2,\\
&\sum_{i\in S_1}\sum_{j\in S_2}\phi_{ij}x_{ij}
\geq
\rho_1\sum_{i\in S_1}\sum_{j\in S_2}x_{ij},\\
&\sum_{i\in S_1}\sum_{j\in S_2}(1-\phi_{ij})x_{ij}
\geq
\rho_2\sum_{i\in S_1}\sum_{j\in S_2}x_{ij},\\
&x_{ij}\in\{0,1\}
&&\forall i\in S_1,\ j\in S_2.
\end{aligned}
\end{equation*}
At decision time, the predicted citation probabilities $\hat{c}_{ij}$ replace the unknown indicators $c_{ij}$ in the objective. The diversity constraints require minimum fractions $\rho_1$ and $\rho_2$ of selected pairs to belong to the same and different fields, respectively. The network is divided into 27 disjoint instances of 100 articles. Each article has 1,433 bag-of-words features. Concatenating the features of a pair gives a 2,866-dimensional input to a predictor with one hidden layer and a sigmoid output. We evaluate $(\rho_1,\rho_2)\in\{(0.1,0.1),(0.25,0.25),(0.5,0.5)\}$. 
\paragraph{Metric.} For minimization problems, regret is the realized objective minus the optimal objective, as in Eq.~\eqref{eq:regret}. For maximization problems, the order is reversed so that regret remains nonnegative. Following \citet{mandi2024decision}, we report mean relative regret for shortest path, scheduling, knapsack, and matching, dividing each instance's regret by its optimal objective value before averaging over the test set. Portfolio optimization uses absolute regret because the optimal return can be zero. We multiply portfolio regret by $10^3$ for presentation in the tables. For PolyStepOR, optimal reference solutions are used only to compute these evaluation metrics.
\begin{table}[ht]
\centering
\resizebox{0.95\textwidth}{!}{%
\begin{tabular}{lcccccc}
\toprule
Benchmark & Scenario & Cases & Problem & Predictor & Parameters & Train / Val / Test \\
\midrule
Shortest path & Degree & $\{1,2,4,6,8\}$ & LP & Linear & 240 & 1000 / 250 / 10000 \\
Portfolio & Degree & $\{1,4,8,16\}$ & QP & Linear & 300 & 1000 / 250 / 10000 \\
Knapsack & Capacity & $\{60,120,180\}$ & 0-1 ILP & Linear & 9 & 550 / 100 / 139 \\
Energy scheduling & Load & $\{1,2,3\}$ & ILP & Linear & 9 & 550 / 100 / 139 \\
Matching & Instance type & $\{1,2,3\}$ & ILP & MLP & 573{,}601 & 22 / 5 / 5 \\
\bottomrule
\end{tabular}}
\caption{Problems, solvers, predictors, and splits of the classical benchmarks of \citet{mandi2024decision}.}
\label{tab:app-classical-problems}
\end{table}

\subsubsection{Experimental Setups}
\paragraph{PolyStepOR.}
PolyStepOR minimizes the minibatch mean of the realized decision cost in Eq.~\eqref{eq:batch-loss}, reversing the objective sign for maximization problems. Training, early stopping, and checkpoint selection use no optimal reference decisions. We use one fixed configuration per problem without hyperparameter search (Table~\ref{tab:repro}). The predictor is initialized randomly using the evaluation seed and optimized through a fixed per-layer search basis, which spans the full parameter space for the linear predictors and is capped for the matching network. Each step evaluates $2Kd$ candidates on a shared minibatch sampled at that step, and matching uses all training instances.

Validation realized cost is evaluated approximately 20 times over the step budget, and training stops after 10 consecutive checks without improvement. We restore the checkpoint with the lowest validation cost and evaluate it once on the test set. Before computing the softmax weights, we subtract each row's minimum probe cost and apply mean-absolute-cost normalization. The common shift leaves the weights unchanged, whereas the normalization sets their effective temperature. The exact reference-cost invariance in Proposition~\ref{prop:reference} therefore requires the normalization scale to remain unchanged when reference costs are subtracted. The geometric results also have a restricted scope. Shortest path, knapsack, scheduling, and matching admit finite sets of discrete decisions, whereas portfolio has a continuous feasible set. Proposition~\ref{prop:margin} additionally requires the stated affine prediction map, so its exact formula applies to the linear predictors but not directly to the nonlinear matching network.

\paragraph{Baselines.} We run MSE, SPO+~\citep{elmachtoub2022smart}, DBB~\citep{pogancic2020differentiation}, I-MLE~\citep{niepert2021implicit}, FY~\citep{blondel2020learning}, and the solution-caching losses Listwise, Pairwise, Pairwise(diff), and MAP~\citep{mulamba2020contrastive,mandi2022decision} using the benchmark code and hyperparameters selected by \citet{mandi2024decision}. These methods use Adam, a plateau learning-rate schedule, and the original epoch budgets, returning the final-epoch model. The caching losses initialize their solution cache with all training optima and solve $5\%$ of instances per step. 

\paragraph{Computational cost.}
Table~\ref{tab:app-classical-compute} reports the cost of an example PolyStepOR run. Each step evaluates $2Kd$ candidates on $B$ shared training instances, requiring $2KdB$ solver calls before validation. These solves dominate training time and use the number of solver processes listed in the table. Portfolio solves run in a single-threaded solver within the training process.

\begin{table}[ht]
\centering
\setlength{\tabcolsep}{3pt}
\resizebox{0.95\textwidth}{!}{%
\begin{tabular}{lcccccccc}
\toprule
\multicolumn{1}{c}{Problem} & $d$ & $2Kd$ & $B$ & \begin{tabular}[c]{@{}c@{}}Solver calls\\ per step\end{tabular} & \begin{tabular}[c]{@{}c@{}}Mean stop /\\ budget (steps)\end{tabular} & \begin{tabular}[c]{@{}c@{}}Solver\\ processes\end{tabular} & \begin{tabular}[c]{@{}c@{}}Training\\ (h)\end{tabular} & \begin{tabular}[c]{@{}c@{}}Inference per\\ instance (ms)\end{tabular} \\
\midrule
Shortest path & 240 & 480 & 128 & \num{61440} & 139.5 / 150 & 8 & $0.35\pm0.05$ & 2.36 \\
Portfolio & 300 & 600 & 128 & \num{76800} & 92.5 / 100 & 1 & $2.04\pm1.03$ & 2.15 \\
Knapsack & 9 & 18 & 128 & \num{2304} & 98.0 / 100 & 1 & $0.06\pm0.01$ & 0.89 \\
Energy & 9 & 18 & 128 & \num{2304} & 93.8 / 100 & 32 & $1.46\pm0.46$ & 288.14 \\
Matching & 512 & 1024 & 17 & \num{17408} & 80.7 / 100 & 32 & $1.25\pm0.90$ & 143.25 \\
\bottomrule
\end{tabular}}
\caption{Cost of one PolyStepOR run on the classical benchmarks, with training time as mean $\pm$ standard deviation over the settings and ten seeds. Inference is one forward pass plus one solve, measured on load~2 for energy.}
\label{tab:app-classical-compute}
\end{table}

\subsubsection{Extended Results}
Table~\ref{tab:app-classical-pooled} reports regret averaged across the settings of each problem. Tables~\ref{tab:app-classical-settings} and~\ref{tab:app-classical-settings-real} report individual settings, and Table~\ref{tab:app-classical-tests} provides paired tests over the ten evaluation seeds. Reported uncertainties are standard deviations across seeds. The lowest-mean baseline in the test table is selected from the methods shown in the main-text figures, which exclude the solution-caching methods included in the extended result tables.

\begin{table}[ht]
\centering
\caption{Test regret on the classical benchmarks averaged on all settings of each problem over ten seeds. Bold underlined marks the lowest mean in each column and bold the second lowest.}
\label{tab:app-classical-pooled}
\vspace{4pt}
\resizebox{0.95\textwidth}{!}{%
\begin{tabular}{lccccc}
\toprule
Method & Shortest path & Portfolio ($\times10^{3}$) & Knapsack & Energy & Matching \\
\midrule
MSE & 0.1470 $\pm$ 0.0029 & 1.0738 $\pm$ 0.0008 & 0.0870 $\pm$ 0.0089 & 0.0224 $\pm$ 0.0021 & 0.9352 $\pm$ 0.0060 \\
SPO+ & 0.1303 $\pm$ 0.0002 & 0.2617 $\pm$ 0.0315 & 0.0829 $\pm$ 0.0183 & 0.0151 $\pm$ 0.0014 & 0.9177 $\pm$ 0.0130 \\
DBB & 0.1228 $\pm$ 0.0033 & 141.5336 $\pm$ 0.0659 & \second{0.0616 $\pm$ 0.0016} & \second{0.0150 $\pm$ 0.0015} & 0.9171 $\pm$ 0.0098 \\
I-MLE & \second{0.1073 $\pm$ 0.0007} & 131.6175 $\pm$ 1.2281 & 0.0620 $\pm$ 0.0024 & \best{0.0149 $\pm$ 0.0015} & 0.9165 $\pm$ 0.0174 \\
FY & \best{0.1034 $\pm$ 0.0007} & 107.8452 $\pm$ 1.0827 & 0.1467 $\pm$ 0.0152 & 0.0151 $\pm$ 0.0016 & 0.9304 $\pm$ 0.0139 \\
Listwise & 0.1128 $\pm$ 0.0006 & 9.2482 $\pm$ 7.2742 & 0.1493 $\pm$ 0.0274 & 0.0277 $\pm$ 0.0037 & \second{0.9005 $\pm$ 0.0148} \\
Pairwise & 0.1140 $\pm$ 0.0027 & \second{0.1109 $\pm$ 0.0155} & 0.2270 $\pm$ 0.0563 & 0.0276 $\pm$ 0.0031 & \best{0.8979 $\pm$ 0.0156} \\
Pairwise(diff) & 0.3562 $\pm$ 0.0049 & 0.5248 $\pm$ 1.1732 & 0.1074 $\pm$ 0.0294 & 0.0170 $\pm$ 0.0021 & 0.9033 $\pm$ 0.0104 \\
MAP & 0.1367 $\pm$ 0.0022 & \best{0.0955 $\pm$ 0.0117} & 0.1328 $\pm$ 0.0248 & 0.0152 $\pm$ 0.0013 & 0.9088 $\pm$ 0.0092 \\
\midrule
PolyStepOR & 0.1121 $\pm$ 0.0016 & 0.1602 $\pm$ 0.0309 & \best{0.0611 $\pm$ 0.0016} & \second{0.0150 $\pm$ 0.0016} & 0.9223 $\pm$ 0.0107 \\
\bottomrule
\end{tabular}}
\end{table}

\begin{table}[t]
\centering
\renewcommand{\arraystretch}{0.92}
\caption{Test regret per setting on the synthetic classical benchmarks over ten seeds (portfolio in absolute regret $\times10^{3}$). Bold underlined marks the lowest mean in each column and bold the second lowest.}
\label{tab:app-classical-settings}
\vspace{4pt}
\resizebox{0.95\textwidth}{!}{%
\footnotesize\begin{tabular*}{\textwidth}{@{\extracolsep{\fill}}lrrrrr@{}}
\toprule
\multirow{2}{*}{Method} & \multicolumn{5}{c}{Shortest path, degree $\mathrm{deg}$} \\
\cmidrule(lr){2-6}
 & $\mathrm{deg}=1$ & $\mathrm{deg}=2$ & $\mathrm{deg}=4$ & $\mathrm{deg}=6$ & $\mathrm{deg}=8$ \\
\midrule
MSE & \best{0.153 $\pm$ 0.000} & \second{0.101 $\pm$ 0.000} & 0.085 $\pm$ 0.001 & 0.131 $\pm$ 0.002 & 0.266 $\pm$ 0.013 \\
SPO+ & \second{0.154 $\pm$ 0.000} & \best{0.101 $\pm$ 0.000} & 0.076 $\pm$ 0.000 & 0.099 $\pm$ 0.001 & 0.222 $\pm$ 0.000 \\
DBB & 0.164 $\pm$ 0.003 & 0.126 $\pm$ 0.009 & 0.083 $\pm$ 0.002 & 0.092 $\pm$ 0.005 & 0.148 $\pm$ 0.006 \\
I-MLE & 0.159 $\pm$ 0.002 & 0.104 $\pm$ 0.002 & \second{0.071 $\pm$ 0.000} & \second{0.087 $\pm$ 0.001} & \second{0.116 $\pm$ 0.002} \\
FY & 0.157 $\pm$ 0.001 & 0.102 $\pm$ 0.000 & \best{0.071 $\pm$ 0.001} & \best{0.081 $\pm$ 0.001} & \best{0.106 $\pm$ 0.002} \\
Listwise & 0.169 $\pm$ 0.001 & 0.104 $\pm$ 0.000 & 0.077 $\pm$ 0.002 & 0.089 $\pm$ 0.000 & 0.126 $\pm$ 0.001 \\
Pairwise & 0.171 $\pm$ 0.014 & 0.105 $\pm$ 0.000 & 0.078 $\pm$ 0.001 & 0.091 $\pm$ 0.000 & 0.124 $\pm$ 0.001 \\
Pairwise(diff) & 0.163 $\pm$ 0.000 & 0.117 $\pm$ 0.000 & 0.215 $\pm$ 0.004 & 0.340 $\pm$ 0.008 & 0.946 $\pm$ 0.019 \\
MAP & 0.156 $\pm$ 0.000 & 0.103 $\pm$ 0.000 & 0.074 $\pm$ 0.001 & 0.119 $\pm$ 0.002 & 0.232 $\pm$ 0.010 \\
\midrule
PolyStepOR & 0.157 $\pm$ 0.001 & 0.105 $\pm$ 0.002 & 0.075 $\pm$ 0.001 & 0.088 $\pm$ 0.002 & 0.134 $\pm$ 0.008 \\
\bottomrule
\end{tabular*}}
\vspace{4pt}
\resizebox{0.95\textwidth}{!}{%
\footnotesize\begin{tabular*}{\textwidth}{@{\extracolsep{\fill}}lrrrr@{}}
\toprule
\multirow{2}{*}{Method} & \multicolumn{4}{c}{Portfolio, degree $\mathrm{deg}$, absolute regret ($\times10^{3}$)} \\
\cmidrule(lr){2-5}
 & $\mathrm{deg}=1$ & $\mathrm{deg}=4$ & $\mathrm{deg}=8$ & $\mathrm{deg}=16$ \\
\midrule
MSE & \best{0.116 $\pm$ 0.003} & 0.038 $\pm$ 0.000 & 0.565 $\pm$ 0.000 & 3.576 $\pm$ 0.000 \\
SPO+ & 0.261 $\pm$ 0.025 & 0.292 $\pm$ 0.091 & 0.196 $\pm$ 0.052 & 0.297 $\pm$ 0.045 \\
DBB & 126.527 $\pm$ 0.004 & 119.848 $\pm$ 0.003 & 134.911 $\pm$ 0.002 & 184.848 $\pm$ 0.265 \\
I-MLE & 114.044 $\pm$ 0.801 & 108.812 $\pm$ 0.443 & 122.268 $\pm$ 0.672 & 181.346 $\pm$ 4.286 \\
FY & 102.869 $\pm$ 1.013 & 101.373 $\pm$ 1.057 & 104.740 $\pm$ 1.524 & 122.399 $\pm$ 4.397 \\
Listwise & 18.994 $\pm$ 13.899 & 7.865 $\pm$ 12.390 & 9.753 $\pm$ 15.096 & 0.381 $\pm$ 0.172 \\
Pairwise & 0.371 $\pm$ 0.063 & \best{0.000 $\pm$ 0.000} & \second{0.011 $\pm$ 0.017} & \best{0.062 $\pm$ 0.003} \\
Pairwise(diff) & 0.406 $\pm$ 0.006 & \second{0.000 $\pm$ 0.000} & \best{0.001 $\pm$ 0.000} & 1.692 $\pm$ 4.687 \\
MAP & \second{0.195 $\pm$ 0.043} & 0.010 $\pm$ 0.025 & 0.022 $\pm$ 0.016 & 0.155 $\pm$ 0.022 \\
\midrule
PolyStepOR & 0.423 $\pm$ 0.049 & 0.055 $\pm$ 0.048 & 0.056 $\pm$ 0.077 & \second{0.107 $\pm$ 0.042} \\
\bottomrule
\end{tabular*}}
\end{table}

\begin{table}[t]
\centering
\renewcommand{\arraystretch}{0.92}
\caption{Test regret per setting on the classical benchmarks with real data over ten seeds. Bold underlined marks the lowest mean in each column and bold the second lowest.}
\label{tab:app-classical-settings-real}
\vspace{4pt}
\resizebox{0.95\textwidth}{!}{%
\footnotesize\begin{tabular*}{\textwidth}{@{\extracolsep{\fill}}lrrr@{}}
\toprule
\multirow{2}{*}{Method} & \multicolumn{3}{c}{Knapsack, capacity $C$} \\
\cmidrule(lr){2-4}
 & $C=60$ & $C=120$ & $C=180$ \\
\midrule
MSE & 0.149 $\pm$ 0.023 & 0.077 $\pm$ 0.005 & 0.035 $\pm$ 0.004 \\
SPO+ & 0.142 $\pm$ 0.030 & 0.068 $\pm$ 0.030 & 0.038 $\pm$ 0.035 \\
DBB & 0.110 $\pm$ 0.006 & \best{0.052 $\pm$ 0.003} & \best{0.023 $\pm$ 0.002} \\
I-MLE & \second{0.108 $\pm$ 0.005} & 0.053 $\pm$ 0.004 & 0.025 $\pm$ 0.002 \\
FY & 0.289 $\pm$ 0.033 & 0.109 $\pm$ 0.016 & 0.043 $\pm$ 0.004 \\
Listwise & 0.275 $\pm$ 0.060 & 0.131 $\pm$ 0.044 & 0.042 $\pm$ 0.021 \\
Pairwise & 0.302 $\pm$ 0.066 & 0.217 $\pm$ 0.051 & 0.161 $\pm$ 0.142 \\
Pairwise(diff) & 0.167 $\pm$ 0.028 & 0.100 $\pm$ 0.052 & 0.056 $\pm$ 0.045 \\
MAP & 0.164 $\pm$ 0.013 & 0.104 $\pm$ 0.043 & 0.130 $\pm$ 0.061 \\
\midrule
PolyStepOR & \best{0.108 $\pm$ 0.004} & \second{0.052 $\pm$ 0.004} & \second{0.024 $\pm$ 0.002} \\
\bottomrule
\end{tabular*}}
\vspace{4pt}
\resizebox{0.95\textwidth}{!}{%
\footnotesize\begin{tabular*}{\textwidth}{@{\extracolsep{\fill}}lrrr@{}}
\toprule
\multirow{2}{*}{Method} & \multicolumn{3}{c}{Energy-cost aware scheduling, load $L$} \\
\cmidrule(lr){2-4}
 & $L=1$ & $L=2$ & $L=3$ \\
\midrule
MSE & 0.021 $\pm$ 0.003 & 0.030 $\pm$ 0.002 & 0.017 $\pm$ 0.002 \\
SPO+ & 0.015 $\pm$ 0.002 & 0.020 $\pm$ 0.002 & 0.010 $\pm$ 0.001 \\
DBB & \best{0.015 $\pm$ 0.002} & 0.020 $\pm$ 0.003 & 0.010 $\pm$ 0.002 \\
I-MLE & \second{0.015 $\pm$ 0.002} & 0.020 $\pm$ 0.003 & \best{0.010 $\pm$ 0.002} \\
FY & 0.016 $\pm$ 0.002 & \best{0.019 $\pm$ 0.002} & 0.010 $\pm$ 0.002 \\
Listwise & 0.024 $\pm$ 0.004 & 0.028 $\pm$ 0.004 & 0.031 $\pm$ 0.007 \\
Pairwise & 0.032 $\pm$ 0.003 & 0.030 $\pm$ 0.005 & 0.020 $\pm$ 0.003 \\
Pairwise(diff) & 0.016 $\pm$ 0.002 & 0.022 $\pm$ 0.003 & 0.013 $\pm$ 0.003 \\
MAP & 0.015 $\pm$ 0.002 & 0.020 $\pm$ 0.002 & 0.010 $\pm$ 0.001 \\
\midrule
PolyStepOR & 0.015 $\pm$ 0.002 & \second{0.020 $\pm$ 0.002} & \second{0.010 $\pm$ 0.002} \\
\bottomrule
\end{tabular*}}
\vspace{4pt}
\resizebox{0.95\textwidth}{!}{%
\footnotesize\begin{tabular*}{\textwidth}{@{\extracolsep{\fill}}lrrr@{}}
\toprule
\multirow{2}{*}{Method} & \multicolumn{3}{c}{Diverse bipartite matching, instance type $I$} \\
\cmidrule(lr){2-4}
 & $I=1$ & $I=2$ & $I=3$ \\
\midrule
MSE & 0.934 $\pm$ 0.010 & 0.933 $\pm$ 0.007 & 0.939 $\pm$ 0.007 \\
SPO+ & 0.924 $\pm$ 0.017 & 0.924 $\pm$ 0.025 & 0.905 $\pm$ 0.016 \\
DBB & 0.917 $\pm$ 0.022 & 0.914 $\pm$ 0.021 & 0.921 $\pm$ 0.018 \\
I-MLE & 0.919 $\pm$ 0.025 & 0.911 $\pm$ 0.024 & 0.920 $\pm$ 0.016 \\
FY & 0.939 $\pm$ 0.020 & 0.929 $\pm$ 0.025 & 0.924 $\pm$ 0.016 \\
Listwise & \second{0.916 $\pm$ 0.030} & 0.907 $\pm$ 0.017 & \best{0.879 $\pm$ 0.018} \\
Pairwise & \best{0.908 $\pm$ 0.012} & \second{0.905 $\pm$ 0.024} & 0.881 $\pm$ 0.019 \\
Pairwise(diff) & 0.916 $\pm$ 0.018 & \best{0.903 $\pm$ 0.020} & 0.891 $\pm$ 0.029 \\
MAP & 0.928 $\pm$ 0.021 & 0.919 $\pm$ 0.020 & \second{0.880 $\pm$ 0.018} \\
\midrule
PolyStepOR & 0.924 $\pm$ 0.015 & 0.928 $\pm$ 0.014 & 0.915 $\pm$ 0.024 \\
\bottomrule
\end{tabular*}}
\end{table}

\begin{table}[ht]
\centering
\begin{tabular}{lccccc}
\toprule
Problem & Setting & Lowest-mean baseline & Lower mean & $p$ Wilcoxon & $p$ sign-flip \\
\midrule
\multirow{6}{*}{Shortest path} & $\mathrm{deg}=1$ & MSE & MSE & 0.002 & 0.002 \\
 & $\mathrm{deg}=2$ & SPO+ & SPO+ & 0.002 & 0.002 \\
 & $\mathrm{deg}=4$ & FY & FY & 0.002 & 0.002 \\
 & $\mathrm{deg}=6$ & FY & FY & 0.002 & 0.002 \\
 & $\mathrm{deg}=8$ & FY & FY & 0.002 & 0.002 \\
 & pooled & FY & FY & 0.002 & 0.002 \\
\midrule
\multirow{5}{*}{Portfolio} & $\mathrm{deg}=1$ & MSE & MSE & 0.002 & 0.002 \\
 & $\mathrm{deg}=4$ & MSE & MSE & 0.846 & 0.393 \\
 & $\mathrm{deg}=8$ & SPO+ & PolyStepOR & 0.006 & 0.006 \\
 & $\mathrm{deg}=16$ & SPO+ & PolyStepOR & 0.002 & 0.002 \\
 & pooled & SPO+ & PolyStepOR & 0.002 & 0.002 \\
\midrule
\multirow{4}{*}{Knapsack} & $C=60$ & I-MLE & PolyStepOR & 0.322 & 0.518 \\
 & $C=120$ & DBB & DBB & 0.557 & 0.465 \\
 & $C=180$ & DBB & DBB & 0.695 & 0.514 \\
 & pooled & DBB & PolyStepOR & 0.322 & 0.268 \\
\midrule
\multirow{4}{*}{Energy} & $L=1$ & DBB & DBB & 0.322 & 0.303 \\
 & $L=2$ & FY & FY & 0.193 & 0.068 \\
 & $L=3$ & I-MLE & I-MLE & 0.432 & 0.488 \\
 & pooled & I-MLE & I-MLE & 1.000 & 0.496 \\
\midrule
\multirow{4}{*}{Matching} & $I=1$ & DBB & DBB & 0.625 & 0.471 \\
 & $I=2$ & I-MLE & I-MLE & 0.131 & 0.059 \\
 & $I=3$ & SPO+ & SPO+ & 0.275 & 0.328 \\
 & pooled & I-MLE & I-MLE & 0.557 & 0.352 \\
\bottomrule
\end{tabular}
\caption{Paired two-sided tests of PolyStepOR against the lowest-mean baseline of the main-text figures, over the ten paired seeds of each setting. ``Lower mean" indicates the method with the lower mean of the pair.}
\label{tab:app-classical-tests}
\end{table}

\paragraph{Portfolio.}
Portfolio optimization has a continuous feasible set and a quadratic risk constraint. Averaged across polynomial degrees, PolyStepOR achieves the lowest regret among the methods in the main-text figures ($p=0.002$ against SPO+). In the complete comparison, however, MAP and Pairwise achieve lower average regret. DBB, I-MLE, and FY produce substantially higher regret, consistent with the qualitative findings of \citet{mandi2024decision}. MSE performs best at degree~1, where a linear model can represent the conditional mean return.
\paragraph{Shortest path.}
The selected path changes only when predictions cross a decision boundary, so probes that preserve the path provide no cost difference for training. FY uses the optimal path as its target, while DBB and I-MLE receive the true cost vector as the derivative with respect to the decision. PolyStepOR receives only the scalar minibatch cost of each candidate. These differences describe the available training signals but do not by themselves establish the cause of the performance gap. PolyStepOR's regret exceeds that of the lowest-mean baseline by approximately $0.004$ at degree~1 and $0.028$ at degree~8. Training reaches the 150-step budget without early stopping in 24 of the 50 runs, including nine of the ten degree-8 runs. This indicates that the stopping criterion was not reached within the available budget, although it does not establish that longer training would close the gap. PolyStepOR nevertheless has lower mean regret than MSE at degrees 4, 6, and 8.
\paragraph{Matching.}
Averaged across instance types, the methods in the main-text figures remain within $2.0\%$ of MSE, while Pairwise reduces regret by approximately $4.0\%$ relative to MSE. Mean relative regret exceeds $0.87$ in every reported setting, indicating that all methods remain far from the optimal matching objective. Interpretation is limited by the five test instances per type and the different training splits. 
\paragraph{Knapsack and scheduling.}
Across all six settings, we detect no significant difference between PolyStepOR and the baseline with the lowest mean regret in the main-text comparison, with paired $p\geq0.19$. PolyStepOR also has the lowest knapsack regret averaged across capacities among all ten methods. Both problems use linear predictors with nine parameters, allowing the full parameter space to be probed with 18 candidates per step. This makes each update substantially less expensive in solver calls than for the higher-dimensional predictors.

\subsection{Extended in-prediction OR metrics and results}\label{app:inconstraint}
\subsubsection{Problem Setups}
We consider the integer optimization problems with predicted constraints introduced by \citet{silvestri2026score}. Predictions determine the first-stage decision, which is evaluated under the realized parameters and corrected through recourse when necessary. The published data-generation procedure uses the mapping of \citet{elmachtoub2022smart}, with degree $5$, $p=5$ features, and noise half-width $0.5$, to generate Poisson distribution parameters. Realized problem parameters are then sampled from these distributions. Differences between this description and the released data are documented below. Each problem uses five datasets with three random splits each, giving 15 runs per setting. We retain the benchmark's notation $\rho$ for the recourse penalty in this subsection.

\paragraph{Knapsack with stochastic weights (KP weight).} Given a set of items $I$ with known values $v_i$ and capacity $C$, the first stage selects items using the predicted weights $\hat{\bm{w}}$,
\begin{equation*}
\hat{\bm{x}}\in\arg\max_{\bm{x}\in\{0,1\}^{|I|}}\Bigl\{\sum_{i\in I}v_ix_i:\ \sum_{i\in I}\hat w_ix_i\leq C\Bigr\}.
\end{equation*}
After observing the true weights $\bm{w}$, recourse restores feasibility by choosing additions $\bm{u}^+$ and removals $\bm{u}^-$,
\begin{equation}\label{eq:app-kp-recourse}
\max_{\bm{u}^+,\bm{u}^-\in\{0,1\}^{|I|}}\ \sum_{i\in I}\Bigl(\tfrac{1}{\rho}v_iu^+_i-\rho v_iu^-_i\Bigr)\quad\text{s.t.}\quad \sum_{i\in I}w_i(\hat x_i+u^+_i-u^-_i)\leq C,\quad \hat{\bm{x}}\geq\bm{u}^-,\quad \hat{\bm{x}}+\bm{u}^+\leq\mathbf{1}.
\end{equation}
The binary vectors $\bm{u}^+$ and $\bm{u}^-$ identify the items added and removed. Relative to the initial selection value, each addition contributes $v_i/\rho$ and each removal incurs $\rho v_i$, where $\rho>1$. The total post-hoc value is the initial selection value plus the optimized recourse objective. The problem has 50 items and $\rho\in\{5,10,20\}$.

\paragraph{Knapsack with stochastic capacity (KP capacity).} This variant predicts the capacity while keeping item weights and values known. The first-stage constraint uses $\hat C$, and recourse uses the realized capacity $C$ in Eq.~\eqref{eq:app-kp-recourse} (Table~8 of \citet{silvestri2026score}). The problem has 50 items and $\rho\in\{5,10,20\}$.

\paragraph{Weighted set multi-cover (WSMC).} Given items $I$ and covers $J$, let $a_{ij}=1$ if cover $j$ covers item $i$, let $\kappa_j$ be the cost of cover $j$, and let $d_i$ be the coverage requirement of item $i$. The first stage solves
\begin{equation*}
\hat{\bm{x}}\in\arg\min_{\bm{x}\in\mathbb{Z}_{\geq0}^{|J|}}\Bigl\{\sum_{j\in J}\kappa_jx_j:\ \sum_{j\in J}a_{ij}x_j\geq\hat d_i\ \ \forall i\in I\Bigr\},
\end{equation*}
where $x_j$ counts how often cover $j$ is selected and the requirements $\hat{\bm{d}}$ are predicted. After the true requirements $\bm{d}$ are observed, each unit of unmet demand for item $i$ incurs a recourse charge equal to $\rho$ times the largest cost among covers containing that item. The resulting post-hoc cost is
\begin{equation}\label{eq:app-wsmc-recourse}
\sum_{j\in J}\kappa_j\hat x_j+\rho\sum_{i\in I}\Bigl(\max_{j:\,a_{ij}=1}\kappa_j\Bigr)\max\Bigl(0,\ d_i-\sum_{j\in J}a_{ij}\hat x_j\Bigr).
\end{equation}
The availability matrices follow the generator of \citet{silvestri2026score}, and the cover costs are drawn uniformly from $[1,100]$. We use 10 items and 50 covers with $\rho\in\{1,5,10\}$, as in Table~3 of \citet{silvestri2026score}, and 5 items and 25 covers, as in their Table~9.

\paragraph{Metrics.} The primary metric of \citet{silvestri2026score} is the relative post-hoc regret. It divides the post-hoc regret of Eq.~\eqref{eq:pregret} by the magnitude of the optimal cost under the true parameters and averages over the test instances. We also report the fraction of test instances whose first-stage decision is infeasible under the realized parameters, before recourse, and the mean squared prediction error (MSE). For knapsack, regret uses the maximization convention, subtracting the achieved post-hoc value from the optimal value.

\subsubsection{Experimental Setups}
\paragraph{Data and splits.} We use the authors' code and data generators. The released code splits each dataset into 720 training, 80 validation, and 50 test instances, while the paper states an 80/10/10 split. The released knapsack data use multiplicative noise $0.1$ and additive noise $0.03$ instead of the stated half-width $0.5$. On these data, our baseline reruns, including CombOptNet where available, reproduce the knapsack results of Tables~2 and~8 of \citet{silvestri2026score} within one standard deviation. We generate the WSMC data with the authors' generator at the stated half-width $0.5$. Our reruns then reproduce their Table~9 for $5\times25$, whereas the MSE and SFGE values of their Table~3 for $10\times50$ are about twice ours.

\paragraph{PolyStepOR.}
We train the benchmark's linear predictors using the original data loaders and optimization solvers. The training loss is the post-hoc cost in Eq.~\eqref{eq:recourse-loss}, with the optimal reference term omitted. Validation uses the same cost for early stopping and checkpoint selection, so neither training nor selection requires optimal reference decisions.

All settings use one fixed configuration without hyperparameter search. We use a fixed per-layer search basis without a dimension cap. The predictor's output scaling remains fixed. When $d$ is not a multiple of $q$, the last of the $\lceil d/q\rceil$ blocks is filled with dummy coordinates that map to no parameter, so probes along them return the current cost. Each step evaluates candidates on a shared minibatch, following the fixed order of the original data loader. Training stops after 10 consecutive epochs without improvement in validation post-hoc cost. We restore the best validation checkpoint and evaluate it once on the test set. Candidate decisions are computed using the authors' Gurobi models, with a 30-second time limit for knapsack. The fixed batch order differs from the independent sampling assumed in the stationarity analysis.
\paragraph{Baselines.}
The baseline denoted MSE follows \citet{silvestri2026score}. It fits a Gaussian predictive distribution by minimizing negative log-likelihood, with a standard deviation that does not depend on the features. Thus, its training objective is likelihood-based despite its benchmark label. CombOptNet~\citep{paulus2021comboptnet} uses the authors' implementation. Its released WSMC loader provides no validation split, so we add one following the knapsack loader's procedure. We do not evaluate CombOptNet on KP capacity, for which the original paper reports no result.
SFGE uses the authors' settings, including Gaussian predictions with a learned standard deviation, standardized batch values, and Adam with learning rate $0.005$. Early stopping is based on validation regret. In these implementations, CombOptNet trains on optimal reference decisions and SFGE evaluates an objective containing optimal reference costs. MSE and PolyStepOR use neither. This describes the reference usage of the evaluated implementations, rather than an inherent requirement of score-function estimation.
\paragraph{Computational cost.}
KP capacity uses five CPU cores with four parallel solver processes. The remaining problems use eight cores and eight solver processes. Memory use is approximately 4~GB per run. Across penalty settings, the mean stopping epoch ranges from 18.6 to 19.7 for KP capacity, 28.2 to 29.3 for KP weight, 17.7 to 23.7 for WSMC $10\times50$, and 19.6 to 25.4 for WSMC $5\times25$.
Estimated training time per run averages 52 minutes for KP capacity (range 17--124 minutes), 6.7 hours for KP weight (2.4--12.5 hours), 2.8 hours for WSMC $10\times50$ (0.5--6.8 hours), and 62 minutes for WSMC $5\times25$ (12--219 minutes). These estimates combine time per epoch with the number of training epochs because some runs resumed across time-limited jobs. Inference requires one predictor evaluation and one first-stage solve. Gurobi solve times per test instance are 18.4~ms for KP capacity, 3.9~ms for KP weight, 13.9~ms for WSMC $10\times50$, and 7.8~ms for WSMC $5\times25$. The reported timings including model construction and evaluation range from 13 to 44~ms.
\subsubsection{Extended Results}
Tables~\ref{tab:app-ic-sckp} to~\ref{tab:app-ic-wsmc5} report regret, prediction error, and infeasibility for each problem and penalty setting. Table~\ref{tab:app-ic-tests} reports paired comparisons with SFGE and MSE across the 15 runs.
\begin{table}[ht]
\centering
\caption{Knapsack with stochastic capacity (KP capacity), Table~8 of \citet{silvestri2026score} over 15 runs. Bold underlined marks the lowest value in each block and column, and bold the second lowest.}
\label{tab:app-ic-sckp}
\vspace{4pt}
\begin{tabular}{lccc}
\toprule
Method & Rel.\ post-hoc regret & MSE & Infeas.\ ratio \\
\midrule
\multicolumn{4}{c}{50 items, $\rho = 5$} \\
\midrule
MSE & 0.5556 $\pm$ 0.3529 & \best{10.81} & \best{0.667 $\pm$ 0.257} \\
SFGE & \second{0.3577 $\pm$ 0.2152} & \second{16.63} & \second{0.749 $\pm$ 0.295} \\
PolyStepOR & \best{0.2975 $\pm$ 0.1666} & 21.87 & 0.752 $\pm$ 0.289 \\
\midrule
\multicolumn{4}{c}{50 items, $\rho = 10$} \\
\midrule
MSE & 1.2126 $\pm$ 0.7797 & \best{10.81} & \best{0.667 $\pm$ 0.257} \\
SFGE & \second{0.5081 $\pm$ 0.2784} & \second{23.58} & 0.907 $\pm$ 0.063 \\
PolyStepOR & \best{0.3802 $\pm$ 0.1958} & 32.04 & \second{0.847 $\pm$ 0.212} \\
\midrule
\multicolumn{4}{c}{50 items, $\rho = 20$} \\
\midrule
MSE & 2.5205 $\pm$ 1.6313 & \best{10.81} & \best{0.667 $\pm$ 0.257} \\
SFGE & \second{0.8216 $\pm$ 0.5031} & \second{32.66} & 0.955 $\pm$ 0.036 \\
PolyStepOR & \best{0.4772 $\pm$ 0.2415} & 37.68 & \second{0.877 $\pm$ 0.168} \\
\bottomrule
\end{tabular}
\end{table}

\begin{table}[ht]
\centering
\caption{Knapsack with stochastic weights (KP weight), Table~2 of \citet{silvestri2026score} over 15 runs. Bold underlined marks the lowest value in each block and column, and bold the second lowest.}
\label{tab:app-ic-swkp}
\vspace{4pt}
\begin{tabular}{lccc}
\toprule
Method & Rel.\ post-hoc regret & MSE & Infeas.\ ratio \\
\midrule
\multicolumn{4}{c}{50 items, $\rho = 5$} \\
\midrule
MSE & 0.1676 $\pm$ 0.0357 & \best{$7.88\times10^{4}$} & \second{0.929 $\pm$ 0.031} \\
CombOptNet & 0.1901 $\pm$ 0.0289 & $5.09\times10^{7}$ & \best{0.907 $\pm$ 0.032} \\
SFGE & \best{0.1297 $\pm$ 0.0114} & \second{$3.48\times10^{5}$} & 0.987 $\pm$ 0.017 \\
PolyStepOR & \second{0.1501 $\pm$ 0.0210} & $1.58\times10^{6}$ & 0.964 $\pm$ 0.021 \\
\midrule
\multicolumn{4}{c}{50 items, $\rho = 10$} \\
\midrule
MSE & 0.3190 $\pm$ 0.0814 & \best{$7.88\times10^{4}$} & \second{0.929 $\pm$ 0.031} \\
CombOptNet & 0.4015 $\pm$ 0.0659 & $5.09\times10^{7}$ & \best{0.907 $\pm$ 0.032} \\
SFGE & \best{0.1760 $\pm$ 0.0221} & \second{$3.66\times10^{5}$} & 0.989 $\pm$ 0.012 \\
PolyStepOR & \second{0.2293 $\pm$ 0.0372} & $2.01\times10^{6}$ & 0.980 $\pm$ 0.025 \\
\midrule
\multicolumn{4}{c}{50 items, $\rho = 20$} \\
\midrule
MSE & 0.6146 $\pm$ 0.1745 & \best{$7.88\times10^{4}$} & \second{0.929 $\pm$ 0.031} \\
CombOptNet & 0.8233 $\pm$ 0.1401 & $5.09\times10^{7}$ & \best{0.907 $\pm$ 0.032} \\
SFGE & \best{0.2205 $\pm$ 0.0276} & \second{$3.73\times10^{5}$} & 0.993 $\pm$ 0.009 \\
PolyStepOR & \second{0.2884 $\pm$ 0.0558} & $2.01\times10^{6}$ & 0.984 $\pm$ 0.024 \\
\bottomrule
\end{tabular}
\end{table}

\begin{table}[ht]
\centering
\caption{Weighted set multi-cover $10\times50$, Table~3 of \citet{silvestri2026score} over 15 runs. Bold underlined marks the lowest value in each block and column, and bold the second lowest.}
\label{tab:app-ic-wsmc10}
\vspace{4pt}
\begin{tabular}{lccc}
\toprule
Method & Rel.\ post-hoc regret & MSE & Infeas.\ ratio \\
\midrule
\multicolumn{4}{c}{$10\times50$, $\rho = 1$} \\
\midrule
MSE & 1.2960 $\pm$ 0.5676 & \best{$1.04\times10^{5}$} & 0.912 $\pm$ 0.048 \\
CombOptNet & 5.2044 $\pm$ 2.1887 & $1.89\times10^{7}$ & 1.000 $\pm$ 0.000 \\
SFGE & \second{1.1373 $\pm$ 0.4368} & \second{$3.33\times10^{5}$} & \best{0.809 $\pm$ 0.125} \\
PolyStepOR & \best{1.1205 $\pm$ 0.5002} & $4.87\times10^{5}$ & \second{0.839 $\pm$ 0.091} \\
\midrule
\multicolumn{4}{c}{$10\times50$, $\rho = 5$} \\
\midrule
MSE & 5.7115 $\pm$ 2.7809 & \best{$1.04\times10^{5}$} & 0.912 $\pm$ 0.048 \\
CombOptNet & 110.7887 $\pm$ 30.2424 & $1.89\times10^{7}$ & 1.000 $\pm$ 0.000 \\
SFGE & \second{2.5460 $\pm$ 0.9283} & \second{$6.43\times10^{5}$} & \best{0.340 $\pm$ 0.113} \\
PolyStepOR & \best{2.5253 $\pm$ 1.0119} & $8.88\times10^{5}$ & \second{0.452 $\pm$ 0.121} \\
\midrule
\multicolumn{4}{c}{$10\times50$, $\rho = 10$} \\
\midrule
MSE & 11.2307 $\pm$ 5.5511 & \best{$1.04\times10^{5}$} & 0.912 $\pm$ 0.048 \\
CombOptNet & 440.7395 $\pm$ 118.7445 & $1.89\times10^{7}$ & 1.000 $\pm$ 0.000 \\
SFGE & \best{3.3108 $\pm$ 1.1913} & \second{$7.80\times10^{5}$} & \best{0.235 $\pm$ 0.088} \\
PolyStepOR & \second{3.3701 $\pm$ 1.5660} & $1.08\times10^{6}$ & \second{0.345 $\pm$ 0.091} \\
\bottomrule
\end{tabular}
\end{table}

\begin{table}[ht]
\centering
\caption{Weighted set multi-cover $5\times25$, Table~9 of \citet{silvestri2026score} over 15 runs. Bold underlined marks the lowest value in each block and column, and bold the second lowest.}
\label{tab:app-ic-wsmc5}
\vspace{4pt}
\begin{tabular}{lrrr}
\toprule
Method & Rel.\ post-hoc regret & MSE & Infeas.\ ratio \\
\midrule
\multicolumn{4}{c}{$5\times25$, $\rho = 1$} \\
\midrule
MSE & 1.6856 $\pm$ 1.1119 & \best{$8.07\times10^{4}$} & 0.728 $\pm$ 0.163 \\
CombOptNet & 6.0568 $\pm$ 4.1781 & $8.66\times10^{6}$ & 1.000 $\pm$ 0.000 \\
SFGE & \second{1.2319 $\pm$ 0.5118} & \second{$2.98\times10^{5}$} & \best{0.533 $\pm$ 0.280} \\
PolyStepOR & \best{1.0940 $\pm$ 0.3827} & $7.20\times10^{5}$ & \second{0.568 $\pm$ 0.248} \\
\midrule
\multicolumn{4}{c}{$5\times25$, $\rho = 5$} \\
\midrule
MSE & 7.4364 $\pm$ 5.6404 & \best{$8.07\times10^{4}$} & 0.728 $\pm$ 0.163 \\
CombOptNet & 163.2305 $\pm$ 106.4945 & $8.66\times10^{6}$ & 1.000 $\pm$ 0.000 \\
SFGE & \second{2.3798 $\pm$ 0.7306} & \second{$5.28\times10^{5}$} & \best{0.181 $\pm$ 0.102} \\
PolyStepOR & \best{2.3524 $\pm$ 0.8307} & $7.38\times10^{5}$ & \second{0.279 $\pm$ 0.142} \\
\midrule
\multicolumn{4}{c}{$5\times25$, $\rho = 10$} \\
\midrule
MSE & 14.6249 $\pm$ 11.3160 & \best{$8.07\times10^{4}$} & 0.728 $\pm$ 0.163 \\
CombOptNet & 654.3986 $\pm$ 426.2526 & $8.66\times10^{6}$ & 1.000 $\pm$ 0.000 \\
SFGE & \best{2.8663 $\pm$ 0.8869} & \second{$6.33\times10^{5}$} & \best{0.120 $\pm$ 0.074} \\
PolyStepOR & \second{3.2120 $\pm$ 1.2114} & $7.18\times10^{5}$ & \second{0.184 $\pm$ 0.097} \\
\bottomrule
\end{tabular}
\end{table}

\begin{table}[ht]
\centering
\caption{Paired two-sided tests over the 15 paired runs of each cell (5 datasets $\times$ 3 splits) on the relative post-hoc regret. ``Lower mean'' names the method with the lower mean of the pair, and ``Runs'' counts the runs in which PolyStepOR has the lower regret.}
\label{tab:app-ic-tests}
\resizebox{0.95\textwidth}{!}{%
\begin{tabular}{lclccclc}
\toprule
& & \multicolumn{4}{c}{Against SFGE} & \multicolumn{2}{c}{Against MSE} \\
\cmidrule(lr){3-6}\cmidrule(lr){7-8}
Problem & $\rho$ & Lower mean & $p$ Wilcoxon & $p$ sign-flip & Runs & Lower mean & $p$ Wilcoxon \\
\midrule
KP (capacity) & 5 & PolyStepOR & 0.002 & $<0.001$ & 12/15 & PolyStepOR & 0.002 \\
 & 10 & PolyStepOR & $<0.001$ & $<0.001$ & 14/15 & PolyStepOR & $<0.001$ \\
 & 20 & PolyStepOR & $<0.001$ & $<0.001$ & 15/15 & PolyStepOR & $<0.001$ \\
\midrule
KP (weight) & 5 & SFGE & 0.004 & 0.005 & 2/15 & PolyStepOR & 0.022 \\
 & 10 & SFGE & $<0.001$ & $<0.001$ & 0/15 & PolyStepOR & $<0.001$ \\
 & 20 & SFGE & $<0.001$ & $<0.001$ & 1/15 & PolyStepOR & $<0.001$ \\
\midrule
WSMC $10\times50$ & 1 & PolyStepOR & 0.252 & 0.520 & 9/15 & PolyStepOR & 0.001 \\
 & 5 & PolyStepOR & 0.679 & 0.711 & 10/15 & PolyStepOR & $<0.001$ \\
 & 10 & SFGE & 0.561 & 0.880 & 9/15 & PolyStepOR & $<0.001$ \\
\midrule
WSMC $5\times25$ & 1 & PolyStepOR & 0.151 & 0.059 & 8/15 & PolyStepOR & 0.041 \\
 & 5 & PolyStepOR & 0.762 & 0.863 & 8/15 & PolyStepOR & $<0.001$ \\
 & 10 & SFGE & 0.048 & 0.162 & 3/15 & PolyStepOR & $<0.001$ \\
\bottomrule
\end{tabular}}
\end{table}

\paragraph{KP capacity.}
PolyStepOR achieves the lowest mean post-hoc regret at every penalty level, with a significant paired difference from SFGE in each setting. Increasing the predicted capacity enlarges the feasible set and can lead to selections that exceed the realized capacity. Recourse then removes items at $\rho$ times their value, whereas additions contribute only $1/\rho$ of their value. Larger penalties therefore increase the cost of removals relative to additions. At $\rho=10$ and $20$, PolyStepOR produces fewer infeasible first-stage decisions than SFGE, with ratios of $0.85$ and $0.88$ compared with $0.91$ and $0.96$. Its regret also grows less across the penalty settings, from $0.30$ to $0.48$, compared with $0.36$ to $0.82$ for SFGE. MSE has the lowest prediction error and infeasibility ratio but higher post-hoc regret, illustrating that these metrics measure different aspects of decision quality.

\paragraph{KP weight.}
SFGE achieves the lowest mean regret at every penalty level, while PolyStepOR improves over MSE and CombOptNet. First-stage infeasibility is high for all methods, ranging from approximately $0.90$ to $0.99$, so recourse plays a substantial role in evaluation. Unlike KP capacity, which predicts one scalar, this problem predicts 50 weights jointly. PolyStepOR searches 270 of the predictor's 300 parameters using one probe radius, compared with six parameters for KP capacity. Its prediction MSE is approximately 4.5 to 5.5 times that of SFGE. These differences characterize the more demanding setting but do not isolate the cause of the regret gap.

\paragraph{WSMC.}
On WSMC $10\times50$, we detect no significant difference in post-hoc regret between PolyStepOR and SFGE. Both outperform MSE by a larger margin as the penalty increases. Their first-stage infeasibility also decreases between $\rho=1$ and $\rho=10$, from $0.84$ to $0.35$ for PolyStepOR and from $0.81$ to $0.24$ for SFGE, while MSE remains at $0.91$. This is consistent with stronger penalties encouraging decisions that satisfy more of the realized coverage requirements.

The smaller WSMC $5\times25$ problem shows a similar pattern. At $\rho=10$, SFGE has lower mean regret, with a Wilcoxon $p$-value of $0.048$ but a sign-flip $p$-value of $0.162$. Evidence for a difference in this setting therefore depends on the test used.

\subsection{Extended Details on Network Districting}\label{app:districting}

\subsubsection{Problem Setup}
We follow the network districting problem of \citet{ahmed2024districtnet}. A city is represented as a graph whose vertices are basic units (BUs) with stochastic customer demand, together with a depot. The $N$ BUs are partitioned into $k=\lfloor N/t\rfloor$ connected districts, each constrained to contain within $\pm20\%$ of the target size $t$. For a district $d$, the routing cost is the expected length of the shortest depot tour serving its realized demand. Using the authors' evaluator with $|\Omega|=100$ demand scenarios, the cost of a districting $\lambda$ is
\begin{equation}\label{eq:app-dn-cost}
\mathrm{Cost}(\lambda)=\sum_{d\in\lambda}\frac{1}{|\Omega|}\sum_{\omega\in\Omega}\mathrm{TSP}(d,\bm{\xi}^{\omega}).
\end{equation}

DistrictNet does not optimize this routing objective directly. Instead, a graph neural network predicts edge weights for a capacitated minimum spanning tree (CMST) surrogate, whose solution induces the district partition. At test time, the surrogate is solved using the authors' iterated local search, with a time limit of 20 minutes for instances below 400 BUs and 60 minutes otherwise. PolyStepOR retains the same predictor, surrogate problem, solver, and downstream evaluator, but trains the predictor directly from the districting costs in Eq.~\eqref{eq:app-dn-cost}. :chatgpt-content-reference{index="0"}

\paragraph{Data.}
Training uses the 100 cities of \citet{ahmed2024districtnet}, each containing $N=30$ BUs with target size $t=3$ and generated from 27 English cities outside the test set. DistrictNet is trained from optimal districtings of these instances, obtained by enumerating feasible districts and solving the resulting optimization problem. Generating such labels becomes rapidly expensive: \citet{ahmed2024districtnet} report approximately 400 CPU-core days for one instance with 60 BUs and 10 districts. The test set contains the 35 instances of their Table~1, corresponding to seven cities (Bristol, Leeds, London, Lyon, Manchester, Marseille, and Paris) with $N=120$ and $t\in\{3,6,12,20,30\}$. We additionally evaluate on the Ile-de-France instance with 2000 BUs and $t=20$. Both test settings are therefore substantially larger than the 30-BU training instances. :chatgpt-content-reference{index="1"}

\paragraph{Evaluation.}
Each returned districting is evaluated by Eq.~\eqref{eq:app-dn-cost} using the same 100 demand scenarios for a given problem. For method $m$, we report its mean relative cost to PolyStepOR over the 35 test problems,
\begin{equation}
    \frac{100}{35}\sum_{p=1}^{35}
    \frac{\mathrm{Cost}_{m}(p)-\mathrm{Cost}_{\rm PS}(p)}
    {\mathrm{Cost}_{\rm PS}(p)},
\end{equation}
together with a paired two-sided Wilcoxon test over the 35 instances. \citet{ahmed2024districtnet} use the analogous relative-cost metric with DistrictNet as the reference and a one-sided test.

\subsubsection{Experimental Setup}

\paragraph{PolyStepOR.}
PolyStepOR trains the authors' graph network, containing $25{,}857$ parameters, on the same 100 training cities. For each candidate predictor, the predicted edge weights define a CMST problem, and the resulting districting is evaluated using the authors' precomputed district costs. The training objective is the mean districting cost over all 100 cities. Neither the optimal training districtings nor pretrained DistrictNet parameters are used, and each run starts from a random initialization determined by its seed. All remaining components, including the network architecture, data, demand scenarios, optimization model, and downstream evaluator, are inherited from \citet{ahmed2024districtnet}.

We report both an untuned and a tuned configuration, with their hyperparameters listed in Table~\ref{tab:repro}. The tuned configuration is selected using the second set of 100 small cities provided by \citet{ahmed2024districtnet}, which serves as an in-distribution validation set. Within each run, the returned predictor is the probed candidate with the lowest training cost. For one untuned run, a single training solve did not terminate; this run therefore uses a finite solve limit, retaining the incumbent when available and assigning a fixed failure cost otherwise.

\paragraph{Baselines.}
DistrictNet is retrained with the authors' code at each of the ten PolyStepOR seeds and evaluated on the same demand scenarios. The results for BD, FIG, PredGNN, and AvgTSP are taken directly from \citet{ahmed2024districtnet}. For Ile-de-France, all baseline values, including DistrictNet, are the published results of \citet{ahmed2024districtnet}.

\paragraph{Compute.}
For the untuned configuration, each PolyStepOR training step evaluates 1024 candidate predictors. A step takes 15.9 minutes on 64 CPU cores on average (range: 10.7--29.3 minutes) and 38.3 minutes on 16 cores, giving approximately 8 hours for a 30-step run on 64 cores. Seeds 17, 42, and 71 were trained on 16 cores and took 16.8--20.9 hours; the remaining seeds took 8.3--10.9 hours. Candidate evaluations are parallelized across worker processes, each holding a copy of the training data, resulting in a peak memory usage of approximately 55~GB on 64 cores.

The tuned configuration evaluates 4096 candidates per step and requires 27--31 minutes per step on 128 cores, corresponding to 13.5--14.5 hours per run. Retraining DistrictNet takes 32.5 minutes on 2 cores. At inference time, both methods use the authors' iterated local search with the same time limits. Runtime is approximately 21.4 minutes per $N=120$ instance (20.9--23.3 minutes) and 64.2 minutes on Ile-de-France. Evaluating the returned districtings over the demand scenarios requires an additional 10.7 minutes per $N=120$ instance on average, and 7--14 minutes per method on Ile-de-France.

\subsubsection{Extended Results}
Table~\ref{tab:app-dn-seeds} reports the tuned PolyStepOR configuration for each training seed together with DistrictNet retrained at the same seed. Table~\ref{tab:app-dn-configs} aggregates the tuned and untuned configurations over their respective seeds. On the 35 benchmark problems, the tuned configuration and DistrictNet have similar costs, with no significant difference under the paired Wilcoxon test ($p=0.14$). The untuned configuration performs worse, with DistrictNet reducing cost by $2.0\%$ on average ($p=2\times10^{-10}$). On Ile-de-France, the tuned PolyStepOR runs achieve a mean cost of $2197.0$, with the best run reaching $2156.3$, compared with the published DistrictNet cost of $2205.7$. Figure~\ref{fig:app-dn-maps} visualizes the districtings produced by both methods on the examples considered by \citet{ahmed2024districtnet}. :chatgpt-content-reference{index="2"}

\begin{table}[ht]
\centering
\setlength{\tabcolsep}{4pt}
\caption{Districting results per training seed for tuned PolyStepOR and DistrictNet retrained with the authors' code at the same seed. Results are reported over the 35 benchmark problems together with the PolyStepOR cost on Ile-de-France. Relative cost is measured for DistrictNet with respect to PolyStepOR, so negative values indicate lower DistrictNet cost. The $p$-value is from a paired two-sided Wilcoxon test over the 35 problems.}
\label{tab:app-dn-seeds}
\vspace{4pt}
\resizebox{0.8\linewidth}{!}{%
\begin{tabular}{rrrrrrr}
\toprule
& \multicolumn{5}{c}{35 problems} & Ile-de-France \\
\cmidrule(lr){2-6}\cmidrule(lr){7-7}
Seed & PolyStepOR & DistrictNet & Rel.\ (\%) & $p$ & PolyStepOR cheaper & PolyStepOR \\
\midrule
3 & 551.7 & 551.4 & -0.71 & 0.53 & 15/35 & 2156.3 \\
5 & 552.2 & 562.0 & +0.79 & $6\times10^{-3}$ & 22/35 & 2268.3 \\
7 & 555.6 & 557.8 & -0.16 & 0.64 & 19/35 & 2190.7 \\
11 & 554.5 & 556.0 & -0.97 & 0.40 & 13/35 & 2219.5 \\
13 & 554.4 & 557.2 & +0.54 & 0.08 & 25/35 & 2183.0 \\
17 & 551.5 & 556.1 & +0.50 & 0.34 & 20/35 & 2200.6 \\
19 & 554.4 & 557.4 & -0.89 & 0.70 & 18/35 & 2191.6 \\
42 & 557.3 & 561.0 & -0.64 & 0.39 & 16/35 & 2203.9 \\
71 & 552.4 & 558.6 & +0.90 & 0.14 & 20/35 & 2162.7 \\
1234 & 556.3 & 559.5 & +0.55 & 0.20 & 21/35 & 2193.7 \\
\midrule
Mean over seeds & 554.0 & 557.7 & $-0.05$ & & & 2197.0 \\
\bottomrule
\end{tabular}}
\end{table}

\begin{table}[ht]
\centering
\caption{Performance of tuned and untuned PolyStepOR configurations. Results on the 35 benchmark problems are compared with DistrictNet retrained at the same seeds. Ile-de-France reports the mean and best PolyStepOR cost across seeds.}
\label{tab:app-dn-configs}
\vspace{4pt}
\resizebox{0.95\textwidth}{!}{%
\begin{tabular}{lccccccc}
\toprule
& & \multicolumn{4}{c}{35 problems} & \multicolumn{2}{c}{Ile-de-France} \\
\cmidrule(lr){3-6}\cmidrule(lr){7-8}
Configuration & Seeds & PolyStepOR & Rel.\ DistrictNet (\%) & $p$ & PolyStepOR cheaper & Mean & Best \\
\midrule
Tuned  & 10 & 554.0 & -0.05 & 0.14 & 18/35 & 2197.0 & 2156.3 \\
Untuned & 10 & 565.3 & -1.97 & $2\times10^{-10}$ & 1/35 & 2220.5 & 2186.6 \\
\bottomrule
\end{tabular}}
\end{table}

\begin{figure}[p]
\centering
\includegraphics[height=0.8\textheight,keepaspectratio]{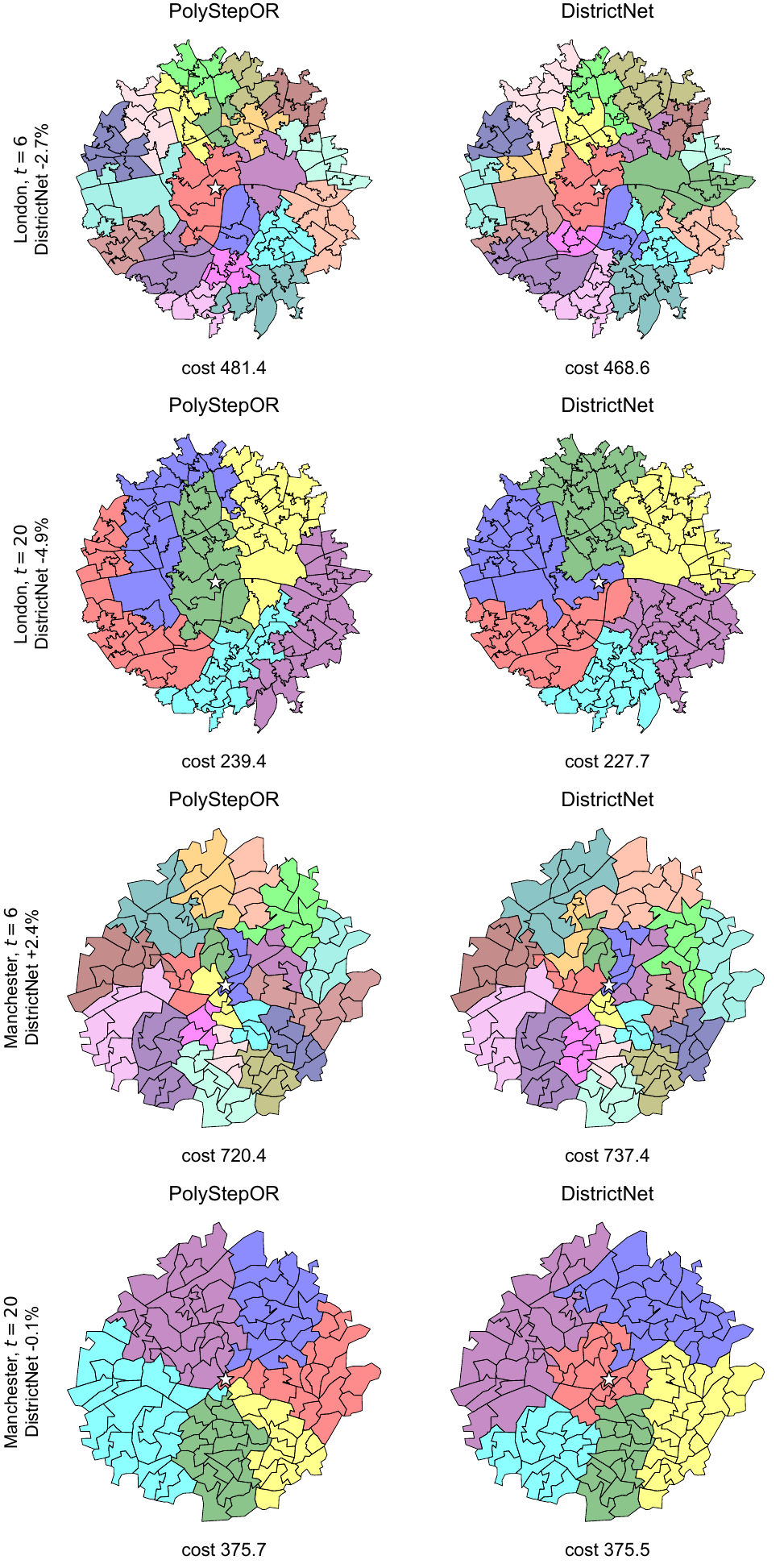}
\caption{Districtings produced by PolyStepOR and DistrictNet at seed 1234 on the examples visualized by \citet{ahmed2024districtnet}: London and Manchester at $t=6$ and $t=20$. The cost is shown below each map, and the row label reports the relative cost of DistrictNet with respect to PolyStepOR.}
\label{fig:app-dn-maps}
\end{figure}

\subsection{Extended Details on Brass Alloy Allocation}\label{app:alloy}
\subsubsection{Problem Setups}

We consider the brass alloy production problem of \citet{mandi2026feasibility}, adapted from \citet{hu2023two}. A factory purchases ore from $N=10$ suppliers to meet copper (Cu) and zinc (Zn) requirements at minimum cost. Supplier $n$ charges $q_n$ per unit of ore, which contains $a_{nm}$ units of metal $m\in\{\mathrm{Cu},\mathrm{Zn}\}$. The metal contents are unknown at decision time and must be predicted. Given these contents, the purchase problem is modeled as:
\begin{equation*}
\min_{\bm{x}}\ \sum_{n=1}^{N}q_nx_n
\quad\text{s.t.}\quad
\sum_{n=1}^{N}a_{nm}x_n\geq b_m
\quad\forall m\in\{\mathrm{Cu},\mathrm{Zn}\},
\qquad
\bm{x}\geq\bm{0},
\end{equation*}
where $x_n$ is the quantity purchased from supplier $n$. The requirements are $b_{\mathrm{Cu}}=627.54$ and $b_{\mathrm{Zn}}=369.72$. At decision time, predicted contents replace $a_{nm}$ in the constraints. The dataset contains 500 instances, split into 350 training, 50 validation, and 100 test instances. Each metal content is predicted from 4,096 features using a fully connected network with one hidden layer of 512 units. Although \citet{mandi2026feasibility} describe a linear program, their implementation restricts the purchase quantities to integers. We follow that implementation and add $\bm{x}\in\mathbb{Z}_{\geq0}^{N}$ to the formulation above.

\paragraph{Metrics.}
Let $\mathcal{I}$ denote the test instances whose predicted purchases satisfy the requirements under the true metal contents, and let $r_i$ be the optimal purchase cost for instance $i$. Following Eq.~23 of \citet{mandi2026feasibility}, we report
\begin{equation}\label{eq:app-alloy-metrics}
\operatorname{Infeas}
=
1-\frac{|\mathcal{I}|}{N_{\rm test}},
\qquad
\operatorname{CRegret}
=
\frac{1}{|\mathcal{I}|}
\sum_{i\in\mathcal{I}}
\frac{\bm{q}_i^\top\hat{\bm{x}}_i-r_i}{r_i}.
\end{equation}
Infeasibility is measured before correction. Conditional regret is computed only on each method's feasible purchases and is undefined when $\mathcal{I}$ is empty. Consequently, a low conditional regret can reflect performance on a small or easier subset of instances rather than good performance across the full test set.

\subsubsection{Experimental Setups}

\paragraph{PolyStepOR.}
We train the authors' network from random initialization using their training loop. Feasible purchases are evaluated by their purchase cost. For an infeasible purchase, the authors' correction scales the quantities until both metal requirements are satisfied. Let $\bm{a}_{i,m}\in\mathbb{R}^{N}$ collect the true contents of metal $m$ across suppliers in instance $i$. The correction factor is
\begin{equation*}
\tau_i
=
\max\left\{
1,\,
\max_{m\in\{\mathrm{Cu},\mathrm{Zn}\}}
\frac{b_m}{\bm{a}_{i,m}^{\top}\hat{\bm{x}}_i}
\right\},
\end{equation*}
provided the denominators are positive. As in the authors' evaluator, the corrected purchase $\tau_i\hat{\bm{x}}_i$ is not rounded, so it can be fractional. The denominators are positive for every instance, since every true content in the data is at least 15 and a returned purchase is nonzero because the requirements are positive. Writing $\bm{a}_i$ for the collection of true metal contents, the training loss is:
\begin{equation}\label{eq:app-alloy-loss}
\ell(\hat{\bm{x}}_i,\bm{a}_i)
=
\bm{q}_i^\top(\tau_i\hat{\bm{x}}_i)
+
\mu_i(\tau_i-1)\bm{q}_i^\top\hat{\bm{x}}_i,
\end{equation}
where $\mu_i$ is the instance-specific penalty coefficient supplied with the data. The first term is the corrected purchase cost, and the second penalizes the additional purchase. For an initially feasible decision, $\tau_i=1$ and the loss reduces to its original cost. If the solver returns no purchase, the assigned loss is $100\,\mu_i\|\bm{q}_i\|_1$. These evaluations require no optimal reference purchase. Unlike the other experiments, the configuration is chosen by validation cost among the points of a 32-point scrambled Sobol design trained on separate tuning seeds, and then retrained on the evaluation seeds (Table~\ref{tab:repro}). The search subspace is rebuilt at each step from previous displacements and random directions, rather than using the fixed basis of Section~\ref{sec:polystep}, and the number of probe radii drops from 2 to 1 after three consecutive steps with decreasing cost. These practical variants differ from the fixed-setting algorithm considered in the stationarity analysis. Training follows the authors' setup with shuffling between epochs, and we evaluate the final-epoch model, solving every candidate with the authors' Gurobi model in a single solver process.

\paragraph{Baselines.}
For infeasibility and conditional regret, we use the results reported by \citet{mandi2026feasibility} for MSE, CombOptNet, SFL~\citep{nandwani2022solverfree}, 2sPtO~\citep{hu2023two}, and ODECE with $\alpha\in\{0.2,\ldots,0.8\}$. We additionally run SFGE using the authors' estimator and published post-hoc regret objective. Three of its five seeds produce no feasible test purchase, so conditional regret is averaged over the remaining two seeds. For the full-test post-hoc regret comparison, we separately rerun MSE and ODECE using the authors' code and the same five evaluation seeds as PolyStepOR. These reruns are distinct from the published baseline values in Table~\ref{tab:re-alloy}.
\paragraph{Feasibility.} Following \citet{mandi2026feasibility}, the purchase $\hat{\bm{x}}_i$ predicted for test instance $i$ is feasible when it meets both requirements under the true contents,
\begin{equation}\label{eq:app-alloy-feasible}
\hat{\bm{x}}_i\ \text{is feasible}\iff\bm{a}_{i,m}^{\top}\hat{\bm{x}}_i\geq b_m\quad\forall m\in\{\mathrm{Cu},\mathrm{Zn}\}.
\end{equation}
The set $\mathcal{I}$ in Eq.~\eqref{eq:app-alloy-metrics} collects the test instances that satisfy Eq.~\eqref{eq:app-alloy-feasible}, and the check is made on the purchase before any correction.

\paragraph{Computational cost.} A PolyStepOR training run takes 2.9 hours on average (2.8 to 3.3 over the five seeds), using one solver process. The solver calls take about $27\%$ of the time. The rest is dominated by the forward passes of the network over all probe candidates, which run on the CPU alongside the solver. Evaluating the candidates as one batch on a GPU, as the PolyStep library supports, is much cheaper. In our GPU-batched runs of the same network, the forward passes took about $11\%$ of the training time with 24 times more candidates per step.

\subsubsection{Extended Results}
Table~\ref{tab:re-alloy} reports the values shown in Figure~\ref{fig:alloy-tradeoff}. The PolyStepOR row corresponds to the moving mode. Its first-stage infeasibility is $0.555\pm0.013$, compared with $0.532$ for MSE, while its conditional regret is $0.154\pm0.005$, compared with $0.169$. Thus, PolyStepOR has slightly lower reported conditional regret but a higher infeasibility rate. The standard deviations describe variability across its five evaluation seeds. ODECE varies the emphasis on feasibility through $\alpha$. At $\alpha=0.4$, its infeasibility rate of $0.571$ is close to PolyStepOR's, with conditional regret of $0.195$. Increasing $\alpha$ reduces infeasibility but raises conditional regret, reaching $0.174$ and $0.486$, respectively, at $\alpha=0.8$. PolyStepOR uses the dataset's prescribed recourse penalties rather than an analogous sweep over $\alpha$, so its reported configuration provides one operating point.

SFL and CombOptNet are infeasible on $99.1\%$ and $89.6\%$ of test instances, respectively, leaving small subsets on which to compute conditional regret. The infeasibility rate of 2sPtO is lower at $40.0\%$, but its conditional regret is $8.539$. SFGE has an infeasibility rate of $73.9\%$ and conditional regret of $0.268$, with the latter defined for only two of its five evaluation seeds. Because conditional regret uses different feasible subsets, these values alone do not establish an overall ranking of decision quality. We therefore also evaluate post-hoc regret across the full test set, including correction and penalty costs. PolyStepOR achieves $0.231\pm0.005$, compared with $0.243\pm0.007$ for our MSE reruns. ODECE ranges from $0.230$ at $\alpha=0.3$ to $0.543$ at $\alpha=0.8$. This comparison evaluates all methods on the same instances and complements the separate assessment of feasibility before correction.

\begin{table}[ht]
\centering
\caption{Brass alloy production, with infeasibility before repair and regret over each method's feasible test purchases; lower is better. The baselines except SFGE are the values reported by \citet{mandi2026feasibility}, and PolyStepOR is mean $\pm$ standard deviation over five seeds.}
\label{tab:re-alloy}
\begin{tabular}{lcc}
\toprule
Method & Infeasibility & Conditional regret \\
\midrule
MSE & 0.532 & 0.169 \\
CombOptNet & 0.896 & 0.181 \\
SFL & 0.991 & 0.019 \\
2sPtO & 0.400 & 8.539 \\
ODECE ($\alpha=0.2$) & 0.790 & 0.081 \\
ODECE ($\alpha=0.3$) & 0.729 & 0.123 \\
ODECE ($\alpha=0.4$) & 0.571 & 0.195 \\
ODECE ($\alpha=0.5$) & 0.471 & 0.270 \\
ODECE ($\alpha=0.6$) & 0.324 & 0.365 \\
ODECE ($\alpha=0.7$) & 0.276 & 0.406 \\
ODECE ($\alpha=0.8$) & 0.174 & 0.486 \\
SFGE & 0.739 & 0.268 \\
\midrule
PolyStepOR & 0.555 $\pm$ 0.013 & 0.154 $\pm$ 0.005 \\
\bottomrule
\end{tabular}
\end{table}

\subsection{Reproducibility Configurations}\label{app:reproducability}
We provide the hyperparameter configurations, as well as the seeds used in each experiment to better guide the reproducibility of our algorithm, which are presented in Table~\ref{tab:repro}.
\begin{table}[!ht]
\centering
\caption{PolyStepOR settings behind every reported number.}
\label{tab:repro}
\resizebox{0.95\textwidth}{!}{%
\begin{tabular}{lccccccccccccc}
\toprule
\multirow{2}{*}{Problem} & \multirow{2}{*}{Schedule} & \multicolumn{2}{c}{$\varepsilon$} & \multicolumn{2}{c}{$s$} & \multicolumn{2}{c}{$r$} & \multirow{2}{*}{$d$} & \multirow{2}{*}{$q$} & \multirow{2}{*}{$K$} & \multirow{2}{*}{Momentum} & \multirow{2}{*}{$B$} & \multirow{2}{*}{Budget (steps)} \\
\cmidrule(lr){3-4}\cmidrule(lr){5-6}\cmidrule(lr){7-8}
 & & start & end & start & end & start & end & & & & & & \\
\midrule
\multicolumn{14}{c}{\textit{Classical OR}} \\
\midrule
Shortest path & cosine & 10 & 0.1 & 5 & 1 & 10 & 2 & 240 & 8 & 1 & 0 & 128 & 150 \\
Portfolio & cosine & 5 & 0.3 & 32 & 8 & 2 & 0.5 & 300 & 8 & 1 & 0 & 128 & 100 \\
Knapsack & cosine & 10 & 0.1 & 5 & 1 & 10 & 2 & 9 & 8 & 1 & 0 & 128 & 100 \\
Energy scheduling & cosine & 10 & 0.1 & 5 & 1 & 10 & 2 & 9 & 8 & 1 & 0 & 128 & 100 \\
Matching & cosine & 10 & 0.1 & 5 & 1 & 10 & 2 & 512 & 8 & 1 & 0 & 17 & 100 \\
\midrule
\multicolumn{14}{c}{\textit{In-constraint Prediction}} \\
\midrule
KP capacity & constant & 0.5 & 0.5 & 1.0 & 1.0 & 0.5 & 0.5 & 6 & 8 & 1 & 0 & 32 & 11{,}500 \\
KP weight & constant & 0.5 & 0.5 & 1.0 & 1.0 & 0.5 & 0.5 & 270 & 8 & 1 & 0 & 32 & 11{,}500 \\
WSMC $5\times25$ & constant & 0.5 & 0.5 & 1.0 & 1.0 & 0.5 & 0.5 & 30 & 8 & 1 & 0 & 32 & 11{,}500 \\
WSMC $10\times50$ & constant & 0.5 & 0.5 & 1.0 & 1.0 & 0.5 & 0.5 & 60 & 8 & 1 & 0 & 32 & 11{,}500 \\
\midrule
\multicolumn{14}{c}{\textit{Districting}} \\
\midrule
Tuned & linear & 0.174 & 0.174 & 86.4 & 19.8 & 18.3 & 18.3 & 512 & 512 & 4 & 0.9 & 100 & 30 \\
Untuned & cosine & 10 & 0.1 & 5 & 1 & 10 & 2 & 512 & 512 & 1 & 0 & 100 & 30 \\
\midrule
\multicolumn{14}{c}{\textit{Brass Alloy Production}} \\
\midrule
Brass alloy & linear & 0.707 & 0.707 & 2.67 & 1.26 & 461 & 461 & 128 & 8 & 2 & 0 & 32 & 220 \\
\bottomrule
\end{tabular}}
\end{table}
The following seeds are used for the reported experiments:
\begin{itemize}
\item Classical OR: Seeds $\in [0, 9]$
\item In-constraint Predictions: data-set seeds 0 to 4 times split seeds 0 to 2 (15 runs)
\item Districting: Seeds $\in 3, 5, 7, 11, 13, 17, 19, 42, 71, and 1234$.
\item Brass Alloy Production: Seeds $\in [11,15]$
\end{itemize}

\end{document}